\documentclass[10pt]{article}
\usepackage[nohead,margin=1.0in]{geometry}
\usepackage[dvipsnames]{xcolor}
\usepackage{amsmath,amssymb,amsthm,mathtools}
\usepackage{enumitem}
\usepackage[backend=biber,
  style=numeric,
  sorting=none,          
  maxnames=3,
  giveninits=true,       
  citestyle=numeric-comp
  ]{biblatex}
\usepackage{nicefrac}
\usepackage{hyperref}

\usepackage{bbm}
\usepackage[dvipsnames]{xcolor}
\usepackage{booktabs}
\usepackage{graphicx,caption,subcaption}
\usepackage{url}
\usepackage[normalem]{ulem}
\usepackage{tikz}
\usetikzlibrary{decorations.pathreplacing}
\usepackage{pgfplots}
\usetikzlibrary{pgfplots.groupplots}
\usepackage{multicol, multirow}
\usepackage{array} 
\usepackage{booktabs}
\usepackage{transparent}

\pgfplotsset{compat=1.18}

\theoremstyle{plain}
\newtheorem{theorem}{Theorem}
\newtheorem{lemma}[theorem]{Lemma} 

\newtheorem{corollary}[theorem]{Corollary}

\newtheorem{remark}[theorem]{Remark}

\newcommand{\N}{\mathbb{N}}
\newcommand{\R}{\mathbb{R}}

\newcommand{\E}{\mathbb{E}}
\newcommand{\Id}{\mathrm{Id}}
\newcommand{\supp}{\mathrm{supp}}
\newcommand{\Lip}{\mathrm{Lip}}
\renewcommand{\d}{\mathrm{d}}

\newcommand{\ptarget}{p_H}
\newcommand{\gaussian}{p_Z}
\newcommand{\DKL}[2]{D_{\mathrm{KL}}\!\left(#1 \,\middle\|\, #2\right)}
\newcommand{\Cov}{\mathrm{Cov}}
\newcommand{\Var}{\mathrm{Var}}

\begin{document}

\title{Expressivity of Bi-Lipschitz Normalizing Flows:\\ A Score-Based Diffusion Perspective}

\author{
Meira Iske\thanks{\parbox[t]{\linewidth}{Center for Industrial Mathematics, University of Bremen, Bremen, Germany,\\
Email: \texttt{iskem@uni-bremen.de}}}
\and
Carola-Bibiane Sch{\"o}nlieb\thanks{\parbox[t]{\linewidth}{Department of Theoretical Physics and Applied Mathematics, University of Cambridge, Cambridge, U.K.,\\
Email: \texttt{cbs31@cam.ac.uk}}}
}

\date{}

\maketitle

\begin{abstract}
Many normalizing flow architectures impose regularity constraints, yet their distributional approximation properties are not fully characterized. We study the expressivity of bi-Lipschitz normalizing flows through the lens of score-based diffusion models. For the probability flow ODE of a variance-preserving diffusion, Lipschitz regularity of the score induces a flow of bi-Lipschitz diffeomorphic transport maps. This ODE bridge allows us to analyze the distributional approximation power of bi-Lipschitz normalizing flows and, conversely, derive deterministic convergence guarantees for diffusion-based transport. 
Our key idea is to use the probability flow ODE to link regularity of the score to regularity of the induced transport maps.
We verify score regularity for broad target densities, including compactly supported densities, Gaussian convolutions of compactly supported measures and finite Gaussian mixtures. 
We obtain a universal distributional approximation result: Gaussian pullbacks induced by bi-Lipschitz variance-preserving transport maps are $L^1$-dense among all probability densities. For Gaussian convolution targets, we further obtain convergence in Kullback–Leibler divergence without early stopping. 
\end{abstract}

\section{Introduction}
Generative modeling aims to learn probability distributions from data in a way that enables efficient sampling and likelihood evaluation. Over the past decade, such models have become central to deep learning, with applications ranging from image and audio generation~\cite{Kingma2018, Roberts2018, Kumar2019} to inverse problems and noise modeling~\cite{Chung2025, Abdelhamed2019, Iske2025}. Prominent approaches include variational autoencoders~\cite{Kingma2014}, generative adversarial networks~\cite{Goodfellow2014}, normalizing flows~\cite{Rezende2015}, and diffusion-based models~\cite{Song2021, Sohldickstein2015}. Among these, normalizing flows and diffusion models are particularly attractive due to their transparent probabilistic formulation and access to data likelihoods~\cite{Papamakarios2021, Song2021}.

Normalizing flows represent a target distribution as the pushforward of a simple latent distribution under an invertible transport map~\cite{Dinh2015}. In practice, these transport maps are parameterized by invertible neural networks and are often equipped with regularity constraints. Bi-Lipschitz architectures, such as invertible residual networks~(iResNets) based on contractive residual blocks or additive coupling architectures, provide stability by controlling both the forward map and its inverse. They therefore prevent arbitrary amplification of perturbations between data and latent space. Consequently, these architectures yield bounded Jacobian determinants and hence controlled local volume distortion. Similar properties arise in less restrictive settings, for instance in coupling-based flows with bounded scale components, where the Jacobian determinant is bounded without necessarily imposing global bi-Lipschitz regularity. While such constraints ensure stability and invertibility~\cite{Behrmann2019, Arndt2023}, they may limit expressivity, leaving the trade-off between stability and approximation power only partially understood.

\begin{figure}[t]
\centering
\def\svgwidth{0.85\columnwidth}
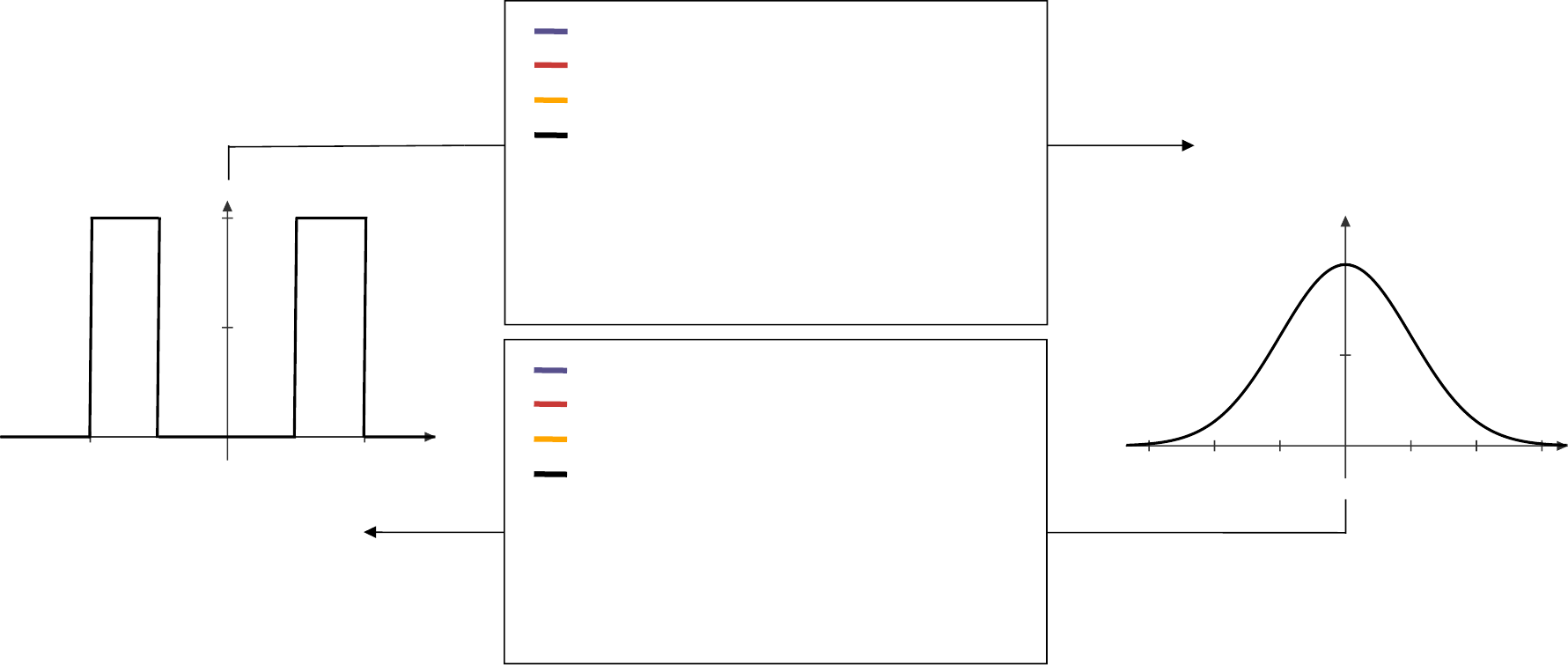
\caption{
    Example of a compactly supported pdf $\ptarget$ and monotone transport $f$ to the standard Gaussian $\gaussian$, satisfying $\gaussian = f_\# \ptarget$. The exact transport is not a global diffeomorphism on $\R$, and neither $f$ nor $f^{-1}$ is globally Lipschitz. Its discontinuouities can therefore only be approximated by bi-Lipschitz maps. A bi-Lipschitz map cannot exactly transform the connected full support of the Gaussian into the disconnected compact support of $\ptarget$. The coloured transport maps correspond to trained iResNets at different Lipschitz constraints, depending on $L$. Larger $L$ corresponds to a larger feasible Lipschitz bound of the network. Implementation details can be found in Section~\ref{sec:num_examples}.}
    \label{fig:illustration}
\end{figure}

The existence of transport maps between probability measures is classical. The Knothe-Rosenblatt rearrangement yields a monotone transport~\cite{Rosenblatt1952, Knothe1957}, while Brenier’s theorem characterizes optimal transport for quadratic cost~\cite{Brenier1991}. Under strong structural assumptions such as log-concavity or bounded perturbations of Gaussian measures, the optimal transport map is Lipschitz~\cite{Caffarelli2000}. Relatedly, the Dacorogna-Moser theorem provides an existence theory for diffeomorphisms satisfying volume form matching equations on bounded domains under boundary conditions~\cite{DacorognaMoser1990}. Outside these regimes, however, global Lipschitz regularity may be violated, and counterexamples show that bi-Lipschitz transports do not exist for general target-latent pairs~\cite{Cornish2020}. Correspondingly, recent work has identified expressivity limitations of normalizing flows under uniformly bounded Lipschitz constraints~\cite{Kong2020, Verine2023}. A simple example illustrating this phenomenon is shown in Figure~\ref{fig:illustration}, where a target distribution with probability density function~(pdf) $\ptarget$ admits a monotone transport to the standard Gaussian that is not globally Lipschitz. In particular, no exact bi-Lipschitz transport exists for this target-latent pair.

Nevertheless, such transports can often be approximated arbitrarily well by maps with sufficiently large Lipschitz constants. However, it is less clear whether such map-level approximation transfers to approximation of the induced densities in strong metrics. This leads to the question whether the limitations concerning distribution approximation are intrinsic to bi-Lipschitz regularity or arise only from imposing a \emph{uniform} bound on the Lipschitz constants. Answering this distinguishes fundamental limitations from quantitative effects and clarifies whether stability and invertibility remain compatible with strong distributional approximation. Our results show that the known limitations are not intrinsic to bi-Lipschitz regularity itself, but arise from imposing uniform Lipschitz bounds. We prove full expressivity with respect to the $L^1$-norm for specific candidates of bi-Lipschitz transport maps obtained from diffusion flows with non-uniform Lipschitz bounds.

Diffusion models have recently emerged as a powerful and empirically successful generative modeling approach~\cite{Sohldickstein2015, Ho2020}. In score-based diffusion models~(SDMs), initial target data is gradually perturbed by a stochastic process and samples are generated via the corresponding reverse-time SDE using an estimated score function~\cite{Song2021}. Importantly, diffusion models admit an associated probability flow ODE, which induces a deterministic family of transport maps between intermediate distributions of this process at selected times. Fixing a finite time horizon yields an end-to-end transport map that can be viewed as a normalizing flow. This observation naturally connects diffusion models to continuous normalizing flows~\cite{Chen2018}. Since the diffusion provides a controlled smoothing path, the time-dependent gradient field in the probability flow ODE can be analyzed and used to transfer score regularity to stability of the induced transport maps. 
This renders the regularity of the transport map amenable to analysis through the underlying time-dependent vector field. Recent theoretical work shows that such score regularity is essential for stability and convergence guarantees under various assumptions on the target distribution~\cite{Chen2023, Chen2023b, Lee2023, Conforti2025}.

\textbf{Contribution:} In this work, we leverage the probability flow ODE of score-based diffusion models as a tractable transport family to study the distributional expressivity of bi-Lipschitz normalizing flows. 
We focus on the regularity regime of $C^1$ bi-Lipschitz transports, motivated by specific normalizing flow architectures, such as iResNets. Since bi-Lipschitz $C^1$-transport maps form a subclass of transports with bounded Jacobian determinant, our existence-based approximation results immediately also apply to this larger class. Considering variance-preserving~(VP) diffusions, we show that for broad classes of target distributions, the induced transport maps are bi-Lipschitz, with Lipschitz constants depending on the time horizon. Allowing for non-uniform time-dependent Lipschitz bounds yields full expressivity: for every target density in $L^1$, there exists a bi-Lipschitz $C^1$ transport map induced by a VP diffusion process, whose Gaussian pullback approximates the target arbitrarily well in $L^1$. Our results should therefore be understood as existence-based results on distributional expressivity, formulated for VP probability-flow transports rather than for a fixed network architecture. They show that bi-Lipschitz regularity itself is not an obstruction to $L^1$-approximation, provided that the Lipschitz constants of the approximating transports are allowed to depend on the approximation accuracy. 
We further establish KL convergence results, including convergence without early stopping for a dense subset of $L^1$ densities. Our analysis reveals a direct link between score regularity along diffusion processes and distributional universality of bi-Lipschitz normalizing flows. We identify several practically relevant target classes, including compactly supported densities, log-concave densities, and finite Gaussian mixtures, for which the required score regularity holds, and show that, under certain assumptions, these expressivity guarantees are robust to score approximation.
In contrast to classical existence results of transports between target-latent pairs such as Dacorogna–Moser, our framework yields an explicit characterization of such transport maps together with distributional approximation guarantees. For certain target classes, this additionally reveals stronger convergence metrics than $L^1$. A detailed summary of our contributions is given in Section~\ref{subsec:contributions}.

\textbf{Organization:} This work is structured as follows. Section~\ref{sec:motivation} introduces the normalizing flow and diffusion frameworks. Section~\ref{subsec:related_literature} reviews related literature, and Section~\ref{subsec:contributions} summarizes our main contributions. Section~\ref{sec:expressivity} presents the core expressivity results and identifies target distribution classes satisfying the required regularity assumptions. Section~\ref{sec:connection_learned} extends the analysis to learned score functions, and Section~\ref{sec:num_examples} illustrates the theoretical findings with numerical experiments.

\section{Motivation, Setting and Contributions}~\label{sec:motivation}

Normalizing flows model a target density $\ptarget$ as the pushforward of a simple latent distribution $\gaussian$ under a diffeomorphism $\varphi:\R^n\to\R^n$. Throughout, we fix $\gaussian=\mathcal N(0,I_n)$ with density ${\gaussian:\R^n\to[0,\infty)}$. For $z\sim\gaussian$, the model output $x = \varphi^{-1}(z)$ has density ${\ptarget: \R^n \rightarrow [0,\infty)}$, which is given by the change-of-variables formula
\begin{equation}\label{eq:CoV}
    \ptarget(x)
    = \gaussian(\varphi(x))\,\bigl|\det J_\varphi(x)\bigr|,
    \quad x\in\R^n,
\end{equation}
where $J_\varphi(x)$ denotes the Jacobian of $\varphi$. Equivalently,
\begin{equation*}
    \ptarget = (\varphi^{-1})_\# \gaussian \quad\text{and}\quad
    \gaussian = \varphi_\# \ptarget,
\end{equation*}
where we identify probability measures with their densities and $g_\#\mu$ denotes the pushforward of $\mu$ under $g$. When we restrict $\varphi$ to the class of bi-Lipschitz diffeomorphisms, there exist constants ${0 < c \leq C < \infty}$ such that
\begin{equation*}
    c \|x-y\| \leq \| \varphi(x) -\varphi(y) \| \leq C \|x-y\| \quad \text{for all } x,y \in \R^n,
\end{equation*}
which implies
\begin{equation*}
    c^n \leq |\det J_\varphi(x)|\leq C^n \quad \text{for all } x \in \R^n.
\end{equation*} 
Such bi-Lipschitz constraints provide stability and invertibility guarantees, but are often thought to come at the price of reduced expressivity. To study their expressivity, we adopt a viewpoint that encompasses both score-based diffusion models~\cite{Song2021} and end-to-end normalizing flows as in~\eqref{eq:CoV}. Unlike end-to-end flows, score-based diffusion models define a time-indexed family of marginals $(p_t)_{t\in[0,T]}$ via a forward diffusion and admit an associated probability flow ODE. This constitutes a continuous normalizing flow whose deterministic solution maps transport $p_s$ to $p_t$ for $0\leq s\leq t\leq T$.

\subsection{Score-Based Diffusion Process}

Let $(X_t)_{t\in[0,T]}$ be governed by the It{\^o} SDE
\begin{equation}\label{eq:SDE}
    \d X_t = f(X_t,t)\,\d t + g(t) \, \d W_t, \quad X_0 \sim \ptarget,
\end{equation}
where $(W_t)_{t\in[0,T]}$ is a standard $n$-dimensional Brownian motion, $f:\R^n \times [0,T] \longrightarrow \R^n$ denotes the \emph{drift coefficient} and $g: [0,T] \longrightarrow (0, \infty)$ the \emph{diffusion coefficient}. We denote by $p_t$ the marginal density of $X_t$ at time $t$ and define its \emph{score function} by 
\begin{equation*}
    s_t(x) = \nabla_x \log p_t(x).
\end{equation*}

\textbf{Reverse-time SDE.} The time-reversed process $(\bar X_t)_ {t\in[0,T]}$ satisfies the SDE
\begin{equation}\label{eq:reverse_SDE}
    \d\bar X_t = \left[f(\bar X_t,t) - g(t)^2s_t(\bar X_t)\right]\,\d t + g(t) \, \d \bar W_t,
\end{equation}
where $(\bar W_t)_{t\in[0,T]}$ is again a standard $n$-dimensional Brownian motion. This SDE transports $p_T \approx \gaussian$ back toward $\ptarget$ when initialized at $t= T$ for $T$ reasonably large~\cite{Song2021}.

\textbf{Probability flow ODE.} The deterministic flow sharing the same marginals $(p_t)_{t \in [0,T]}$ as~\eqref{eq:reverse_SDE} is given by the \emph{probability flow ODE}
\begin{equation}\label{eq:prob_ODE}
    \frac{\d x}{\d t} = f(x,t) - \frac{1}{2}g(t)^2 s_t(x).
\end{equation}
Let $\varphi_{s \to t}:\R^n \to \R^n$ denote the solution transport map of~\eqref{eq:prob_ODE}, mapping an initial condition at time $s$ to its value at time $t$. Then, for any random variable $X_s \sim p_s$, we have  $p_t = \varphi_{s \to t}{}_\# p_s$. In particular, $\varphi_{0 \to T}$ is a transport map from $\ptarget = p_0$ to $p_T \approx \gaussian$. Thus, $(\varphi_{s \to t})_{0 \leq s \leq t \leq T}$ forms a flow of transports. We will show that, under suitable conditions, $\varphi_{0 \to T}$ is in fact a bi-Lipschitz diffeomorphism and we will consider this probability flow transport map to analyze the expressivity of both bi-Lipschitz normalizing flows and one specific score-based diffusion model.

\subsection{The Variance-Preserving~(VP) Process}

In the following, we specialize the general score-based diffusion framework to the variance-preserving~(VP) diffusion process, in which the signal is progressively damped and isotropic noise is added in a balanced way that keeps the overall variance unchanged. The corresponding forward SDE in its general VP form is
\begin{equation}\label{eq:VP-SDE}
    \d X_t = -\frac{1}{2}\beta(t) X_t\, \d t + \sqrt{\beta(t)} \,\d W_t, \quad X_0 \sim \ptarget,
\end{equation}
for $\beta: [0,T] \longrightarrow (0,\infty)$, which corresponds to the choice $f(X_t,t) = -\frac{1}{2}\beta(t) X_t$ and $g(t) = \sqrt{\beta(t)}$ in~\eqref{eq:SDE}. 

In this work, we restrict to the VP process with a constant diffusion rate $\beta(t)\equiv1$, so that~\eqref{eq:VP-SDE} simplifies to
\begin{equation}\label{eq:VP-SDE_fixedbeta}
    \d X_t = -\frac{1}{2}X_t\, \d t + \d W_t, \quad X_0 \sim \ptarget.
\end{equation}

Define  
\begin{equation}\label{eq:a_and_sigma}
    a(t) = e^{-t/2} \quad \text{and} \quad \sigma^2(t) = 1-a^2(t) = 1-e^{-t}.
\end{equation}

Then $X_t = a(t)X_0+ \sigma(t)Z$ with $Z\sim \gaussian$ so that $X_t | X_0 \sim \mathcal{N}(a(t)X_0, \sigma^2(t) I_n)$. The %(Mehler) 
representation of the marginal density of $X_t \sim p_t$ is
\begin{equation}\label{eq:p_t}
    p_t(x)=\int_{\R^n} \ptarget(y)\,\phi_{\sigma^2(t),0} \big(x-a(t)\,y\big)\,\d y,
\end{equation}
where $\phi_{\sigma^2(t),0} = \mathcal{N}(0,\sigma^2(t) I_n)$ (see e.g.~\citeauthor{BakryGentilLedoux2014}, \citeyear{BakryGentilLedoux2014}). 
As $t \to \infty$ we have $a(t)\rightarrow 0$ and $\sigma^2(t)\rightarrow 1$, so $p_t$ converges to $\gaussian$. The joint distribution of $(Y,X):=(X_0,X_t)$ has density $p_{Y,X}(y,x)=\ptarget(y)\,\phi_{\sigma^2(t),a(t)y}(x)$ and its posterior density of $Y$ given $X=x$ is
\begin{equation}\label{eq:p_t,x}
    p_{t,x}(y):=p_{Y\mid X=x}(y)
    =\frac{\ptarget(y)\phi_{\sigma^2(t),a(t)y}(x)}
           {\int_{\mathbb{R}^n} \ptarget(z)\phi_{\sigma^2(t),a(t)z}(x)\,\d z}.
\end{equation}

For the VP process, the corresponding probability flow ODE becomes
\begin{equation}\label{eq:VP-flow}
    \frac{\d x}{\d t} = v_t(x) \quad \text{with} \quad v_t(x) := -\frac{1}{2}x - \frac{1}{2}s_t(x).
\end{equation}

\textbf{Notation.}
Throughout, let $n$ be the dimension of the data space. Furthermore, $\|\cdot\|$ denotes the Euclidean norm on $\R^n$, and for matrices the corresponding operator norm, while $\|\cdot\|_{L^1}$ denotes the $L^1$ norm on $\R^n$ with respect to the Lebesgue measure. Let $\gaussian$ denote standard Gaussian density and $\ptarget$ the probability density function~(pdf) of the target. We use $\phi_{\sigma^2, \mu}(x) := (2\pi\sigma^2)^{-n/2} \exp\left( -\tfrac{\| x - \mu \|^2}{2\sigma^2}\right)$ to denote the Gaussian density with mean $\mu$ and covariance $\sigma^2 I_n$ and define $C_b^k(\R^n):=\{ f:\R^n \rightarrow \R \, : \, f\in C^k, \, \nabla^j f \in L^\infty, \, j=0,\hdots, k \, \}$. For symmetric matrices $A, B \in \R^{n \times n}$, $A \succeq B$ denotes the L{\"o}wner order, i.e. $x^\top (A-B) x \geq 0$ for all $x \in \R^n$, likewise $A \succ B$ is used, whenever $x^\top (A-B) x > 0$ for $x \neq 0$. 

\subsection{Related Literature}~\label{subsec:related_literature}

This work lies at the intersection of invertible generative models and diffusion-based approaches. We review a related expressivity result and subsequently discuss existing literature on the expressive power of end-to-end normalizing flows and on the convergence theory of score-based diffusion models, with a focus on distributional approximation and regularity of the induced transport maps.

\textbf{Volume Forms.}
In the context of nonlinear PDEs,~\cite{DacorognaMoser1990} prove existence results for diffeomorphisms $\varphi$ that transform one sufficiently smooth, positive density~(volume form) into another on a bounded domain. Accordingly, $\varphi$ is bi-Lipschitz on this domain. These results require compatibility conditions, such as equal total mass, and impose boundary conditions such as $\varphi=\mathrm{id}$ on $\partial\Omega$. This suggests a possible heuristic route to $L^1$-approximation on $\mathbb R^n$: for each approximation accuracy, one may smooth the target density, blend it with a latent density such as the Gaussian so that the tails agree outside a large bounded domain, and then apply the bounded-domain result of~\cite{DacorognaMoser1990} to the resulting approximating densities. If the corresponding transports can be glued to the identity outside these bounded domains, this would yield global bi-Lipschitz transports whose pullbacks approximate the target density. The construction is generally non-uniform in the sense that the Lipschitz constants may depend on the approximation accuracy. However, this route requires additional compatibility arguments. The identity extension need not be globally $C^1$ unless the map already agrees with the identity in a boundary region.

\textbf{End-to-end Normalizing Flows.} 
Substantial theoretical literature studies the approximation capabilities of invertible architectures at the transport map level. Universality results show that coupling-based normalizing flows can approximate diffeomorphisms under suitable architectural assumptions~\cite{Teshima2020, Ishikawa2023}. Related work on neural ODEs and invertible residual networks establishes positive and negative results on which classes of transformations can be represented~\cite{Chen2018, Zhang2020, Ishikawa2023}. More recently, \cite{Jin2024} explicitly study the approximation of bi-Lipschitz maps by coupling-based invertible neural networks. While these results clarify the representational power of invertible architectures, they are primarily concerned with function approximation and do not directly address approximation of probability distributions in strong metrics such as KL/TV convergence. 
Beyond map-level expressivity, several works investigate distributional approximation properties of normalizing flows. Positive results include convergence guarantees in weak probability metrics, most notably the work of~\cite{Kong2021} showing that iResNets are universal distribution approximators with respect to maximum mean discrepancy~(MMD) under suitable depth and architectural conditions. In \cite{Draxler2024}, the expressive power of affine coupling networks under volume-preserving and volume-bounded constraints is analyzed, clarifying how control of volume distortion influences approximation capabilities. The results of~\cite{Draxler2024} provide evidence that volume-controlled flows, including bi-Lipschitz flows, can be expressive in a distributional sense weaker than KL divergence, when subsequently increasing the number of coupling blocks.

A complementary line of work highlights limitations under fixed (bi-)Lipschitz constraints. The work of~\cite{Verine2023} derives explicit lower bounds in KL divergence or TV distance, showing that certain target distributions cannot be approximated arbitrarily well by pushforward models with uniformly bounded Lipschitz constants. \cite{Cornish2020} prove that for some target-base pairs any exact transport must necessarily violate global Lipschitz bounds. Related analyses by~\cite{Kong2020} further provide explicit examples where classical flow families fail to approximate target distributions. Collectively, these works delineate the trade-off between stability and expressivity under uniform regularity constraints, but they do not address whether strong distributional approximation is possible under non-uniform Lipschitz constants.

While some of the above works study expressivity for concrete network architectures, our analysis is carried out at the level of transport maps as functions, hence enabling to transfer the conclusions to common normalizing flow models.

\textbf{Score-based Diffusion Models.}
Score-based diffusion models form a central class of modern generative models, with a likelihood-based formulation grounded in score matching and reverse-time stochastic differential equations~\cite{Song2021}. A key distinction in this literature is between stochastic SDE-based samplers and the associated deterministic probability flow ODE. Most theoretical analyses focus on the SDE formulation and establish convergence guarantees and rates under assumptions on the accuracy of a learned score function, typically measured in an $L^2$ sense and often combined with discretization effects or early stopping~\cite{Chen2023b,Chen2023,Lee2023,Conforti2025,Benton2024}.

Several contributions analyze diffusion models under assumed score regularity, establishing Wasserstein, KL, or TV convergence under global Lipschitz or semi convexity assumptions on the score or the log-density, often together with structural conditions such as log-concavity or sub-Gaussian tails~\cite{Lee2023,Yu2025,Yang2022}. A smaller subset derives convergence results directly for probability flow ODE samplers~\cite{Gao2025,Chen2023a}, while related work analyzes improved schemes such as prediction--correction within the SDE framework~\cite{Pedrotti2024}.
 
\cite{Chen2023} establish convergence rates in KL divergence, TV, and Wasserstein distances by relying on a finite second moment assumption together with either early stopping or by using regularity assumptions on the score. This extends the earlier work of~\cite{Chen2023b}, which additionally covers early-stopped TV convergence and non-early stopped TV convergence under score regularity explicitly in the case of compactly supported densities. Furthermore, \cite{Conforti2025} prove KL convergence under minimal data assumptions, including finite Fisher information, without early stopping. These three works are formulated within the SDE framework and analyze discretized reverse-time dynamics. However, \cite{Conforti2025} focus on stochastic convergence guarantees rather than regularity of the induced probability flow ODE, and \cite{Chen2023} assumes score smoothness in one regime of the analysis. 

More recently,~\cite{Brigati2025,Mooney2024,Stephanovitch2025} derive score regularity from structural properties of the data distribution. Using heat-flow arguments, log-concavity, or related functional inequalities, these works establish Lipschitz or one-sided Lipschitz bounds for the score over finite time horizons and clarify the inherently time-dependent nature of score regularity along the diffusion. While~\cite{Brigati2025,Stephanovitch2025} primarily address well-posedness and stability of the associated flows without explicitly using the score regularity to obtain distribution-level approximation or expressivity results,~\cite{Mooney2024} additionally establishes KL convergence of the generated distributions under near log-concavity or more general smoothness and tail assumptions.

Altogether, these works leave open whether distributional approximation of general targets can be expressed by deterministic, bi-Lipschitz Gaussian pullbacks induced by score-based diffusion processes. Furthermore, it remains unclear under what conditions the potential transport constructions based on~\cite{DacorognaMoser1990} as described above, admit approximation guarantees stronger than $L^1$-density. 

\subsection{Our Contributions}\label{subsec:contributions}

This work establishes a connection between regularity theory for the VP diffusion score and an expressivity theory for bi-Lipschitz normalizing flows: Depending on score regularity, the VP probability flow induces a deterministic family of bi-Lipschitz normalizing flows, which yield distributional approximation guarantees. Our analysis follows the principle that the strength of distributional approximation guarantees is governed by the regularity of the score. In particular, we distinguish two regimes:
\begin{itemize}
    \item[(i)] \textbf{uniform-in-space regularity,} which yields convergence in $L^1$, 
    \item[(ii)] \textbf{uniform-in-time and uniform-in-space regularity,} which yields convergence in KL divergence.
\end{itemize}
While $L^1$-convergence guarantees approximation in total variation, KL convergence additionally controls likelihoods and is therefore directly aligned with likelihood-based generative modeling. Our main contributions are as follows:
\begin{itemize}
    \item \textbf{Bi-Lipschitz VP transport maps and expressivity of NFs.}
    In Theorem~\ref{thm:score} we show that the implication contained in (i) holds for all $\ptarget$ with finite second moment, from a bi-Lipschitz flow ODE perspective. In particular, the VP probability flow ODE~\eqref{eq:VP-flow} induces bi-Lipschitz diffeomorphisms whose inverses push the standard Gaussian arbitrarily close to $\ptarget$ in $L^1$. Corollary~\ref{cor:score} establishes the implication contained in (ii), extending this result to non-early-stopped KL convergence for $\ptarget \in C_b^2(\R^n)$.
    \item \textbf{VP score regularity for rich target classes.}
    In Lemmas~\ref{lemma:pH_compact}--\ref{lemma:score_prop_conv} and Corollary~\ref{cor:score_gauss_mixture}, we identify structurally rich  
    target classes $\ptarget$ satisfying (i). This includes compactly supported densities~\eqref{eq:A1}, log-concave densities~\eqref{eq:A2}, Gaussian convolutions with compactly supported base measures~\eqref{eq:A3}, and finite Gaussian mixtures~\eqref{eq:A4}. Moreover, for~\eqref{eq:A3} and~\eqref{eq:A4}, we extend to (ii). Altogether, this shows $L^1$ convergence for $\eqref{eq:A1}, \eqref{eq:A2}$ and KL convergence for $\eqref{eq:A3}, \eqref{eq:A4}$, extending the results of~\cite{Chen2023b, Chen2023}. In contrast to~\cite{Conforti2025}, which assumes finite Fisher information with respect to the standard Gaussian distribution, our $L^1$ convergence results for compactly supported targets~\eqref{eq:A1} do not rely on this condition. The convergence hierarchy is reflected in Table~\ref{tab:vp-summary}, where stronger score regularity leads to stronger convergence guarantees. 
    \item \textbf{Universality for arbitrary densities.} 
    Corollary~\ref{cor:universal_vp} shows that Gaussian pullbacks induced by bi-Lipschitz VP transport maps are $L^1$-dense in the class of pdfs in $L^1$. Consequently, for every $\ptarget \in  L^1$, there exists an approximant $p \in L^1$ which satisfies (i) or (ii). In particular, we show that for $p$, (ii) is satisfied. The last column of Table~\ref{tab:vp-summary} corresponds to this general class of pdfs without explicit score regularity guarantee. 
    \item \textbf{Expressivity with learned scores and bi-Lipschitz normalizing flows.} 
    By assuming $L^2$ score approximation of a learned flow, we prove in Theorem~\ref{thm:learned_score} that the induced learned VP probability flow still yields bi-Lipschitz transports whose Gaussian pullbacks approximate $\ptarget$ in $L^1$. For this result we need to assume that the learned flow ODE shares the same marginal densities as the learned reverse-time SDE. This provides a conceptual connection between learned scores and expressivity. Addressing practical training aspects and weaker assumptions is left for future work.
\end{itemize}

\begin{table}[!t]
    \centering
    \footnotesize
    \renewcommand{\arraystretch}{1.05}
    \setlength{\tabcolsep}{5pt}
    \newcolumntype{L}[1]{>{\raggedright\arraybackslash}m{#1}}

    \begin{tabular}{p{0.32\textwidth}p{0.30\textwidth}p{0.30\textwidth}}
        \toprule
        \multirow{2}{=}{Assumption on $\ptarget$}
        & \multirow{2}{=}{VP score regularity ($\delta > 0$)} 
        & Main convergence guarantees \\
        &
        &
        (bi-Lipschitz VP transport) \\
        \midrule
        \textbf{Compact support~\eqref{eq:A1}}\par\vspace{1.5mm}
        $\supp(\ptarget)\subseteq K, \ K \subset \R^n$ compact 
        & Lemma~\ref{lemma:pH_compact}:\par\vspace{1mm}
        $\sup\limits_{x\in\R^n}\|\nabla s_t(x)\|\leq L(t), \ t\in[\delta,T]$

        & Corollary~\ref{cor:approx_prop}:\par\vspace{1mm}
        $\big\|\ptarget - (\varphi_{\delta\to T}^{-1})_\#\gaussian\big\|_{L^1}
            \xrightarrow[\,T\uparrow\infty]{\delta\downarrow 0} 0,$\par\vspace{1mm}
        $\DKL{p_\delta}{(\varphi_{\delta\to T}^{-1})_\# \gaussian}
            \xrightarrow[\,T\uparrow\infty]{} 0,$\par\vspace{1mm}
        \\[5mm] 
        \addlinespace[2mm]
        \hline \\ \addlinespace[-2.5mm]
        \textbf{Log-concave~\eqref{eq:A2}}\par\vspace{1.5mm}
        $\ptarget(x)\propto e^{-V(x)}, \ \nabla^2 V(x)\succeq 0$ 
        & Lemma~\ref{lemma:score_log_concave}:\par\vspace{1mm}
        $\sup\limits_{x\in\R^n}\|\nabla s_t(x)\|\leq L(t), \ t\in[\delta,T]$
        & \multirow{3}{=}{convergence as for compact-support~\eqref{eq:A1}}
        \\[10mm] \hline \\ \addlinespace[-2.5mm]
        \textbf{Gaussian convolution~\eqref{eq:A3}}\par\vspace{1.5mm}
        $\ptarget = \phi_{\Sigma,0} \ast \mu \in \mathcal{G}$, see $\eqref{eq:G},$ \par \vspace{1mm}
        $\supp(\mu)\subset B(0,R)$ 
        & Lemma~\ref{lemma:score_prop_conv}:\par\vspace{1mm}
        $\sup\limits_{t\in[0,T]}\sup\limits_{x\in\R^n}\|\nabla s_t(x)\|
            \leq M_T$
        & Corollary~\ref{cor:approx_prop}: \par\vspace{1mm}
        $\big\|\ptarget - (\varphi_{0\to T}^{-1})_\#\gaussian\big\|_{L^1}
            \xrightarrow[\,T\uparrow\infty]{} 0,$ \par\vspace{1mm}
        $\DKL{\ptarget}{(\varphi_{0\to T}^{-1})_\# \gaussian}
            \xrightarrow[\,T\uparrow\infty]{} 0,$\par\vspace{1mm}
        \\[5mm] 
        \addlinespace[2mm]
        \hline \\ \addlinespace[-2.5mm]
        \textbf{Finite Gaussian mixture~\eqref{eq:A4}}\par\vspace{1.5mm}
        $\ptarget(x)=\sum_{k=1}^K \alpha_k \phi_{\Sigma,m_k}(x)$ 
        & Corollary~\ref{cor:score_gauss_mixture}:\par\vspace{1mm}
        $ \sup\limits_{t\in[0,T]}\sup\limits_{x\in\R^n}\|\nabla s_t(x)\|
            \leq M_T$
        &\multirow{3}{=}{convergence as for Gaussian convolution~\eqref{eq:A3}}
        \\[10mm] \hline \\ \addlinespace[-2.5mm]
        \multirow{3}{=}{$\ptarget \in L^1(\R^n)$ a pdf} 
        & \multirow{3}{=}{\centering{\textbf{---}}}
        & 
        Corollary~\ref{cor:universal_vp}: \par\vspace{1mm} $\forall \varepsilon > 0: \, \exists \, \varphi_\varepsilon$ bi-Lipschitz with \par\vspace{1mm}
        $\big\|\ptarget - (\varphi_\varepsilon^{-1})_\#\gaussian\big\|_{L^1} < \varepsilon$ \\
        \bottomrule
    \end{tabular} 
    \caption{Summary of score properties and convergence results. For each structural assumption on $\ptarget$ we state the corresponding score regularity and the resulting approximation guarantee obtained via the probability flow ODE~\eqref{eq:VP-flow}. The ODE induced transport maps with initial density $\ptarget$ as in the given assumption are denoted by $\varphi_{\delta \to T}$, while $\varphi_\varepsilon$ denotes a general ODE induced transport initialized at some $p \in \mathcal{G}$, as described in Corollary~\ref{cor:universal_vp}.}
    \label{tab:vp-summary}
\end{table}

\section{Main Results}\label{sec:expressivity}

In this section we state our main expressivity and regularity results for the VP probability flow~\eqref{eq:VP-flow}, eventually leading to a universal approximation result for the induced VP transport maps for all probability densities. All proofs of this section are collected in Appendix~\ref{sec:appendix_proofs}.

First, we impose regularity assumptions on the VP score to show that the corresponding transport maps are bi-Lipschitz $C^1$-diffeomorphisms and investigate their approximation properties in the latent and target space.

\begin{theorem}\label{thm:score}
    Let $\ptarget :\R^n \longrightarrow [0,\infty)$ be a pdf with $\E_{\ptarget}[\|X\|^2] < \infty$. Furthermore, let $p_t$ denote the marginals of the VP process~\eqref{eq:VP-SDE_fixedbeta}. Suppose there exists a function ${L:[0,\infty) \rightarrow [0,\infty)}$ such that the score $s_t(x)=\nabla_x \log p_t(x)$ satisfies
    \begin{itemize}
        \item[(i)] $\sup_{x\in \R^n} \| \nabla s_t(x)\| \leq L(t) < \infty$ for all $t > 0$,
        \item[(ii)] $\int_{\delta}^T (1+L(t))\,\d t < \infty $ for all $0 < \delta < T < \infty$.
    \end{itemize}
    Let $(\varphi_{\delta \to T})_{0 < \delta < T < \infty}$ be the flow defined by~\eqref{eq:VP-flow}. Then, the following properties hold:
    \begin{itemize}
        \item[I] $\varphi_{\delta \to T}$ is a bi-Lipschitz $C^1$-diffeomorphism for all $0<\delta<T<\infty$,
        \item[II] $\lim_{T \to \infty} \, \DKL{(\varphi_{\delta \to T})_\# p_\delta}{\gaussian} = 0$ for all $\delta > 0$,
        \item[III] $\lim_{\delta \to 0} \, \lim_{T \to \infty} \, \| \ptarget - (\varphi_{\delta\to T}^{-1})_\# \gaussian \|_{L^1} = 0$.
    \end{itemize}
\end{theorem}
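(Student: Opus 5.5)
Part I follows from standard ODE theory. The velocity field $v_t(x) = -\tfrac12 x - \tfrac12 s_t(x)$ satisfies $\|\nabla v_t(x)\| \le \tfrac12(1+L(t))$ uniformly in $x$ by (i). On $[\delta,T]$ the score is smooth in $x$, since $p_t$ in~\eqref{eq:p_t} is a Gaussian convolution and hence $C^\infty$ and positive, and it is measurable in $t$. By (ii), $v$ is then a Carathéodory field with $L^1_t$ spatial Lipschitz bound.

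Global existence and uniqueness follow. The linear growth bound $\|v_t(x)\|\le \|v_t(0)\| + \tfrac12(1+L(t))\|x\|$ uses that $\|s_t(0)\|$ is finite and locally integrable in $t$. This in turn comes from $s_t(0) = -\sigma(t)^{-2}\bigl(0 - a(t)\E[X_0\mid X_t=0]\bigr)$ together with the finite second moment, or more simply from continuity of $(t,x)\mapsto s_t(x)$ on $[\delta,T]\times\R^n$.

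The flow map $\varphi_{\delta\to T}$ is then a $C^1$ diffeomorphism with inverse given by the backward flow. Grönwall applied to $\|\varphi(x)-\varphi(y)\|$, both forward and backward in time, gives
\begin{equation*}
e^{-\frac12\int_\delta^T(1+L)}\|x-y\| \le \|\varphi_{\delta\to T}(x)-\varphi_{\delta\to T}(y)\| \le e^{\frac12\int_\delta^T(1+L)}\|x-y\|.
\end{equation*}
$C^1$ regularity follows from the variational equation $\partial_t J = \nabla v_t J$, using continuity of $\nabla s_t$ in $x$.

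For Part II, the key fact is that the flow preserves marginals, $(\varphi_{\delta\to T})_\# p_\delta = p_T$. The continuity equation for $p_t$ is exactly the Fokker–Planck equation of~\eqref{eq:VP-SDE_fixedbeta}, rewritten with the score, and uniqueness for the continuity equation with a Lipschitz field gives $p_T$ as the pushforward. So it suffices to show $\DKL{p_T}{\gaussian}\to 0$. Since $p_T$ is the Ornstein–Uhlenbeck semigroup applied to $p_\delta$, I would use exponential decay of relative entropy along OU under the Gaussian log-Sobolev inequality:
\begin{equation*}
\DKL{p_T}{\gaussian}\le e^{-(T-\delta)}\DKL{p_\delta}{\gaussian}.
\end{equation*}
It remains to show $\DKL{p_\delta}{\gaussian}<\infty$. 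Alternatively, convexity of KL in the mixture~\eqref{eq:p_t} gives
\begin{equation*}
\DKL{p_\delta}{\gaussian} \le \int \ptarget(y)\,\DKL{\mathcal N(a y,\sigma^2 I)}{\gaussian}\,\d y,
\end{equation*}
which is finite because $\E\|X_0\|^2<\infty$ and $\sigma(\delta)>0$. This bound even gives decay directly as $T\to\infty$ if applied with $\delta$ replaced by $T$ and $a(T)\to0$, $\sigma(T)\to1$. I expect the rigorous justification that the ODE flow reproduces the marginals, i.e.\ uniqueness of the continuity equation solution, to be the main technical obstacle. It is, however, standard given the Lipschitz bound from (i)–(ii) and the finite second moments.

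For Part III, the bi-Lipschitz map $\varphi=\varphi_{\delta\to T}$ is invertible with $p_\delta = (\varphi^{-1})_\# p_T$. Total variation is invariant under pushforward by a bijection, so
\begin{equation*}
\|p_\delta - (\varphi^{-1})_\#\gaussian\|_{L^1} = \|p_T-\gaussian\|_{L^1}\le \sqrt{2\,\DKL{p_T}{\gaussian}},
\end{equation*}
using Pinsker's inequality, and this tends to $0$ as $T\to\infty$ by Part II. The triangle inequality then gives
\begin{equation*}
\limsup_{T\to\infty}\|\ptarget-(\varphi_{\delta\to T}^{-1})_\#\gaussian\|_{L^1}\le \|\ptarget-p_\delta\|_{L^1}.
\end{equation*}
Finally, $\|\ptarget - p_\delta\|_{L^1}\to0$ as $\delta\to0$. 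Indeed, $p_\delta$ is the law of $a X_0+\sigma Z$, a rescaling by $a(\delta)^{-1}$ convolved with a Gaussian. Continuity of dilations and translations in $L^1$ together with approximate identities gives $L^1$ convergence: write $p_\delta(x)= a^{-n}\,(\ptarget\ast\phi_{\sigma^2/a^2,0})(x/a)$ and combine $\|\ptarget\ast\phi_{\epsilon,0}-\ptarget\|_{L^1}\to0$ with $L^1$-continuity of the dilation $f\mapsto a^{-n}f(\cdot/a)$ as $a\to1$.
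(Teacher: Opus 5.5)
Your proposal is correct and follows the paper's proof essentially step for step: Grönwall forward and backward in time for I, marginal preservation plus exponential decay of $\DKL{p_T}{\gaussian}$ for II, and for III the triangle inequality, invariance of the $L^1$ distance under bijective pushforwards, Pinsker's inequality, and $L^1$-continuity of dilation and Gaussian smoothing as $\delta\to 0$. The only differences are that you prove $\DKL{p_T}{\gaussian}\to 0$ directly (via convexity of KL over the Gaussian-mixture representation of $p_T$, or via log-Sobolev decay) rather than citing \cite[Lemma~C.4]{Chen2023}, and that you explicitly justify $(\varphi_{\delta\to T})_\# p_\delta = p_T$ through uniqueness for the continuity equation, which the paper takes as given.
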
 

Theorem~\ref{thm:score} shows how regularity of the VP score translates into regularity of the associated transport maps. Assumption~(i) is a uniform-in-space Lipschitz bound on the score, with time-dependent constant $L(t)$, and~(ii) requires that this bound is integrable on any finite time interval away from $t=0$. Under these assumptions, the flow map $\varphi_{\delta\to T}$ is bi-Lipschitz for all $0<\delta<T<\infty$, but the corresponding Lipschitz constants may depend on $\delta$ and $T$ and can potentially blow up as $\delta\rightarrow 0$ or $T\rightarrow\infty$. Nevertheless, III shows that under uniform-in-space score regularity and integrability in time, we can approximate our target distribution arbitrarily well by Gaussian pullbacks induced by such bi-Lipschitz transport maps with respect to $L^1$, as soon as we choose a sufficiently expanded time window $[\delta,T]$. Thus, although working with a cutoff $\delta > 0$ in the transport, the $L^1$-limit is the undiffused target 
$\ptarget$, not the smoothed marginal $p_\delta$.  

The Gaussian smoothing inherent in the VP process regularizes $s_t$, so that boundedness of $\nabla s_t$ in assumption (i) holds for a variety of distributions, including compactly supported and certain log-concave densities, as we show below. Note that even heavy-tailed distributions are not excluded a priori. However, the validity of the assumption depends on how the tails influence the posterior covariance in the score representation, see the score equality further down in~\eqref{eq_stephanovitch_score}.

Statement~I follows from standard flow theory for ODEs with spatially Lipschitz vector fields and part~II follows immediately from~\cite[Lemma~C.4]{Chen2023}. Note that, since the probability flow ODE reproduces the VP-SDE marginals, we have 
$(\varphi_{\delta \to T})_\# p_\delta = p_T$ by definition. Hence, Part~II is essentially the classical KL convergence of the VP diffusion marginals $p_T$ to the standard Gaussian as $T \to \infty$, stated here in transport form to match the pullback viewpoint used in Part~III. Equivalently, this yields convergence of the Gaussian pullback to the early-stopped density $p_\delta$ on the target side
 \begin{equation*}
     \DKL{p_\delta}{(\varphi_{\delta \to T}^{-1})_\#\gaussian} = \DKL{(\varphi_{\delta \to T})_\# p_\delta}{\gaussian} = \DKL{p_T}{\gaussian}
    \xrightarrow[T\to\infty]{} 0.
 \end{equation*}
Although the ingredients are classical, to the best of our knowledge, the combination of Parts~II–III, formulated in terms of bi-Lipschitz VP transport maps and $L^1$ approximation of $\ptarget$ by Gaussian pullbacks, does not appear in this explicit form in the existing literature, imposing score regularity only for positive times away from $t=0$.

We extend these approximation properties by further making an assumption on the uniform-in-time Lipschitz property of the score. 

\begin{corollary}\label{cor:score}
    Assume the setting of Theorem~\ref{thm:score}. In addition to (i) and (ii), suppose that $\ptarget \in C^2_b(\R^n)$, $\ptarget > 0$ and 
    \begin{equation*}
    \sup_{x \in \R^n} \|\nabla_x s_0(x)\| \leq L(0) < \infty \qquad \text{and} \qquad \int_0^T(1+L(t))\, \d t < \infty \text{ for every } 0 < T < \infty.
    \end{equation*}
    Then,
\begin{itemize}
    \item[I] $\varphi_{0 \to T}$ is a bi-Lipschitz $C^1$-diffeomorphism for all $T > 0$.
    \item[II] $\lim_{T \to \infty} \, \DKL{\ptarget}{(\varphi_{0\to T}^{-1})_\# \gaussian}= 0 \quad \text{and} \quad  \lim_{T \to \infty} \, \| \ptarget - (\varphi_{0\to T}^{-1})_\# \gaussian\|_{L^1}= 0.$
\end{itemize}
\end{corollary}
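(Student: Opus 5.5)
The plan is to rerun the argument behind Theorem~\ref{thm:score}, now on the closed interval $[0,T]$ rather than $[\delta,T]$. The extra hypotheses are what allow the starting point $\delta=0$. For Part~I, note that the velocity field $v_t(x)=-\tfrac12 x-\tfrac12 s_t(x)$ satisfies
\[
\|\nabla v_t(x)\|\le \tfrac12\bigl(1+L(t)\bigr) \quad \text{for all } t\in[0,T].
\]
At $t=0$ this uses the new bound on $\nabla s_0$. Since $\int_0^T(1+L(t))\,\d t<\infty$, Carathéodory existence and uniqueness with a spatially Lipschitz field gives global flow maps $\varphi_{0\to t}$, with no finite-time blow-up because the field grows at most linearly. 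Grönwall applied to $\varphi_{0\to T}(x)-\varphi_{0\to T}(y)$, and to the backward flow, gives
\[
e^{-\frac12\int_0^T(1+L)}\|x-y\|\le\|\varphi_{0\to T}(x)-\varphi_{0\to T}(y)\|\le e^{\frac12\int_0^T(1+L)}\|x-y\|.
\]
For the $C^1$ property we need $v$ to be $C^1$ in $x$ and regular enough in $t$ up to $t=0$. For $t>0$ this comes from Gaussian smoothing. At $t=0$ it comes from $\ptarget\in C_b^2$ and $\ptarget>0$, which make $s_0=\nabla\ptarget/\ptarget$ a $C^1$ field. Differentiability of the flow then follows from the variational equation $\dot J=\nabla v_t(\varphi_{0\to t})J$. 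The determinant bounds come from Liouville's formula, so $\varphi_{0\to T}$ is a bi-Lipschitz $C^1$-diffeomorphism.

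For Part~II, the ODE reproduces the VP marginals, so $(\varphi_{0\to T})_\#\ptarget=p_T$. This must be justified down to $t=0$: $p_t$ solves the continuity equation $\partial_t p_t+\nabla\cdot(p_t v_t)=0$ on $[0,T]$, and $v$ is Lipschitz, so the solution is unique and is the pushforward. KL divergence is invariant under a common bijective pushforward, which gives
\[
\DKL{\ptarget}{(\varphi_{0\to T}^{-1})_\#\gaussian}=\DKL{(\varphi_{0\to T})_\#\ptarget}{\gaussian}=\DKL{p_T}{\gaussian}.
\]
It remains to show $\DKL{p_T}{\gaussian}\to0$. By Part~II of Theorem~\ref{thm:score}, $\DKL{p_T}{\gaussian}\to0$ for any fixed $\delta>0$. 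Alternatively, data processing for the Ornstein--Uhlenbeck semigroup (which has invariant measure $\gaussian$) gives $\DKL{p_T}{\gaussian}\le \DKL{p_\delta}{\gaussian}$, and exponential decay holds by log-Sobolev once $\DKL{p_\delta}{\gaussian}<\infty$. Finiteness for $\delta>0$ follows from the finite second moment and the smoothing. Pinsker's inequality then converts KL convergence into the $L^1$ convergence. No $\delta\to0$ limit is needed, because the transport already starts at $\ptarget$.

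The main obstacle is the regularity at $t=0$: showing that $t\mapsto s_t(x)$ and its Jacobian are continuous (or at least measurable and locally integrable) on $[0,T]$, uniformly on compacts, so that the flow theory and the continuity-equation identity hold up to the endpoint. I would first use the representation $p_t=\ptarget(\cdot/a)a^{-n}*\phi_{\sigma^2}$. Because $\ptarget\in C_b^2$, $p_t\to\ptarget$ and $\nabla p_t\to\nabla\ptarget$ locally uniformly. Positivity of $\ptarget$ keeps the denominator of $s_t=\nabla p_t/p_t$ bounded away from zero on compacts, which gives continuity of $s$ at $t=0$. The uniform Lipschitz bound is already assumed, and for the flow theory it suffices together with this local continuity.
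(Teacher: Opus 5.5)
Your proposal is correct and follows essentially the paper's route. You rerun the Grönwall/flow argument of Theorem~\ref{thm:score} on $[0,T]$, using the $t=0$ regularity that $\ptarget\in C_b^2$, $\ptarget>0$ provides via the paper's extension of Lemma~\ref{lem:regularity_pt}. You then reduce to $\DKL{p_T}{\gaussian}\to 0$ by invariance of KL under the bijection $\varphi_{0\to T}$, and obtain the $L^1$ statement from Pinsker. Your continuity-equation justification of $(\varphi_{0\to T})_\#\ptarget=p_T$ at the endpoint $t=0$ is a useful extra detail that the paper only asserts.
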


Consequently, these additional assumptions of Corollary~\ref{cor:score} lead to stronger convergence properties in the target space. Note that these assumptions are similar to~\cite{Chen2023}, where non-early-stopped KL convergence is also shown for targets with finite second moment and score regularity on $0 \leq t \leq T$. While~\cite{Chen2023} work in the discretized reverse-time SDE framework, we emphasize a deterministic pullback formulation via the probability flow transport. The assumptions of Theorem~\ref{thm:score} only require the score to be Lipschitz uniformly in space with an integrable bound over a finite time horizon away from $t=0$. Corollary~\ref{cor:score} shows that if, additionally, the score is uniformly Lipschitz on a time window $[0,T]$, the associated flow maps become uniformly bi-Lipschitz on $[0,T]$, i.e. there exist uniform bi-Lipschitz constants $\overline{L}_T, \underline{L}_T$ such that 
\begin{equation*}\label{eq:Lip_const}
    \sup_{0 \leq s < t \leq T} \Lip(\varphi_{s \to t}) \leq \overline{L}_T \quad \text{and} \quad \sup_{0 \leq s < t \leq T} \Lip(\varphi_{s \to t}^{-1}) \leq \underline{L}_T.
\end{equation*}
 This allows us to transfer KL-convergence in latent space to KL-convergence on the target side. In particular, we obtain convergence of the Gaussian pullback to $\ptarget$ instead of an early-stopped density $p_\delta$ via the VP flow. 

The regularity of the score strongly depends on the initial law $\ptarget$. However, it is not fully investigated which target classes fulfill the assumptions in Theorem~\ref{thm:score} or Corollary~\ref{cor:score}. In the following, we identify structural classes of densities for which non-uniform or uniform score conditions are guaranteed.

\begin{lemma}\label{lemma:pH_compact}
Let $\ptarget:\R^n \longrightarrow [0,\infty)$ be a pdf, 
and $s_t(x)$ the score of the VP-SDE~\eqref{eq:VP-SDE_fixedbeta}. Assume $\ptarget$ has compact support, i.e. there exists a compact set $K\subset\R^n$ such that
\begin{equation}
     \supp(\ptarget)\subseteq K. \tag{A1}\label{eq:A1}
\end{equation}
Then, there exists a function ${L:[0,\infty) \rightarrow [0,\infty)}$ such that  
\begin{itemize}
    \item[(i)] $\sup_x \| \nabla s_t(x)\| \leq L(t) < \infty$ for all $t > 0$,
    \item[(ii)] $\int_{\delta}^T (1+L(t))\, \d t < \infty $ for all $0 < \delta < T < \infty$.
\end{itemize}
\end{lemma}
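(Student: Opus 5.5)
The plan is to write the Hessian of $\log p_t$ as a posterior covariance, using the Gaussian-convolution representation~\eqref{eq:p_t}, and then bound that covariance via the compact support. Differentiating under the integral in~\eqref{eq:p_t} is justified because the Gaussian kernel and all its $x$-derivatives are bounded and $\ptarget$ is integrable. This gives Tweedie's formula
\begin{equation*}
    s_t(x) = \frac{a(t)\,\E_{p_{t,x}}[Y] - x}{\sigma^2(t)},
\end{equation*}
where $p_{t,x}$ is the posterior density~\eqref{eq:p_t,x}. Differentiating once more, and using $\nabla_x \log \phi_{\sigma^2(t),a(t)y}(x) = -(x-a(t)y)/\sigma^2(t)$, we get
\begin{equation*}
    \nabla s_t(x) = -\frac{1}{\sigma^2(t)} I_n + \frac{a^2(t)}{\sigma^4(t)}\,\Cov_{p_{t,x}}(Y).
\end{equation*}
This is the score equality referred to in the discussion after Theorem~\ref{thm:score}. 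Note that $p_t>0$ everywhere for $t>0$, so $s_t$ is well defined and smooth.

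Next, by~\eqref{eq:A1}, the posterior $p_{t,x}$ is supported in $K$ for every $x$ and $t>0$, since its numerator contains the factor $\ptarget(y)$. Let $R:=\sup_{y\in K}\|y\|<\infty$. Then for every $x$,
\begin{equation*}
    0 \preceq \Cov_{p_{t,x}}(Y) \preceq \E_{p_{t,x}}\bigl[\|Y\|^2\bigr]\, I_n \preceq R^2 I_n.
\end{equation*}
(One could refine this using the diameter of $K$, but that is not needed.) Hence $\nabla s_t(x)$ is a symmetric matrix squeezed between $-\sigma^{-2}(t) I_n$ and $\bigl(a^2(t)R^2\sigma^{-4}(t) - \sigma^{-2}(t)\bigr) I_n$. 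Therefore
\begin{equation*}
    \sup_x\|\nabla s_t(x)\| \le L(t) := \frac{1}{\sigma^2(t)} + \frac{a^2(t)R^2}{\sigma^4(t)} = \frac{1}{1-e^{-t}} + \frac{e^{-t}R^2}{(1-e^{-t})^2},
\end{equation*}
which is finite for all $t>0$ and proves (i). Define $L(0)$ arbitrarily, say $L(0):=0$, so that $L$ maps $[0,\infty)$ into $[0,\infty)$.

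For (ii), observe that $L$ is continuous on $[\delta,T]$ for $\delta>0$, because $\sigma^2(t)\ge 1-e^{-\delta}>0$ there. Thus $\int_\delta^T(1+L(t))\,\d t \le (T-\delta)\bigl(1+(1-e^{-\delta})^{-1}+R^2(1-e^{-\delta})^{-2}\bigr)<\infty$.

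The main technical point is the rigorous derivation of the covariance identity, specifically the interchange of differentiation and integration and the positivity of $p_t$. I would handle it by dominated convergence. The derivatives $\nabla_x$ and $\nabla_x^2$ of $\phi_{\sigma^2(t),a(t)y}(x)$ are bounded by $C(1+\|x-a(t)y\|^2)\phi_{\sigma^2(t),a(t)y}(x)$. For $y\in K$ and $x$ in a neighbourhood of a fixed point, this is uniformly bounded, and $\ptarget\in L^1$ supplies the integrable majorant. Everything else is direct.
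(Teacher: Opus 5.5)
Your proposal is correct and follows essentially the same route as the paper: you write $\nabla s_t$ as $-\sigma^{-2}(t)I_n+a^2(t)\sigma^{-4}(t)\Cov_{p_{t,x}}(Y)$, bound $\|\Cov_{p_{t,x}}(Y)\|\le R^2$ using $\supp(p_{t,x})\subseteq K$, and use continuity of the resulting $L(t)$ on $[\delta,T]$. The only difference is that you derive the covariance identity directly via Tweedie's formula, whereas the paper imports it from Stephanovitch via a time change; your $L(t)=\sigma^{-2}(t)+a^2(t)R^2\sigma^{-4}(t)$ coincides exactly with the paper's $1+\frac{e^{-t}}{(1-e^{-t})^2}R^2+\frac{e^{-t}}{1-e^{-t}}$.
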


A similar statement also holds for some log-concave densities, as summarized in the following lemma.

\begin{lemma}\label{lemma:score_log_concave}
Let $\ptarget:\R^n \longrightarrow [0,\infty)$ be a pdf, 
and $s_t(x)$ the score of the VP-SDE~\eqref{eq:VP-SDE_fixedbeta}. Let
\begin{equation}
    \ptarget(x) = \frac{1}{Z}\,e^{-V(x)}, \qquad Z>0, \tag{A2}\label{eq:A2}
\end{equation}
where $V\in C^2(\R^n)$ such that
\begin{equation}\label{eq:log_concave}
  \nabla^2 V(x) \succeq 0
  \quad\text{for all }x\in\R^n.
\end{equation} 
Then, there exists a function ${L:[0,\infty) \rightarrow [0,\infty)}$ such that 
\begin{itemize}
    \item[(i)] $\sup_x \| \nabla s_t(x)\| \leq L(t) < \infty$ for all $t > 0$,
    \item[(ii)] $\int_{\delta}^T (1+L(t))\, \d t < \infty $ for all $0 < \delta < T < \infty$.
\end{itemize}
\end{lemma}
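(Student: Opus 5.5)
Our plan is to use the standard representation of the VP score Hessian through the posterior $p_{t,x}$ from~\eqref{eq:p_t,x}. Differentiating~\eqref{eq:p_t} under the integral gives
\begin{equation*}
    s_t(x) = \frac{a(t)\,\E_{p_{t,x}}[Y] - x}{\sigma^2(t)}, \qquad
    \nabla s_t(x) = -\frac{1}{\sigma^2(t)} I_n + \frac{a^2(t)}{\sigma^4(t)} \Cov_{p_{t,x}}(Y).
\end{equation*}
Hence, for every $t>0$,
\begin{equation*}
    \|\nabla s_t(x)\| \le \frac{1}{\sigma^2(t)} + \frac{a^2(t)}{\sigma^4(t)} \|\Cov_{p_{t,x}}(Y)\|.
\end{equation*}
The lower bound $\nabla s_t \succeq -\sigma^{-2}(t) I_n$ is automatic. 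It therefore suffices to bound the posterior covariance from above, uniformly in $x$, by a function of $t$ that stays bounded as $t\to 0$ once multiplied by $a^2/\sigma^4$, or at least stays integrable on $[\delta,T]$. Away from $t=0$ the prefactors are continuous and bounded on $[\delta,T]$, since $\sigma^2(t)\ge 1-e^{-\delta}>0$. So (ii) follows from (i) provided $L(t)$ is chosen continuous, or merely bounded, on compact subintervals of $(0,\infty)$.

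The key step is a uniform covariance bound via log-concavity of the posterior. Up to normalisation,
\begin{equation*}
    p_{t,x}(y) \propto \exp\!\Big(-V(y) - \tfrac{\|x-a(t)y\|^2}{2\sigma^2(t)}\Big).
\end{equation*}
The Hessian of the exponent is
\begin{equation*}
    \nabla^2 V(y) + \frac{a^2(t)}{\sigma^2(t)} I_n \succeq \frac{a^2(t)}{\sigma^2(t)} I_n
\end{equation*}
by~\eqref{eq:log_concave}. Thus $p_{t,x}$ is $\kappa(t)$-strongly log-concave with $\kappa(t)=a^2(t)/\sigma^2(t)>0$. The Brascamp--Lieb inequality, or equivalently the Poincaré inequality for strongly log-concave measures applied to linear test functions, then gives $\Cov_{p_{t,x}}(Y) \preceq \kappa(t)^{-1} I_n$ for every $x$. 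Inserting this yields
\begin{equation*}
    \|\nabla s_t(x)\| \le \frac{1}{\sigma^2(t)} + \frac{a^2(t)}{\sigma^4(t)}\cdot\frac{\sigma^2(t)}{a^2(t)} = \frac{2}{\sigma^2(t)}.
\end{equation*}
More sharply, both $\nabla s_t$ and $-\nabla s_t$ are controlled: $-\sigma^{-2} I_n \preceq \nabla s_t \preceq 0$. Either way we may set $L(t)=2/(1-e^{-t})$. This is finite for $t>0$ and continuous on $(0,\infty)$, which gives (i) and (ii).

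We expect the main obstacle to be technical justification rather than the estimate itself. First, we must verify that differentiation under the integral in~\eqref{eq:p_t} is legitimate and that the posterior has finite second moments. Both follow because $e^{-V}$ is integrable and log-concave, hence has exponentially decaying tails, and the Gaussian kernel contributes smooth factors with polynomial moments. Second, we must make sure that Brascamp--Lieb applies with only $V\in C^2$ and $\nabla^2 V\succeq 0$, which holds in the standard form for $C^2$ strictly convex potentials. Should one prefer to avoid it, the alternative is to invoke the known preservation of log-concavity under Gaussian convolution (Prékopa--Leindler). Under this route $p_t$ is log-concave, so $\nabla s_t\preceq 0$, combined with the automatic lower bound $\nabla s_t \succeq -\sigma^{-2}(t)I_n$ obtained from nonnegativity of the covariance term.
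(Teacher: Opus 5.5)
Your proposal is correct and follows essentially the same route as the paper: your formula $\nabla s_t = -\sigma^{-2}(t) I_n + a^2(t)\sigma^{-4}(t)\,\Cov_{p_{t,x}}(Y)$ coincides with the paper's~\eqref{eq_stephanovitch_score}, and you likewise use strong log-concavity of the posterior with constant $a^2(t)/\sigma^2(t)$ and Brascamp--Lieb to reach the same $L(t)=2/(1-e^{-t})$. Your added observation $-\sigma^{-2}(t) I_n \preceq \nabla s_t \preceq 0$ is also correct and sharpens the bound to $1/(1-e^{-t})$; the paper does not use this refinement, and it is not needed for the lemma.
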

The statements of Lemma~\ref{lemma:pH_compact} and Lemma~\ref{lemma:score_log_concave} rely on a representation of the score gradient of the VP flow, which holds for more general target classes than~\eqref{eq:A1} and~\eqref{eq:A2}. In \cite[Proposition~4]{Stephanovitch2025}, such a formula is derived for the VP process $\mathrm{d}X_t = -X_t\,\mathrm{d}t + \sqrt{2}\,\mathrm{d}W_t$, i.e. the choice $\beta(t) \equiv 2$. Our VP process~\eqref{eq:VP-SDE_fixedbeta} coincides with this under the time-change $t \rightarrow t/2$ and the same computation yields  
\begin{equation}\label{eq_stephanovitch_score}
    \nabla s_t(x)
      = -I_n
        + \frac{e^{-t}}{(1-e^{-t})^2}\,\Cov_{p_{t,x}}(Y)
        - \frac{e^{-t}}{1-e^{-t}}\,I_n \quad \text{ for every } t > 0, \ x\in \R^n,
\end{equation}
where $\Cov_{p_{t,x}}(Y)$ denotes the covariance of $Y\sim p_{t,x}$. This implies 
\begin{equation}\label{eq:stephanovitch_score_norm}
      \|\nabla s_t(x)\| \leq 1 + \frac{e^{-t}}{(1-e^{-t})^2}\,\|\Cov_{p_{t,x}}(Y)\| + \frac{e^{-t}}{1-e^{-t}} \quad \text{ for every } t > 0, \ x\in \R^n.
\end{equation}
In our proofs of Lemma~\ref{lemma:pH_compact} and Lemma~\ref{lemma:score_log_concave} we show that $\|\nabla s_t(x)\| < \infty$ using the upper bound in~\eqref{eq:stephanovitch_score_norm}.

A way to recover stronger, uniform-in-time score regularity is to smooth the target by convolution with a Gaussian density. We define the class of densities 
\begin{equation}\label{eq:G}
  \mathcal G := \bigcup_{\Sigma \succ 0} \left\{
        p = \phi_{\Sigma,0} \ast \mu \;:\;
        \mu \text{ is a probability measure with compact support in }\R^n \right\}.
\end{equation}
By construction, every $p \in \mathcal G$ is a smooth, positive pdf on $\R^n$ with finite second moment and all derivatives of $p$ are bounded, in particular, $p \in C_b^\infty(\R^n)$. The next lemma shows that for such Gaussian-smoothed, compactly supported targets one obtains a uniform bound on $\nabla s_t$ on $[0,T]$, including $t=0$.

\begin{lemma}\label{lemma:score_prop_conv}
Let $\Sigma\in\R^{n\times n}$ be symmetric positive definite and let
\begin{equation}
    \ptarget(x) = (\phi_{\Sigma,0} \ast \mu)(x)
    = \int_{\R^n} \phi_{\Sigma,0}(x-y)\,\d\mu(y),
    \label{eq:A3} \tag{A3}
\end{equation}
for some probability measure $\mu$ with compact support $\supp(\mu) \subset B(0,R)$ for a suitable $R>0$, so that $\ptarget\in\mathcal G$.

Let $s_t(x)$ be the score of the VP-SDE~\eqref{eq:VP-SDE_fixedbeta} and initial density $\ptarget$. 
Then, there exists a function ${L:[0,\infty) \rightarrow [0,\infty)}$ such that
\begin{itemize}
  \item[(i)]
    $\sup_{x} \big\|\nabla s_t(x)\big\| \leq L(t) < \infty$ for all $t \geq 0$,
  \item[(ii)]$\int_0^T (1+L(t))\,\d t < \infty$ for all $0 < T < \infty$.
\end{itemize}
\end{lemma}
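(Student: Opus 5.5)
The plan is to observe that the VP marginals of a Gaussian convolution target stay in the same class, and then read off the score gradient from a covariance identity that remains nondegenerate down to $t=0$. By~\eqref{eq:p_t}, $X_t = a(t)X_0 + \sigma(t)Z$ with $X_0 = U + W$, where $U\sim\mu$ and $W\sim\mathcal N(0,\Sigma)$ are independent. Hence
\begin{equation*}
    X_t = a(t)U + \bigl(a(t)W + \sigma(t)Z\bigr),
\end{equation*}
so that $p_t = \phi_{\Sigma_t,0}\ast \mu_t$ with
\begin{equation*}
    \Sigma_t := a^2(t)\Sigma + \sigma^2(t) I_n, \qquad \mu_t := (a(t)\,\cdot)_\#\mu .
\end{equation*}
Here $\supp(\mu_t)\subset B(0,a(t)R)\subset B(0,R)$. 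This holds for every $t\ge 0$, including $t=0$, where $\Sigma_0=\Sigma$ and $\mu_0=\mu$. I will use the same posterior-covariance argument behind~\eqref{eq_stephanovitch_score}, but applied to the latent variable $U$ instead of $X_0$.

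Next I differentiate under the integral, which is justified by the compact support of $\mu_t$ and the Gaussian decay of the kernel. With the posterior $\nu_{t,x}(\d u)\propto \phi_{\Sigma_t,0}(x-a(t)u)\,\mu(\d u)$ this gives
\begin{equation*}
    s_t(x) = -\Sigma_t^{-1}\bigl(x - a(t)\,\E_{\nu_{t,x}}[U]\bigr),
    \qquad
    \nabla s_t(x) = -\Sigma_t^{-1} + a^2(t)\,\Sigma_t^{-1}\,\Cov_{\nu_{t,x}}(U)\,\Sigma_t^{-1}.
\end{equation*}
The second formula is the standard Tweedie/Hessian identity for a log-partition function: $\log p_t(x)$ equals a quadratic in $x$ plus the log-Laplace transform of $\mu_t$ tilted by $x^\top\Sigma_t^{-1}a(t)u$. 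Its Hessian is the posterior covariance conjugated by $\Sigma_t^{-1}$.

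It then remains to bound each term uniformly in $x$ and $t$. Since $\nu_{t,x}$ is supported in $B(0,R)$, we have $\|\Cov_{\nu_{t,x}}(U)\|\le R^2$ for all $x$ and $t$. Moreover $\Sigma_t \succeq \min\{\lambda_{\min}(\Sigma),1\}\, I_n =: \lambda\, I_n$, because $\Sigma_t$ is a convex combination of $\Sigma$ and $I_n$ (as $a^2+\sigma^2=1$). Therefore $\|\Sigma_t^{-1}\|\le \lambda^{-1}$, and since $a^2(t)\le 1$ we obtain
\begin{equation*}
    \sup_x\|\nabla s_t(x)\| \le L(t) := \lambda^{-1} + \lambda^{-2}R^2
\end{equation*}
for every $t\ge 0$. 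This $L$ is constant in $t$, so (ii) holds trivially with $\int_0^T(1+L)\,\d t = T(1+L)<\infty$.

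I expect the main obstacle to be the rigorous justification of the Hessian identity at all $t\ge0$ with a general (possibly singular) compactly supported measure $\mu$. The covariance term itself is bounded simply by the support radius $R$. For the justification, the plan is dominated convergence: the integrand $\phi_{\Sigma_t,0}(x-a u)$ and its first two $x$-derivatives are bounded by polynomial-times-Gaussian functions uniformly in $u\in B(0,R)$, locally in $x$. Also, $p_t>0$ everywhere, so $\log p_t$ is $C^2$ and the quotient rule gives the covariance form.
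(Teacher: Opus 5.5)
Your proposal is correct and follows essentially the paper's proof. Both arguments use the same latent-mixture representation $p_t(x)=\int\phi_{\Sigma_t,0}(x-a(t)u)\,\mu(\mathrm{d}u)$ with $\Sigma_t=a^2(t)\Sigma+\sigma^2(t)I_n$, the posterior-covariance identity $\nabla s_t=-\Sigma_t^{-1}+a^2(t)\,\Sigma_t^{-1}\Cov_{\nu_{t,x}}(U)\,\Sigma_t^{-1}$, and the bounds $\|\Cov_{\nu_{t,x}}(U)\|\le R^2$ and $\|\Sigma_t^{-1}\|\le\lambda^{-1}$. Your use of $a^2+\sigma^2=1$ to view $\Sigma_t$ as a convex combination of $\Sigma$ and $I_n$ gives a single $t$-independent constant $L$, which is slightly tidier than the paper's horizon-dependent $M_T$.
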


In particular, for $\ptarget \in \mathcal G$ the VP score satisfies a uniform-in-time Lipschitz bound on every finite interval $[0,T]$, so Gaussian smoothing upgrades the weaker control of Lemma~\ref{lemma:pH_compact} at some time point away from $t=0$ to a global bound that also covers $t=0$ itself. Importantly, this imposes the existence of uniform bi-Lipschitz constants on $[0,T]$ as in~\eqref{eq:Lip_const} of the respective VP flow for all targets of the class~\eqref{eq:A3}. 

Finite Gaussian mixtures provide an important and widely used subclass of $\mathcal G$. Hence Lemma~\ref{lemma:score_prop_conv} applies directly. 

\begin{corollary}\label{cor:score_gauss_mixture}
Let $\ptarget:\R^n \to [0,\infty)$ be a pdf of the form
\begin{equation}
    \ptarget(x) = \sum_{k=1}^K \alpha_k \,\phi_{\Sigma,m_k}(x),
  \qquad \alpha_k>0,\ \sum_{k=1}^K \alpha_k=1, 
    \tag{A4}\label{eq:A4}
\end{equation}
where $\phi_{\Sigma,m_k}$ is the density of $\mathcal N(m_k,\Sigma)$ with a fixed symmetric positive definite $\Sigma\in\R^{n\times n}$ and $m_k \in \R^n$. Let $s_t(x)$ be the score of the VP-SDE~\eqref{eq:VP-SDE_fixedbeta} and initial density $\ptarget$. 
Then, there exists a function ${L:[0,\infty) \rightarrow [0,\infty)}$ such that
\begin{itemize}
    \item[(i)] $\displaystyle \sup_{x\in\R^n} \| \nabla s_t(x)\| \leq L(t) < \infty$ for all $t \geq 0$,
    \item[(ii)] $\displaystyle \int_0^T (1+L(t))\,\d t < \infty$ for all $0 < T < \infty$.
\end{itemize}
\end{corollary}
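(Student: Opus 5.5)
The plan is to reduce Corollary~\ref{cor:score_gauss_mixture} directly to Lemma~\ref{lemma:score_prop_conv} by exhibiting the mixture~\eqref{eq:A4} as a member of the class~\eqref{eq:A3}. We take the discrete probability measure
\begin{equation*}
    \mu := \sum_{k=1}^K \alpha_k \,\delta_{m_k},
\end{equation*}
where $\delta_{m_k}$ is the Dirac mass at $m_k$. Since $\alpha_k>0$ and $\sum_k \alpha_k = 1$, $\mu$ is a probability measure. Its support is the finite set $\{m_1,\dots,m_K\}$, which is compact and lies in the open ball $B(0,R)$ for any $R > \max_k \|m_k\|$.

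Next we check that the convolution reproduces $\ptarget$. For every $x\in\R^n$,
\begin{equation*}
    (\phi_{\Sigma,0}\ast\mu)(x) = \int_{\R^n}\phi_{\Sigma,0}(x-y)\,\d\mu(y) = \sum_{k=1}^K \alpha_k\,\phi_{\Sigma,0}(x-m_k) = \sum_{k=1}^K \alpha_k\,\phi_{\Sigma,m_k}(x) = \ptarget(x).
\end{equation*}
The third equality uses translation invariance of the Gaussian density, $\phi_{\Sigma,0}(x-m)=\phi_{\Sigma,m}(x)$. The covariance $\Sigma$ is symmetric positive definite and, crucially, the same for all components. Hence $\ptarget$ satisfies~\eqref{eq:A3} with this $\Sigma$ and $\mu$, and $\ptarget\in\mathcal G$.

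Lemma~\ref{lemma:score_prop_conv} then applies verbatim. The VP score $s_t$ started from $\ptarget$ is the same object in both statements, so we obtain a function $L$ satisfying (i) for all $t\ge 0$ and (ii) for all finite $T$, which is exactly the claim.

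There is no substantive obstacle here. The only point needing care is that the covariance is common to all components; with component-dependent $\Sigma_k$ the mixture would not be a single Gaussian convolution. In that case we would instead write $\Sigma_k = \Sigma_{\min} + (\Sigma_k-\Sigma_{\min})$ with $\Sigma_{\min} \preceq \Sigma_k$ and would need a non-atomic compactly supported $\mu$, which fails for Gaussian tails. The fixed-$\Sigma$ hypothesis in~\eqref{eq:A4} avoids this entirely.
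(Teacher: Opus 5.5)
Your proposal is correct and follows the paper's proof essentially verbatim: set $\mu=\sum_{k}\alpha_k\delta_{m_k}$, verify $\phi_{\Sigma,0}\ast\mu=\ptarget$ so that $\ptarget$ has the form~\eqref{eq:A3}, and invoke Lemma~\ref{lemma:score_prop_conv}. Your explicit choice $R>\max_k\|m_k\|$, which meets the requirement $\supp(\mu)\subset B(0,R)$, is a detail the paper leaves implicit, and your closing remark on component-dependent covariances is not needed for the statement.
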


Collecting Lemmas~\ref{lemma:pH_compact}--\ref{lemma:score_prop_conv} and Corollary~\ref{cor:score_gauss_mixture}, we can now translate these score bounds into approximation guarantees for the VP probability flow. Under any of the structural assumptions~\eqref{eq:A1}--\eqref{eq:A4}, the induced VP transport maps are bi-Lipschitz and their Gaussian pullbacks approximate the target distribution both on the latent and on the data side.

\begin{corollary} \label{cor:approx_prop}
Let $\ptarget$ be a pdf with $\E_{\ptarget}[\|X\|^2] < \infty$, which fulfills~\eqref{eq:A1},~\eqref{eq:A2},~\eqref{eq:A3} or~\eqref{eq:A4}. Then, for every $\varepsilon>0$ there exist $\delta>0$ and $T>\delta$ such that 
\begin{equation*}
    \varphi_\varepsilon:=\varphi_{\delta\to T}
\end{equation*}
is a bi-Lipschitz diffeomorphism, which satisfies
\begin{enumerate}
\item[\textnormal{(i)}] (Forward/latent side) 
\begin{equation*}
    \DKL{(\varphi_\varepsilon)_\# p_\delta}{\gaussian}<\varepsilon,\quad
\big\|(\varphi_\varepsilon)_\# p_\delta-\gaussian\big\|_{L^p(\R^n)}<\varepsilon,\quad
W_p \big((\varphi_\varepsilon)_\# p_\delta,\gaussian\big)<\varepsilon, \ p=1,2,
\end{equation*}
with $p_\delta$ the VP flow marginal at $t=\delta$,
\item[\textnormal{(ii)}] (Backward/target side) 
\begin{equation*}
    \big\|\ptarget-(\varphi_\varepsilon^{-1})_\#\gaussian\big\|_{L^1}<\varepsilon.
\end{equation*}
\end{enumerate}
Additionally, if $\ptarget$ fulfills~\eqref{eq:A3} or~\eqref{eq:A4}, for every $\varepsilon >0$ there exists $T > 0$ such that $\varphi_\varepsilon := \varphi_{0 \to T}$ is a bi-Lipschitz diffeomorphism with  
\begin{equation*}
    \DKL{\ptarget}{(\varphi_\varepsilon^{-1})_\# \gaussian} < \varepsilon.
\end{equation*}
\end{corollary}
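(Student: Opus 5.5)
The plan is to combine the score-regularity lemmas with Theorem~\ref{thm:score} and Corollary~\ref{cor:score}, and then turn the KL statements on the latent side into the other metrics listed. The first step is to check the hypotheses of Theorem~\ref{thm:score} case by case.

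\textbf{Hypotheses.} Under \eqref{eq:A1}, Lemma~\ref{lemma:pH_compact} supplies $L$ with (i)--(ii). Under \eqref{eq:A2}, Lemma~\ref{lemma:score_log_concave} does the same. Under \eqref{eq:A3}, Lemma~\ref{lemma:score_prop_conv} applies, and under \eqref{eq:A4}, Corollary~\ref{cor:score_gauss_mixture} applies; these two give the stronger uniform bound on $[0,T]$, which in particular implies (i)--(ii) on $[\delta,T]$. The finite second moment is assumed in the statement (and is automatic for \eqref{eq:A1}, \eqref{eq:A3}, \eqref{eq:A4}). Theorem~\ref{thm:score}~I then shows that every $\varphi_{\delta\to T}$ is a bi-Lipschitz $C^1$-diffeomorphism.

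\textbf{Latent side.} Since $(\varphi_{\delta\to T})_\# p_\delta=p_T$, the KL claim is Theorem~\ref{thm:score}~II for any fixed $\delta$ and $T$ large. The other metrics follow from KL control:
\begin{itemize}
\item $L^1$ follows from Pinsker's inequality, $\|p_T-\gaussian\|_{L^1}\le\sqrt{2\,\DKL{p_T}{\gaussian}}$.
\item $W_2$ follows from Talagrand's $T_2$ inequality for the standard Gaussian, $W_2(p_T,\gaussian)\le\sqrt{2\,\DKL{p_T}{\gaussian}}$, and $W_1\le W_2$.
\item $L^2$ needs more care. I would use the explicit representation \eqref{eq:p_t}: $p_T=\mathcal L(a(T)Y+\sigma(T)Z)$ is a Gaussian convolution with variance $\sigma^2(T)\ge 1-e^{-\delta}$. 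This gives the uniform bound $\|p_T\|_\infty\le(2\pi\sigma^2(T))^{-n/2}\le C$, so $\|p_T-\gaussian\|_{L^2}^2\le(\|p_T\|_\infty+\|\gaussian\|_\infty)\,\|p_T-\gaussian\|_{L^1}\to0$.
\end{itemize}

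\textbf{Target side.} Because $\varphi_{\delta\to T}$ is a bijection, pushforward by its inverse preserves total variation:
\[
\|\ptarget-(\varphi_{\delta\to T}^{-1})_\#\gaussian\|_{L^1}\le\|\ptarget-p_\delta\|_{L^1}+\|p_\delta-(\varphi_{\delta\to T}^{-1})_\#\gaussian\|_{L^1},
\]
and the second term equals $\|p_T-\gaussian\|_{L^1}$. The first term tends to zero as $\delta\to0$: write $p_\delta(x)=a(\delta)^{-n}(\ptarget(\cdot/a(\delta))\ast\phi)$, i.e.\ a dilation followed by mollification, and use continuity of dilations and translations in $L^1$. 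This is exactly Theorem~\ref{thm:score}~III, which I would simply invoke. The order of choices is: fix $\delta$ with the first term below $\varepsilon/2$, then choose $T$ so that all latent-side quantities, and the second term, are below $\varepsilon/2$.

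\textbf{Additional KL claim.} For \eqref{eq:A3} and \eqref{eq:A4} I would apply Corollary~\ref{cor:score}. Its hypotheses hold because $\ptarget\in\mathcal G\subset C_b^\infty$, $\ptarget>0$, and Lemma~\ref{lemma:score_prop_conv} or Corollary~\ref{cor:score_gauss_mixture} gives $L(0)<\infty$ together with integrability on $[0,T]$. Corollary~\ref{cor:score}~II then yields $T$ with $\DKL{\ptarget}{(\varphi_{0\to T}^{-1})_\#\gaussian}<\varepsilon$.

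\textbf{Main obstacle.} The step needing real work is the $L^2$ bound on the latent side: it is the only metric not controlled directly by KL, and it requires the uniform sup bound on $p_T$ coming from the Gaussian smoothing at time $\delta>0$. Everything else is assembly of earlier results.
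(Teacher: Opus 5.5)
Your proposal is correct and follows essentially the same route as the paper: verify the hypotheses via the score lemmas, obtain the latent-side KL bound from Theorem~\ref{thm:score}~II, then use Pinsker, Talagrand with $W_1\le W_2$, and the sup bound $(2\pi\sigma^2(\delta))^{-n/2}$ with the $L^1$--$L^\infty$ interpolation for $L^2$; the target side comes from Theorem~\ref{thm:score}~III and the additional KL claim from Corollary~\ref{cor:score}. Your explicit order of choices (fix $\delta$ first, then take $T$ large enough for all quantities simultaneously, using that the KL bound is independent of $\delta$) makes precise a point the paper leaves implicit when it cites the double limit in Theorem~\ref{thm:score}~III.
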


Corollary~\ref{cor:approx_prop} makes explicit how the strength of the score regularity assumption controls the expressivity guarantee. Under the weaker assumptions~\eqref{eq:A1}--\eqref{eq:A2}, the VP score is known to be uniformly Lipschitz in space on time intervals $[\delta,T]$ with $\delta>0$, so the bi-Lipschitz constants of $\varphi_{\delta\to T}$ may deteriorate as $\delta \rightarrow 0$ or $T\rightarrow\infty$, and the approximation of $\ptarget$ is obtained via an early-stopped density $p_\delta$ in $L^1$. In contrast, for the Gaussian-convolved classes~\eqref{eq:A3}–-\eqref{eq:A4} we obtain additional uniform-in-time score bounds, which yield VP flows $(\varphi_{0\to T})_{T>0}$ with bi-Lipschitz constants that are uniform on each fixed finite horizon starting at $t=0$ and allow us to upgrade the convergence on the latent side to KL-convergence to $\ptarget$ on the target side. Table~\ref{tab:vp-summary} highlights this metric hierarchy induced by score regularity.

To turn the latter results into a universal approximation statement, we show that $\mathcal G$ is large enough to approximate arbitrary target densities. The following lemma is a standard density result. We include a short proof in Appendix~\ref{sec:appendix_proofs} for completeness.

\begin{lemma}\label{lemma:G_density}
    Let $\ptarget:\R^n \to [0,\infty)$ be a pdf. Then, for every $\varepsilon > 0$ there exists $p \in \mathcal G$ such that 
    \begin{equation*}
        \|\ptarget - p\|_{L^1} < \varepsilon.
    \end{equation*}
\end{lemma}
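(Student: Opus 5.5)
The argument is a two-step approximation: first compactly supported truncation, then Gaussian mollification. Both errors are made smaller than $\varepsilon/2$ in $L^1$, and the triangle inequality then gives the claim.

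\textbf{Step 1 (truncation).} Given a pdf $\ptarget$, set $R_k:=k$ and $q_k := \ptarget \mathbbm{1}_{B(0,R_k)} / c_k$, where $c_k := \int_{B(0,R_k)} \ptarget\,\d x$. Since $c_k \to 1$ by monotone convergence, $q_k$ is a well-defined pdf for large $k$, and
\begin{equation*}
\|\ptarget - q_k\|_{L^1} \le \int_{B(0,R_k)^c}\ptarget\,\d x + \Bigl(\tfrac{1}{c_k}-1\Bigr)c_k \le 2(1-c_k) \to 0 .
\end{equation*}
We fix $k$ with $\|\ptarget-q_k\|_{L^1}<\varepsilon/2$. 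Let $\mu$ be the probability measure $\mu(\d y)=q_k(y)\,\d y$. It has compact support contained in $\overline{B(0,R_k)}\subset B(0,R_k+1)$.

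\textbf{Step 2 (mollification).} For $\sigma>0$ set $p_\sigma := \phi_{\sigma^2 I_n,0} \ast \mu = \phi_{\sigma^2 I_n,0}\ast q_k$. By construction $\Sigma=\sigma^2 I_n\succ 0$, so $p_\sigma\in\mathcal G$. The Gaussians $\phi_{\sigma^2 I_n,0}$ form an approximate identity: they are nonnegative, have unit mass, and their mass concentrates at the origin as $\sigma\to0$. The standard approximate-identity theorem in $L^1$ therefore gives $\|\phi_{\sigma^2 I_n,0}\ast q_k - q_k\|_{L^1}\to 0$ as $\sigma\downarrow 0$.

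For completeness the proof of this limit can be sketched. Minkowski's integral inequality gives
\begin{equation*}
\|\phi_{\sigma^2 I_n,0}\ast q - q\|_{L^1}\le \int \phi_{\sigma^2 I_n,0}(z)\,\|q(\cdot-z)-q\|_{L^1}\,\d z .
\end{equation*}
Split this integral into $\|z\|\le\eta$ and $\|z\|>\eta$. On the first region, continuity of translation in $L^1$ makes the integrand small. On the second, the integrand is bounded by $2\phi_{\sigma^2 I_n,0}(z)$, and the Gaussian tail mass outside $B(0,\eta)$ vanishes as $\sigma\to0$. Choosing $\sigma$ so that $\|p_\sigma - q_k\|_{L^1}<\varepsilon/2$ completes the argument.

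The only genuine subtlety is to check that $p_\sigma$ really lies in $\mathcal G$ as defined in~\eqref{eq:G}: the base measure must be a probability measure with compact support, which Step 1 ensures through the renormalization by $c_k$. Continuity of translation in $L^1$ is classical and would simply be cited, via density of $C_c(\R^n)$ in $L^1$.
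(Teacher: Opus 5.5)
Your proposal is correct and follows essentially the same route as the paper's proof. Both truncate $\ptarget$ to a ball and renormalize, with error $2(1-c_k)$ matching the paper's $2\int_{B_R^c(0)}\ptarget$. Both then convolve the compactly supported pdf with $\phi_{\sigma^2 I_n,0}$, controlling the error by continuity of translations in $L^1$ (the argument the paper gives in Lemma~\ref{lemma:L1_conv_pt}), and conclude by the triangle inequality.
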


Combining this density property with the approximation guarantees of Corollary~\ref{cor:approx_prop} for targets in $\mathcal G$ yields the following universal expressivity result for VP transport maps.

\begin{corollary}\label{cor:universal_vp}
Let $\ptarget:\R^n\to[0,\infty)$ be a pdf. Then for every $\varepsilon>0$ there exist $p \in \mathcal G$ and $T>0$ such that, for $(\varphi_{0\to t})_{t\in[0,T]}$ denoting the probability flow~\eqref{eq:VP-flow} with initial law $p$, the following hold:
\begin{itemize}
    \item $\varphi_{0\to T}:\R^n\to\R^n$ is a bi-Lipschitz $C^1$-diffeomorphism,
    \item the associated scores $s_t(x) = \nabla_x \log p_t(x)$ of the VP marginals $(p_t)_{t\in[0,T]}$ are uniformly Lipschitz on $[0,T] \times \R^n$, i.e.,
          \begin{equation*}
              \sup_{t\in[0,T]} \sup_{x\in\R^n} \|\nabla_x s_t(x)\|
              < \infty,
          \end{equation*}
    \item the pullback of the standard Gaussian by $\varphi_\varepsilon := \varphi_{0\to T}$ approximates $\ptarget$ in $L^1$,
          \begin{equation*}
             \big\|\ptarget - (\varphi_\varepsilon^{-1})_\# \gaussian\big\|_{L^1} < \varepsilon. 
          \end{equation*}
\end{itemize}
\end{corollary}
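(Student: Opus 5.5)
The plan is a triangle-inequality argument in two stages: first approximate the target by an element of $\mathcal G$, then approximate that element by a Gaussian pullback along its VP flow. Fix $\varepsilon>0$. By Lemma~\ref{lemma:G_density} we choose $p=\phi_{\Sigma,0}\ast\mu\in\mathcal G$ with $\supp(\mu)\subset B(0,R)$ and $\|\ptarget-p\|_{L^1}<\varepsilon/2$. We then regard $p$ as the initial law of the VP process~\eqref{eq:VP-SDE_fixedbeta}, denote its marginals by $p_t$ and scores by $s_t$, and work with the probability flow~\eqref{eq:VP-flow} started at $p$. Since $p$ is a Gaussian convolution of a compactly supported measure, it has finite second moment, lies in $C_b^\infty(\R^n)$ and is strictly positive. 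All hypotheses placed on the initial law in Theorem~\ref{thm:score} and Corollary~\ref{cor:score} are therefore checkable for $p$.

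Next we apply Lemma~\ref{lemma:score_prop_conv} to $p$. This gives a function $L$ with $\sup_x\|\nabla s_t(x)\|\le L(t)$ for all $t\ge 0$ and $\int_0^T(1+L(t))\,\d t<\infty$. To get the second bullet of the statement, we need $\sup_{t\in[0,T]}L(t)<\infty$, not just integrability. Here the proof of Lemma~\ref{lemma:score_prop_conv} (or the table entry $M_T$) is invoked. Alternatively, we can read it off directly from~\eqref{eq_stephanovitch_score}. The key observation is that $p_t$ is itself the convolution of $\mathcal N(0,a^2\Sigma+\sigma^2 I)$ with the pushforward of $\mu$ under $y\mapsto a(t)y$, which is supported in $B(0,R)$. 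Its score Hessian is then $-(a^2\Sigma+\sigma^2I)^{-1}$ plus a posterior covariance term, conjugated by that matrix, of a law supported in $B(0,R)$. Both pieces are bounded uniformly in $t\in[0,T]$, because $a^2\Sigma+\sigma^2 I\succeq \min(\lambda_{\min}(\Sigma),1)I$. This gives the uniform Lipschitz bound on $[0,T]\times\R^n$ for every $T$.

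With these bounds the hypotheses of Corollary~\ref{cor:score} hold for $p$. Part~I of that corollary yields that $\varphi_{0\to T}$ is a bi-Lipschitz $C^1$-diffeomorphism for every $T>0$. Part~II gives $\|p-(\varphi_{0\to T}^{-1})_\#\gaussian\|_{L^1}\to0$ as $T\to\infty$. The same conclusion also follows from KL convergence via Pinsker, or directly from Corollary~\ref{cor:approx_prop} with~\eqref{eq:A3}. We choose $T$ so that this distance is below $\varepsilon/2$. The triangle inequality then gives $\|\ptarget-(\varphi_\varepsilon^{-1})_\#\gaussian\|_{L^1}<\varepsilon$.

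The only step needing care is the second bullet, the uniform-in-time score bound including $t=0$. Lemma~\ref{lemma:score_prop_conv} as stated only gives pointwise finiteness of $L(t)$ together with integrability, so boundedness of $L$ on $[0,T]$ must be extracted from the explicit Gaussian-convolution structure described above. Everything else is a direct chaining of earlier results. Note that $T$ depends on $p$, hence on $\varepsilon$, which is why the Lipschitz constants are non-uniform.
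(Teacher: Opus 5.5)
Your proposal is correct and follows the paper's proof essentially step for step: an $\varepsilon/2$-approximant $p\in\mathcal G$ from Lemma~\ref{lemma:G_density}, the $T$-uniform score bound $M_T$ from the proof of Lemma~\ref{lemma:score_prop_conv} to verify the hypotheses of Corollary~\ref{cor:score}, then Part~II of that corollary and the triangle inequality. The uniform bound comes from your convolution-over-$\mu$ argument, which is exactly the paper's, not from reading off \eqref{eq_stephanovitch_score} directly, whose coefficients blow up as $t\downarrow 0$; and your remark that $a^2\Sigma+\sigma^2 I\succeq\min(\lambda_{\min}(\Sigma),1)I$ makes the bound uniform over all $t\ge 0$, slightly sharpening the $T$-dependent $M_T$.
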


The universal approximation result in Corollary~\ref{cor:universal_vp} is an existence result for VP flows starting from smoothed approximants $p\in \mathcal{G}$, not necessarily the actual data law $\ptarget$. Corollary~\ref{cor:universal_vp} therefore highlights the central role of the class $\mathcal G$. For probability densities $p \in \mathcal G$ we obtain strong, uniform-in-time score bounds and hence bi-Lipschitz VP flows with KL- and $L^1$ approximation guarantees as in Corollary~\ref{cor:approx_prop}. Lemma~\ref{lemma:G_density} shows that $\mathcal G$ is rich enough to approximate any target in $L^1$, and Corollary~\ref{cor:universal_vp} transfers the regularity of $\mathcal G$ to arbitrary targets via such approximations: up to arbitrary $L^1$-accuracy, every $\ptarget$ can, for each $\varepsilon >0$, be realized as a Gaussian pullback of a VP flow with uniformly Lipschitz scores on a finite time horizon. These results show that bi-Lipschitz constraints, which are desirable for stability and invertibility, do not fundamentally limit the expressive power of continuous-time VP flows. 

\section{Connection to Learned Score Functions and Normalizing Flows}\label{sec:connection_learned}

In practice, score-based diffusion models do not have access to the exact VP score $(s_t)_{t \geq 0}$, but to a learned approximation obtained from data. Concretely, a time-dependent score model $s_t^{(m)}$ with $m\in \N$ is trained by minimizing a denoising score matching objective
\begin{equation*}
\min_{(s_t^{(m)})_{t\in[0,T]}} \int_0^T w(t)\,\mathbb{E}_{X_t\sim p_t} \bigl[\|s_t^{(m)}(X_t)-s_t(X_t)\|^2\bigr]\,\d t .
\end{equation*}
for a suitable time weighting function $w(t)$. This naturally raises the question whether the expressivity guarantees for VP probability flows established in Section~\ref{sec:expressivity} persist when the true score is replaced by such a trained model. Using similar assumptions as in previous works, our first result shows that if a sequence of learned scores $(s_t^{(m)})_{m\in\mathbb{N}}$ achieves small score matching error on $[\delta,T]$ in an $L^2$-approximation sense, then the corresponding learned VP flows still generate bi-Lipschitz transport maps whose Gaussian pullbacks approximate the target $\ptarget$ in $L^1$~(see Appendix~\ref{sec:appendix_learned} for the proof).

\begin{theorem}\label{thm:learned_score}
In addition to the assumptions of Theorem~\ref{thm:score}, let $s_t^{(m)}:\R^n \to \R^n$ be a (learned) score for each $m\in\N, \ t\in[\delta,T]$ and define
\begin{equation}\label{eq:learned_flow}
    v_t^{(m)}(x) := -\tfrac12 x - \tfrac12 s_t^{(m)}(x),
\end{equation}
with corresponding ODE flow $\varphi^{(m)}_{s\to t}$.

Assume that for each $m\in\N$ and all $0<\delta<T<\infty$ the following properties hold.
\begin{itemize}
  \item[(i)] The map $(t,x)\mapsto s_t^{(m)}(x)$ is continuous and for ${L^{(m)}:[0,\infty) \rightarrow [0,\infty)}$ we have
  \begin{equation*}
      \sup_{x\in\R^n} \|\nabla_x s_t^{(m)}(x)\| \leq L^{(m)}(t) <\infty
    \quad\text{for all } t>0, \qquad \int_\delta^T \bigl(1+ L^{(m)}(t)\bigr)\,\d t < \infty.
  \end{equation*}
  \item[(ii)] Consider the reverse-time SDE on $[\delta,T]$
  \begin{equation}\label{eq:rsde_learned}
      \d X_t^{(m)}  = \left[-\tfrac12 X_t^{(m)} - s_t^{(m)}(X_t^{(m)})\right]\,\d t + \d W_t, \qquad t\in[\delta,T].
  \end{equation}
  Let $Q^{(m)}$ be its path law on $C([\delta,T];\R^n)$ with terminal law $X_T^{(m)} \sim \gaussian$ and denote its marginal densities by $q_t^{(m)}$.
  
  Let $P$ denote the path law of the corresponding reverse-time SDE with true score $s_t$, marginals $p_t$ and terminal law $X_T \sim p_T$.

  Let $\overline Q^{(m)}$ denote the path law of~\eqref{eq:rsde_learned} with terminal law $\overline X_T^{(m)}\sim p_T$. Assume a Novikov condition
  \begin{equation*}
      \E_{\overline Q^{(m)}}\left[\exp\left(\frac12\int_\delta^T  \big\|s_t^{(m)}(\overline X_t^{(m)}) - s_t(\overline X_t^{(m)})\big\|^2\,\d t\right)\right] < \infty.
  \end{equation*}

  \item[(iii)] Let $\; \mathcal E_{\delta,T}(s^{(m)}):= \int_\delta^T \int_{\R^n} p_t(x)\,\|s_t(x)-s_t^{(m)}(x)\|^2\,\d x\,\d t \xrightarrow[m\to\infty]{} 0.$
  \item[(iv)] For $t\in[\delta,T]$ let $s_t^{(m)}(x) = \nabla_x \log q_t^{(m)}(x)$ for $q_t^{(m)}$-a.e. and $q_t^{(m)} \in C^1$, $q_t^{(m)} > 0$. 
\end{itemize}
Then $\varphi^{(m)}_{\delta\to T}$ is bi-Lipschitz for all $m\in\N$ and, for every
$\varepsilon>0$, there exist $\delta>0$, $T>\delta$ and $m\in\N$ such that
\begin{equation*}
  \big\|\ptarget - ((\varphi^{(m)}_{\delta\to T})^{-1})_\#\gaussian\big\|_{L^1} < \varepsilon.
\end{equation*}
\end{theorem}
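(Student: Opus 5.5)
The plan has two parts: bi-Lipschitz regularity of the learned flow, and an $L^1$ triangle-inequality argument that combines Theorem~\ref{thm:score} with a Girsanov bound.

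\textbf{Bi-Lipschitz property.} By assumption (i), the learned field $v_t^{(m)}$ in \eqref{eq:learned_flow} is continuous in $(t,x)$ and globally Lipschitz in $x$ with constant $\tfrac12(1+L^{(m)}(t))$, which is integrable on $[\delta,T]$. This is the same situation as in Part~I of Theorem~\ref{thm:score}. Carathéodory--Picard theory gives a unique global flow. Grönwall applied to $\varphi^{(m)}_{\delta\to t}(x)-\varphi^{(m)}_{\delta\to t}(y)$, forward and backward in time, gives
\[
\Lip\bigl(\varphi^{(m)}_{\delta\to T}\bigr),\ \Lip\bigl((\varphi^{(m)}_{\delta\to T})^{-1}\bigr)\le \exp\Bigl(\tfrac12\int_\delta^T(1+L^{(m)}(t))\,\d t\Bigr).
\]
The $C^1$ regularity follows from the variational equation.

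\textbf{Approximation.} Write $\hat p^{(m)}:=((\varphi^{(m)}_{\delta\to T})^{-1})_\#\gaussian$. The first step is to identify $\hat p^{(m)}$ with $q_\delta^{(m)}$. Assumption (iv) gives $s_t^{(m)}=\nabla\log q_t^{(m)}$. Then the Fokker--Planck equation of \eqref{eq:rsde_learned} for $q_t^{(m)}$ coincides with the continuity equation of $v_t^{(m)}$, since the Laplacian term equals $\nabla\cdot(q\nabla\log q)$. By uniqueness for the continuity equation with a Lipschitz field, the learned ODE transports $q_\delta^{(m)}$ to $q_T^{(m)}=\gaussian$, so $\hat p^{(m)}=q_\delta^{(m)}$.

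Next split
\[
\|\ptarget-\hat p^{(m)}\|_{L^1}\le \|\ptarget-p_\delta\|_{L^1}+\|p_\delta-\bar q_\delta^{(m)}\|_{L^1}+\|\bar q_\delta^{(m)}-q_\delta^{(m)}\|_{L^1},
\]
where $\bar q^{(m)}$ are the marginals of $\overline Q^{(m)}$. The three terms are handled as follows.
\begin{itemize}
\item The first term tends to $0$ as $\delta\to0$, since $p_\delta\to\ptarget$ in $L^1$ by continuity of Gaussian smoothing together with $a(\delta)\to1$. This was already used in Theorem~\ref{thm:score}.
\item For the second term, use the data-processing inequality on path laws, then Pinsker, then Girsanov (justified by the Novikov condition (ii)):
\[
\|p_\delta-\bar q^{(m)}_\delta\|_{L^1}\le\sqrt{2\,\DKL{P}{\overline Q^{(m)}}}\le C\sqrt{\mathcal E_{\delta,T}(s^{(m)})}.
\]
The constant $C$ depends on the diffusion normalization. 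This tends to $0$ as $m\to\infty$ by (iii).
\item For the third term, the two SDEs share the drift and differ only in terminal law. By the Markov property and data processing, $\|\bar q_\delta^{(m)}-q_\delta^{(m)}\|_{L^1}\le\|p_T-\gaussian\|_{L^1}\le\sqrt{2\DKL{p_T}{\gaussian}}$. This tends to $0$ as $T\to\infty$ by Part~II of Theorem~\ref{thm:score}, uniformly in $m$.
\end{itemize}

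\textbf{Choosing parameters and the main obstacle.} Fix $\varepsilon$. Choose $\delta$ to make the first term at most $\varepsilon/3$, then $T$ to make the third term at most $\varepsilon/3$, then $m$ to make the second term at most $\varepsilon/3$. Since (iii) holds for every fixed $\delta<T$, this order is admissible.

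I expect the main obstacle to be the identification $\hat p^{(m)}=q^{(m)}_\delta$. Assumption (iv) holds only $q_t^{(m)}$-a.e., and uniqueness of the continuity equation requires some integrability of $q_t^{(m)}$ and of the field. My first attempt would be to show that $\tilde q_t:=(\varphi^{(m)}_{t\to T})^{-1}{}_\#\gaussian$ and $q_t^{(m)}$ both solve the same linear continuity equation with a Lipschitz field. I would then invoke the standard uniqueness result (Ambrosio--Gigli--Savaré) for narrowly continuous probability-measure solutions.
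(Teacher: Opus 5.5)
Your proof is correct, but it is organized differently from the paper's. The paper compares the learned pullback with the \emph{true} probability-flow pullback $(\varphi_{\delta\to T}^{-1})_\#\gaussian$ and controls that distance by Theorem~\ref{thm:score}~III. It then uses bijectivity of $\varphi_{\delta\to T}$ to reduce the rest to $\|\gaussian-p_T\|_{L^1}+\|p_\delta-q^{(m)}_\delta\|_{L^1}$, and bounds the last term in one stroke via the chain rule $\DKL{P}{Q^{(m)}}=\DKL{P}{\overline Q^{(m)}}+\DKL{p_T}{\gaussian}$, Girsanov, data processing and Pinsker.

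You never use the true flow map. Instead you insert the intermediate marginal $\bar q^{(m)}_\delta$ and handle the terminal mismatch by TV contraction of the common backward Markov kernel rather than by the KL chain rule. With unit diffusion your constant is $C=1$, as in the paper's Girsanov lemma. Your route gives the cleaner bound $\|\ptarget-p_\delta\|_{L^1}+\sqrt{\mathcal E_{\delta,T}(s^{(m)})}+\sqrt{2\,\DKL{p_T}{\gaussian}}$. There the terminal error enters once; in the paper it enters through Theorem~\ref{thm:score}~III, through $\|\gaussian-p_T\|_{L^1}$, and again inside the KL bound. Your approximation step also never uses the bi-Lipschitz property of the true flow. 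The paper's route, in exchange, gives a genuine KL estimate $\DKL{p_\delta}{q^{(m)}_\delta}\le\tfrac12\mathcal E_{\delta,T}(s^{(m)})+\DKL{p_T}{\gaussian}$ for the learned early-stopped marginal, which your TV splitting does not provide. It also reuses Theorem~\ref{thm:score} as a black box.

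On the step you flag as the main obstacle: the paper simply asserts $((\varphi^{(m)}_{\delta\to T})^{-1})_\#\gaussian=q^{(m)}_\delta$ ``by construction'' from (iv). Your continuity-equation argument therefore supplies a justification the paper omits, and it goes through without difficulty:
\begin{itemize}
\item Since $q^{(m)}_t>0$, ``$q^{(m)}_t$-a.e.'' means Lebesgue-a.e. Both $s^{(m)}_t$ (by (i)) and $\nabla q^{(m)}_t/q^{(m)}_t$ (since $q^{(m)}_t\in C^1$, $q^{(m)}_t>0$) are continuous, so they agree everywhere.
\item Global Lipschitz continuity of $v^{(m)}_t$ gives linear growth.
\item $q^{(m)}_t$ has finite first moments, as the marginal of a Lipschitz-drift SDE started at time $T$ from a Gaussian.
\end{itemize}
This is exactly the integrability needed for the Ambrosio--Gigli--Savar\'e uniqueness result. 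Your aside that $C^1$ regularity follows from the variational equation would need continuity of $\nabla_x s^{(m)}_t$ in $(t,x)$, which (i) does not give. The statement only claims bi-Lipschitz, so this is immaterial.
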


\begin{remark}
Assumption~(ii) is a standard Girsanov-Novikov integrability condition, ensuring absolute continuity between the path laws induced by the learned and true reverse-time dynamics. Such argument is classical in the analysis of score-based diffusion models with learned scores, but Novikov's condition need not hold in full generality~\cite{Chen2023b, Chen2023}. In particular, \cite{Chen2023} explicitly note this issue and verify Novikov's condition in their setting. Theorem~\ref{thm:learned_score} combines this classical condition with global score regularity to obtain \emph{deterministic, bi-Lipschitz transport maps} induced by \emph{learned} probability flow ODEs, together with an explicit $L^1$ approximation guarantee. We take the corresponding integrability requirement as an assumption and do not verify it here.

Note that Assumption~(iv) is a restrictive consistency condition implying that the learned probability flow ODE shares the same marginal densities $q_t^{(m)}$ as the learned reverse-time SDE. We use this assumption to conclude that an $L^2$-score approximation error can yield an explicit $L^1$-guarantee for the deterministic ODE sampler, leaving approximate variants of \emph{(iv)} and training considerations to future work.
\end{remark}

%Comment: one can show that fpr p_H heavy tailed, all p_t will also be heavy tailed. one would think that this is a contradictoin to the fact that gaussian through bi-lipschitz results close to p_t and is sub-Gaussian. but this is not contradictory, since we plug in gaussian only so an approximate transport not the true transport. so in the target we might still be able to approximate a heavily tailed pdf even though we only obtain sub-gaussian p_t from pulling back the gaussian. 

From a normalizing flow viewpoint, each VP flow $\varphi^{(m)}_{\delta\to T}$ defines a bi-Lipschitz normalizing flow. To complete this section, we now treat the learned VP transport maps as a teacher map and subsequently use a bi-Lipschitz INN to approximate this transport. In this two-step procedure, Corollary~\ref{cor:nf_learner} shows that, if the student flow matches the teacher transport sufficiently well on $\ptarget$, then its Gaussian pullback remains an $L^1$ approximation of the target distribution.

\begin{corollary}\label{cor:nf_learner}
    Let $\varphi_{\delta \to T}^{(m)}$ be the transport map of the VP flow for fixed $m \in \N$, $T > 0$ as in Theorem~\ref{thm:learned_score} such that 
    \begin{equation*}
        \|\ptarget - ((\varphi_{\delta\to T}^{(m)})^{-1})_\# \gaussian \|_{L^1} < \varepsilon
    \end{equation*}
    and let $\hat\varphi:\R^n \rightarrow \R^n$ be a bi-Lipschitz INN. Suppose 
    \begin{equation*}
        \| \hat\varphi_\# \ptarget - (\varphi_{\delta\to T}^{(m)})_\# \ptarget\|_{L^1} < \eta
    \end{equation*}
    for $\eta > 0$. Then
    \begin{equation*}
        \|\ptarget - (\hat\varphi^{-1})_\# \gaussian\|_{L^1} < \eta + \varepsilon.
    \end{equation*}
\end{corollary}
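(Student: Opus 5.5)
The estimate follows from the triangle inequality, with the teacher map $\varphi:=\varphi^{(m)}_{\delta\to T}$ as an intermediate comparison, combined with the invariance of the $L^1$ distance under pushforward by a bijection. Write $\hat q := (\hat\varphi^{-1})_\#\gaussian$ and $q:=(\varphi^{-1})_\#\gaussian$. Then
\begin{equation*}
\|\ptarget - \hat q\|_{L^1} \leq \|\ptarget - q\|_{L^1} + \|q - \hat q\|_{L^1},
\end{equation*}
and the first term is $<\varepsilon$ by hypothesis. It remains to show that $\|q-\hat q\|_{L^1} < \eta$.

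The key tool is that pushforward by a measurable bijection $\psi$ with measurable inverse preserves total variation. Concretely, $\|\psi_\#\mu - \psi_\#\nu\|_{L^1} = \|\mu-\nu\|_{L^1}$ (with $L^1$ norm of densities equal to twice the TV norm). This holds because $A\mapsto \psi^{-1}(A)$ is a bijection of Borel sets. Both $\hat\varphi$ and $\varphi$ are bi-Lipschitz homeomorphisms, since $\varphi$ is bi-Lipschitz by Theorem~\ref{thm:learned_score}. Hence both maps preserve absolute continuity, and all measures involved have densities.

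The main difficulty is that $q-\hat q$ compares pullbacks of the \emph{same} measure $\gaussian$ under \emph{different} maps, while the hypothesis compares pushforwards of $\ptarget$. I would bridge this as follows. Since $\varphi_\#q=\gaussian$, applying $\hat\varphi^{-1}$ gives $q=\hat\varphi^{-1}_\#(\hat\varphi_\#q)$ and $\hat q=\hat\varphi^{-1}_\#\gaussian = \hat\varphi^{-1}_\#(\varphi_\# q)$. Invariance under $\hat\varphi^{-1}$ then yields
\begin{equation*}
\|q-\hat q\|_{L^1} = \|\hat\varphi_\# q - \varphi_\# q\|_{L^1}.
\end{equation*}
Next I would replace $q$ by $\ptarget$ using the triangle inequality:
\begin{equation*}
\|\hat\varphi_\# q - \varphi_\# q\|_{L^1} \le \|\hat\varphi_\# q-\hat\varphi_\#\ptarget\|_{L^1} + \|\hat\varphi_\#\ptarget-\varphi_\#\ptarget\|_{L^1} + \|\varphi_\#\ptarget-\varphi_\# q\|_{L^1}.
\end{equation*}
The middle term is $<\eta$ by assumption. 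Each outer term equals $\|q-\ptarget\|_{L^1}<\varepsilon$ by invariance.

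This chain only gives the weaker bound $\eta+3\varepsilon$. To reach exactly $\eta+\varepsilon$, I would instead try the direct decomposition
\begin{equation*}
\|\ptarget-\hat q\|_{L^1} = \|\hat\varphi_\#\ptarget - \gaussian\|_{L^1} \le \|\hat\varphi_\#\ptarget-\varphi_\#\ptarget\|_{L^1} + \|\varphi_\#\ptarget-\gaussian\|_{L^1}.
\end{equation*}
The equality uses invariance under $\hat\varphi$ together with $\hat\varphi_\#\hat q=\gaussian$. In the last term, $\|\varphi_\#\ptarget-\gaussian\|_{L^1}=\|\ptarget-q\|_{L^1}<\varepsilon$, again by invariance under $\varphi$. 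This gives $<\eta+\varepsilon$ directly and is the argument I would write up.

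The only genuinely technical point is the invariance lemma. It follows from $\|\mu-\nu\|_{L^1}=2\sup_A|\mu(A)-\nu(A)|$ and the bijectivity of $\psi$ on Borel sets, which holds because $\psi$ is a homeomorphism.
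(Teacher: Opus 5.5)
Your final argument is correct and matches the paper's one-line proof (invariance of the $L^1$ norm under bijective pushforwards plus the triangle inequality). You transfer $\|\ptarget-(\hat\varphi^{-1})_\#\gaussian\|_{L^1}$ to $\|\hat\varphi_\#\ptarget-\gaussian\|_{L^1}$, split it through $(\varphi^{(m)}_{\delta\to T})_\#\ptarget$, and pull the remaining term back through $\varphi^{(m)}_{\delta\to T}$ to get $<\varepsilon$; the earlier $\eta+3\varepsilon$ detour can simply be omitted.
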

The proof follows immediately from invariance of the $L^1$-norm under bijective pushforwards and the triangle inequality. 

\begin{remark}
As in the second part of Corollary~\ref{cor:approx_prop}, for target classes that additionally satisfy the assumptions of Theorem~\ref{thm:learned_score} at $t=0$, the statements of Theorem~\ref{thm:learned_score} and Corollary~\ref{cor:nf_learner} can be extended to yield bi-Lipschitz regularity and approximation guarantees for the transports $\varphi^{(m)}_{0\to T}$ for all $m\in\N$ and $T>0$.
\end{remark}

\section{Illustrative Numerical Examples}\label{sec:num_examples}

Our results from Section~\ref{sec:expressivity} aim to establish a conceptual link between the two approach types.
In this section, we illustrate the approximation behaviour of bi-Lipschitz end-to-end normalizing flows and score-based diffusion models on a collection of low-dimensional target distributions in practice. 
The experiments are designed to compare the qualitative approximation behaviour of both approaches by systematically studying how imposed regularity constraints affect learned transport maps and the resulting density approximations. We therefore view the following experiments as qualitative illustrations of the theory rather than as a benchmark comparison.

The comparison in this section serves two complementary purposes for selected target densities corresponding to the structural classes~\eqref{eq:A1}--\eqref{eq:A4}: First, it illustrates how the expressivity guarantees derived for VP probability flows translate into practical approximation behaviour of the SDM itself, but also when replacing the continuous-time diffusion transport by an explicitly parametrized normalizing flow. Second, it highlights how architectural regularity constraints on the end-to-end normalizing flow influence transport geometry and density approximation quality. 

We employ invertible residual networks ~(iResNets) as end-to-end normalizing flows since they combine empirical expressivity with explicit architectural bi-Lipschitz guarantees, matching the assumptions of our theory. For SDMs based on the VP-SDE~\eqref{eq:VP-SDE_fixedbeta}, regularity arises implicitly through the learned score field and its associated probability flow.

\textbf{iResNet.} We train iResNets $\varphi_{\theta,L}:\R^n \longrightarrow \R^n$ with network parameters $\theta \in \Theta$ and Lipschitz constraints depending on $L < 1$. An iResNet is constructed as a composition of invertible residual blocks
\begin{equation*}
    \varphi_{\theta,L}^{(i)} = \Id + f_\theta^{(i)}, \quad \Lip(f_\theta^{(i)}) \leq L < 1,
\end{equation*}
so that the end-to-end map
\begin{equation*}
    \varphi_{\theta,L} = \varphi_{\theta,L}^{(k)} \circ \cdots \circ \varphi_{\theta,L}^{(1)}
\end{equation*}
is invertible and defines a bi-Lipschitz normalizing flow. The inverse $x = (\varphi_{\theta,L}^{(i)})^{-1}(y)$ of each block is computed using the convergent fixed-point iteration
\begin{equation*}
    x_{m+1} = y - f_\theta^{(i)}(x_m).
\end{equation*}
For a network with $k$ residual blocks, the resulting function is bi-Lipschitz, with global Lipschitz bounds 
\begin{equation*}
    \Lip(\varphi_{\theta,L}) \leq (1+L)^k \quad \text{and} \quad \Lip(\varphi_{\theta,L}^{-1}) \leq \frac{1}{(1-L)^k}.
\end{equation*}
We train the iResNet via maximum likelihood estimation, i.e. with the training objective
\begin{equation*}
    \min_\theta\mathcal{L}_{\mathrm{MLE}}(\theta) =
    - \mathbb{E}_{x \sim \ptarget}
    \Bigl[ \log \gaussian \bigl(\varphi_{\theta,L}(x)\bigr) + \log \bigl| \det J_{\varphi_{\theta,L}}(x) \bigr| \Bigr].
\end{equation*}

\textbf{Score-based Diffusion Model.} We train a score network $s_{\theta,t}(x)\approx \nabla_x\log p_t(x)$ which matches the VP process with constant diffusion rate $\beta\equiv 1$, cf.~\eqref{eq:VP-SDE_fixedbeta}--\eqref{eq:a_and_sigma}. Training is performed by denoising score matching: sampling $x\sim \ptarget$, $t\sim\mathcal U(0,T)$ and $z\sim\gaussian$ for $T > 0$, we construct the perturbed inputs
\begin{equation*}
    x_t = a(t)\,x + \sigma(t)\,z \qquad\text{with } a(t), \, \sigma(t)\text{ as in }\eqref{eq:a_and_sigma},
\end{equation*}
and minimize the weighted objective
\begin{equation*}
    \min_\theta \mathcal L_{\mathrm{DSM}}(\theta)=\E_{x,t,z}\Big[w(t)\,\big\|s_{\theta,t}(x_t)+\tfrac{1}{\sigma(t)}z\big\|^2\Big]\qquad \text{with} \quad w(t)=\sigma^2(t).
\end{equation*}
To extract deterministic transport maps from the learned score, we solve the VP probability flow ODE~\eqref{eq:VP-flow} with the learned score, i.e. $\dot x_t = v_{\theta,t}(x_t)$, where $v_{\theta,t}(x)=-\tfrac12 x-\tfrac12 s_{\theta,t}(x)$. The forward transport is defined by
\begin{equation*}
    z_T = \varphi_{\delta\to T}(x_0),
\end{equation*}
obtained by numerically integrating the probability flow ODE from $t=\delta$ to $t=T$ starting at $x_0$, where $\delta>0$ is a small numerical cutoff. Conversely, the inverse transport is obtained by integrating the same ODE backward in time from $z_T\sim \gaussian$ to $t=\delta$.
Numerical integration is carried out with an adaptive Runge-Kutta solver.

We note that a fully fair numerical comparison between iResNets and score-based diffusion models is inherently challenging. An iResNet defines an explicit transport map whose forward and inverse evaluations are obtained in closed form via network evaluation. In contrast, a score-based diffusion model induces an implicit transport given by a time-dependent vector field. The required numerical integration for transport evaluation thus introduces an additional source of approximation error that is not present in the iResNet setting.

\subsection{One-Dimensional Experiments}~\label{subsec:num_1D}

We demonstrate the approximation capabilities of the trained networks by selecting four one-dimensional pdfs:

\begin{itemize}
    \item Triangular density: $\ p_\Delta(x):=(1-|x|)\,\mathbf 1_{[-1,1]}(x)$, $x \in \R$.

    \item Two-interval uniform density: $\ p_{2c}(x)=\mathbf 1_{\left[-1,-\tfrac12\right]\cup\left[\tfrac12,1\right]}(x)$, $x \in \R$.
    \item Cubic Gaussian pullback: Let $g(x) := 3x^3 + x.$
    We define the target density as the pullback of $\gaussian$ under $g$, i.e.
    \begin{equation*}
        p_{\mathrm{cubic}}(x):=
        \gaussian\bigl(g(x)\bigr)\,|g'(x)|=\gaussian(3x^3+x)\,(9x^2+1),
        \qquad x\in\mathbb R.
    \end{equation*}
    \item Gaussian mixture model:
    \begin{equation*}
        p_{\mathrm{gmm}}(x):= \sum_{j=1}^3 \alpha_j\,\phi_{\sigma_j^2,0}(x-\mu_j),
        \qquad x\in\mathbb R,
    \end{equation*}
    with parameters $\mu_j=-3,-1,1$, $\sigma_j=0.2,0.35,0.25$, $\alpha_j=0.2,0.5,0.3$ for $j=1,2,3$.
\end{itemize}
For each $\ptarget \in \{p_\Delta,p_{2c},p_{\mathrm{cubic}},p_{\mathrm{gmm}}\}$ there exists a canonical monotone transport map $g_\ast : \mathbb{R} \to \mathbb{R}$ such that $\gaussian = (g_\ast)_\# \ptarget $. 
Per construction, for $\ptarget = p_\mathrm{cubic}$ we can immediately identify $g_\ast = g$. We aim to approximate the corresponding transport $g_\ast$ for each $\ptarget$ and therefore construct one precomputed data set per target density by drawing i.i.d.\ samples $x^{(i)}\sim \ptarget$ for $i= 1,\hdots,N_\mathrm{train}$, that are kept fixed throughout training. Concretely, we use $N_{\mathrm{train}}=60\,000$ training samples and $N_{\mathrm{val}}=10\,000$ validation samples.

For the iResNet we choose an architecture with $k=5$ residual blocks and a three-layer fully connected network of width $64$ for each $f_\theta^{(i)}$, $i=1,\dots,k$, equipped with an Exponential Linear Unit~(ELU) activation, as recommended in~\cite{Behrmann2019}, after each hidden layer. After each residual block, we apply an ActNorm layer to stabilize training. This results in approximately $43\,K$ trainable parameters. We trained three instances of this architecture corresponding to different Lipschitz bounds $L\in\{0.25,\,0.75,\,0.95\}$. Each linear layer within $f_\theta^{(i)}$ was constrained to have a spectral norm at most $L^{1/3}$, enforced via spectral weight normalization. This guarantees $\Lip(f_\theta^{(i)})\leq L$ and follows the approach of~\cite{Miyato2018} combined with the invertible residual network construction of~\cite{Behrmann2019}. 

For the score-based diffusion model we selected a time horizon of $T=3$ and used a residual MLP architecture with three residual blocks, each consisting of two hidden layers of width $64$ with SiLU activations. Time conditioning is implemented via random Fourier feature embeddings~\cite{Tancik2020}, 
followed by a two-layer fully connected time-embedding network with SiLU activations, whose output conditions each residual block. The resulting score network has approximately $46\,K$ trainable parameters. In contrast to the iResNet architecture, no architectural Lipschitz constraint is imposed on the score network.

All models were trained using the Adam optimizer with comparable learning rates and batch sizes, and trained until convergence for a fixed number of epochs. Additional implementation details are kept identical across experiments to ensure a fair comparison between architectures.

\begin{figure}[!htp]
  \centering
  \begin{subfigure}{\linewidth}
    \centering
    \includegraphics[width=\linewidth]{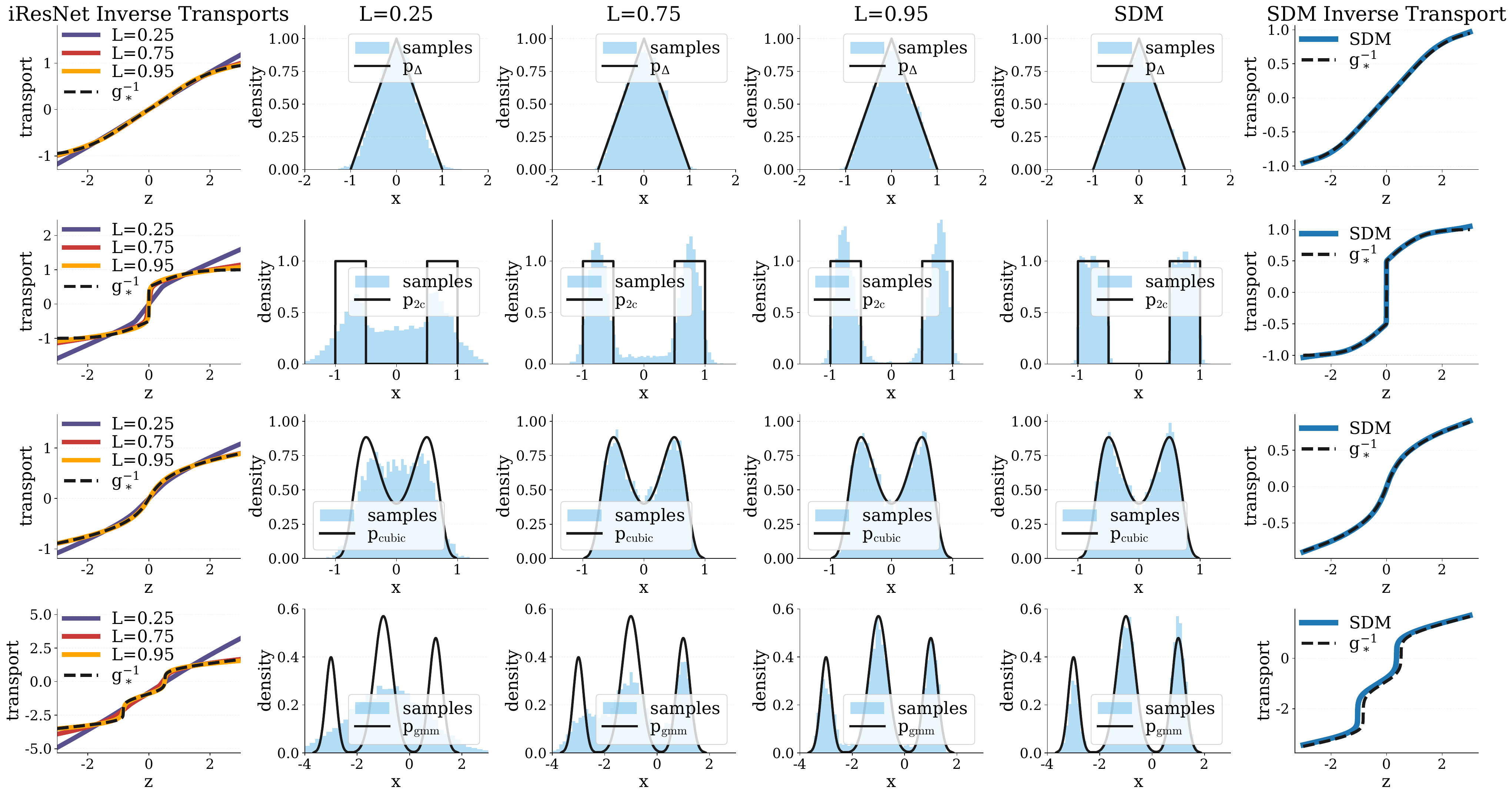}
    \caption{\emph{(left column)} Inverse transports learned by iResNets compared against the ground truth inverse transport $g_\ast^{-1}$ with $\ptarget = (g_\ast^{-1})_\# \gaussian $. \emph{(columns 2--4)} Learned Gaussian pullbacks through the inverse networks. \emph{(right column)} Inverse transport learned by SDM.}
    \label{fig:learned_forward}
  \end{subfigure}

  \vspace{1.3em}

  \begin{subfigure}{\linewidth}
    \centering
    \includegraphics[width=\linewidth]{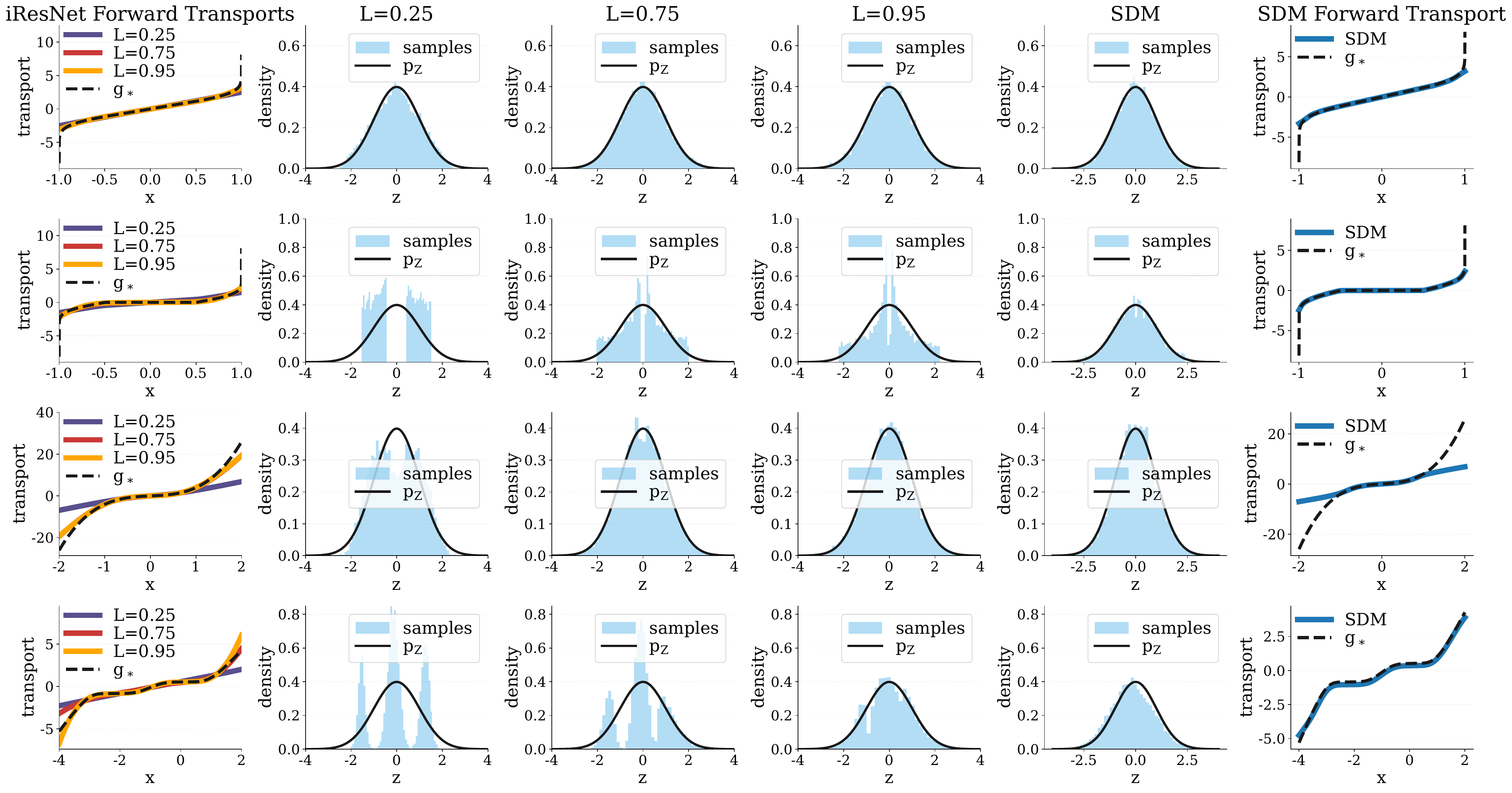}
    \caption{\emph{(left column)} Forward transports learned by iResNets compared against the ground truth transport $g_\ast$ with $\gaussian = (g_\ast)_\# \ptarget$. \emph{(columns 2--4)} Learned pushforwards of $\ptarget$ through the iResNets and the SDM. \emph{(right column)} Forward transport learned by SDM.}
    \label{fig:learned_inverse}
  \end{subfigure}
  \caption{Learned transport maps and densities of iResNets $\varphi_{\theta,L}$ at $L = \{0.25, 0.75, 0.95 \}$ and SDM compared to ground truths. Each row corresponds to one pdf ${\ptarget \in \{p_\Delta, p_{2c}, p_\mathrm{cubic}, p_\mathrm{gmm} \}}$ in the given order.}
  \label{fig:learned_maps}
\end{figure}

Figure~\ref{fig:learned_maps} shows the ground truth targets and corresponding forward and inverse transports together with the learned approximations by each trained model. Probability flow ODEs are solved with an adaptive Runge-Kutta method (RK45), using a small initial time $\delta=10^{-6}$. In these examples we compute histograms by sampling, i.e. by pushing data samples forward or backward through the learned transport maps given by the iResNets and the SDM.

For the iResNet, we observe a clear monotone trend in expressiveness: increasing the block Lipschitz bound from $L=0.25$ to $L=0.95$ (columns 2--4 in Figure~\ref{fig:learned_maps}) systematically  improves the approximation of the ground truth forward and inverse transport maps, and this directly translates into better density approximation. In particular, larger $L$ allows the model to approximate sharp transitions and to redistribute mass onto disconnected or only weakly connected regions, whereas smaller $L$ enforces overly smooth transports, connecting disjoint areas of high ground truth mass. This behaviour is observed most prominently for $p_{2c}, p_{\mathrm{gmm}}$ in Figure~\ref{fig:learned_forward}, rows 2 and 4 and is consistent with the previously mentioned trade-off between regularity and expressivity underlying our theory: while Theorem~\ref{thm:score} and Corollary~\ref{cor:approx_prop} guarantee approximation by bi-Lipschitz transports, the corresponding Lipschitz constants are not required to be uniformly bounded. In contrast, the iResNet imposes a fixed global Lipschitz constraint, so that small values of $L$ restrict the class of admissible transports and lead to underfitting, whereas larger values of $L$ increase the flexibility of the model in terms of e.g. approximating discontinuities or regions of highly separated mass. In practice this constraint on the iResNets results in an expected saturation effect: once the maximum admissible slope of the transport is reached, the slope of the learned transport is clipped to its maximum admissible value in these regions, which prevents matching the non-Lipschitz ground truth transport map in these areas.  
This phenomenon is visible in the first columns of Figure~\ref{fig:learned_forward} and~\ref{fig:learned_inverse}, most clearly in the case of the forward transport of $p_\mathrm{cubic}$ for the tail regions as well as for the inverse transport of $p_{2c}$ near the origin. 

The non-Lipschitz constrained SDM yields accurate density estimates in both data and latent space across all targets and especially captures the discontinuous transport behaviour of $p_{2c}$ most sharply, consistent with its ability to separate the two components of the support. This is in line with Lemma~\ref{lemma:pH_compact}, which ensures score regularity for compactly supported or effectively separated densities away from $t=0$, indicating that $\mathrm{Lip}(\varphi_{\delta \to T}), \mathrm{Lip}(\varphi^{-1}_{\delta \to T})$ will grow as $\delta \to 0$. Since we do not enforce a particular Lipschitz bound of the SDM prior to training, the constant can grow until a satisfactory state of target approximation is reached, which gives more flexibility than the preconstrained iResNets. For the smoother targets, the SDM expectedly achieves accurate approximations, consistent with Lemma~\ref{lemma:score_prop_conv} and Corollary~\ref{cor:score_gauss_mixture}, where we even have uniform-in-time score regularity, resulting in increased transport regularity compared to compactly supported targets. At the same time, the SDM-induced transport behaves conservatively outside the region effectively covered by training data: for $p_\mathrm{cubic}$, the learned forward transport matches the ground truth well on the central interval but flattens outside, indicating limited extrapolation of the learned score field in low-density regions, where the score is only weakly constrained by the training objective, see the plot in Figure~\ref{fig:learned_inverse} at row 4, column 6. This difference arises because, while the gap region of $p_{2c}$ is densely populated by noisy training samples at intermediate diffusion times and therefore induces a strongly learned score, the tail regions of $p_\mathrm{cubic}$ remain sparsely sampled throughout training.

\subsection{Two-Dimensional Experiments}

We next consider four two-dimensional target distributions,
\begin{equation*}
    \ptarget \in \{ p_{\mathrm{rings}},\, p_{\mathrm{squares}},\, p_{\mathrm{moons}},\, p_{\mathrm{conc}} \},
\end{equation*}
illustrated in the left column of Figure~\ref{fig:heat_k=100}. As in the one-dimensional setting, we use fixed data sets of size $N_{\mathrm{train}}=60\,000$ for training and $N_{\mathrm{val}}=10\,000$ for validation.

For the iResNet models, we employ architectures with $k=100$ residual blocks and train five instances corresponding to Lipschitz bounds
$L \in \{0.1,\,0.25,\,0.5,\,0.75,\,0.95\}$. Each residual block uses the same internal architecture as in the 1D experiments, resulting in approximately $865\,\mathrm{K}$ trainable parameters per model. 

For the SDM, we fix the time horizon to $T=3$ and use a residual MLP architecture consisting of nine residual blocks, each implemented as a three-layer MLP with $128$ hidden units. This yields approximately $841\,\mathrm{K}$ trainable parameters. All remaining settings are identical to those used in the one-dimensional experiments.

 For the iResNets, the learned densities are evaluated analytically using the change-of-variables formula
\begin{equation*}
    p_\theta(x) = \gaussian\bigl(\varphi_{\theta,L}(x)\bigr)\,
    \bigl|\det J_{\varphi_{\theta,L}}(x)\bigr|, \qquad x \in \R^2.
\end{equation*}
For the score-based diffusion model, density evaluation is performed via the probability flow ODE. Let $v_{\theta,t}(x) := -\tfrac12 x - \tfrac12 s_{\theta,t}(x)$ as in the beginning of this section. For $x \in \R^2$, the log-density is obtained from
\begin{equation*}
    \log p_\theta(x) = \log p_T(x_T) + \int_{\delta}^{T} \nabla \cdot v_{\theta,t}(x_t)\,\d t,
\end{equation*}
where $x_t$ follows the probability flow ODE, $\delta=10^{-6}$ and $p_T \approx \gaussian$. In two dimensions, the divergence term is computed exactly via automatic differentiation. The resulting ODE system is integrated using an adaptive Runge-Kutta method~(RK45). The density $p_\theta$ is then obtained by exponentiation.

\begin{figure}[t]
    \centering
    \includegraphics[width=\linewidth]{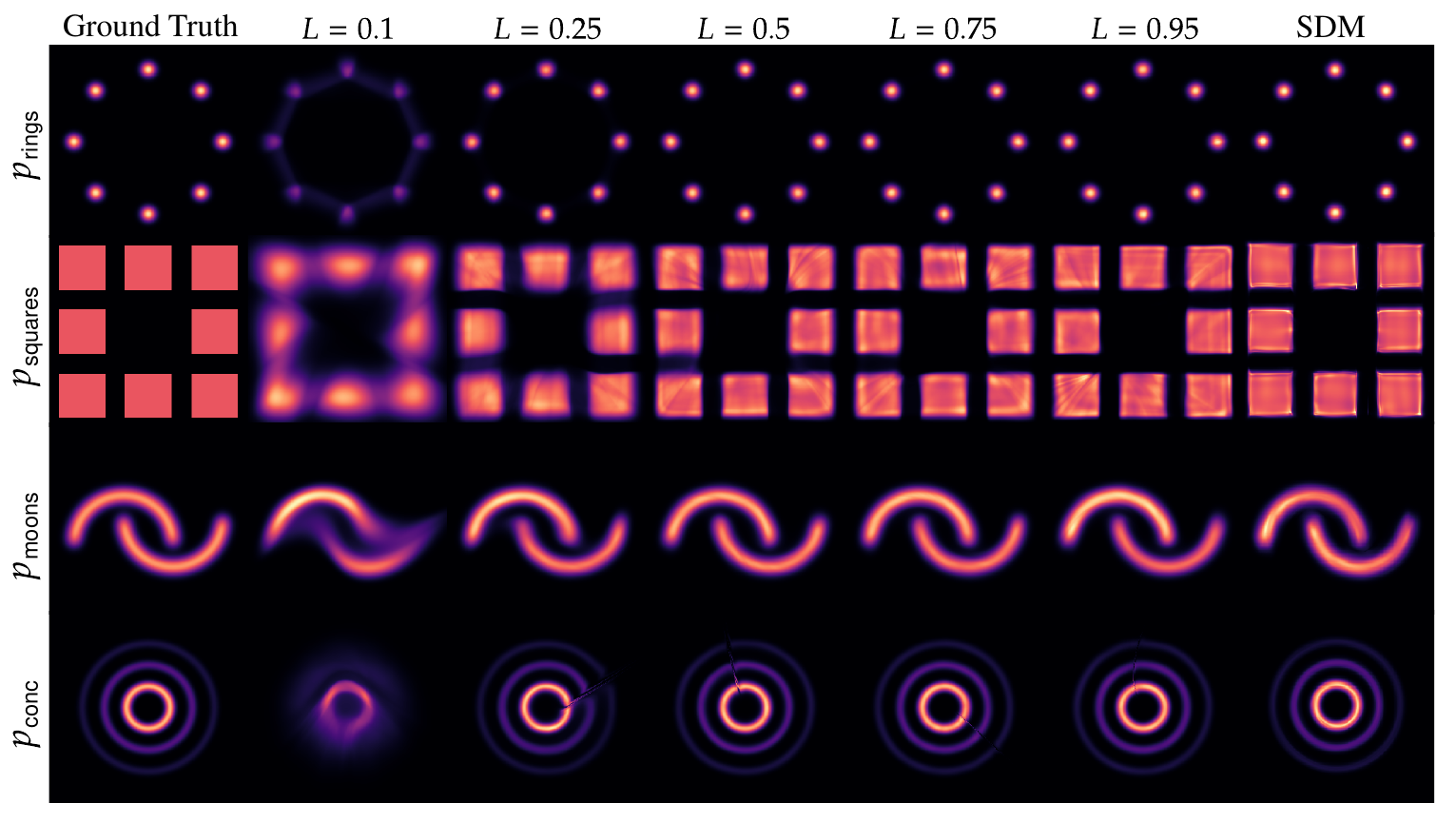}
    \caption{Ground truth target $\ptarget$ and corresponding learned densities $p_\theta$ from iResNets $\varphi_{\theta,L}$ at different Lipschitz constraints $L$ and the SDM~\emph{(columns)} for targets ${\ptarget \in \{ p_{\mathrm{rings}},\, p_{\mathrm{squares}},\, p_{\mathrm{moons}},\, p_{\mathrm{conc}} \}}$~\emph{(rows)}.}
    \label{fig:heat_k=100}
\end{figure}

Figure~\ref{fig:heat_k=100} presents the learned two-dimensional densities for all target distributions. Overall, we observe trends that mirror the one-dimensional experiments. In particular, smaller Lipschitz bounds $L$ enforce smoother transport maps  and consequently lead to overly regularized density estimates (see columns 2--6 of Figure~\ref{fig:heat_k=100}). For target distributions with disconnected or weakly connected support, a sufficiently large Lipschitz constant is essential in order to separate distinct regions of mass. This effect is most pronounced for the square mixture $p_{\mathrm{squares}}$ and the concentric rings distribution $p_{\mathrm{conc}}$, where small values of $L$ lead to artificial connections between components. As in the one-dimensional case, this behaviour is consistent with Lemma~\ref{lemma:pH_compact}, since these targets are either compactly supported or effectively separated and thus require large Lipschitz constants. However, even at large $L$, the iResNet produces an artifact for $p_\mathrm{conc}$, which is a small gap in all concentric rings. In contrast, the ring-shaped Gaussian mixture $p_{\mathrm{rings}}$ is already well approximated for comparatively small values of $L$. This is consistent with Corollary~\ref{cor:score_gauss_mixture}, which ensures uniform-in-time score regularity for finite Gaussian mixture densities and therefore lead to more stable and well-behaved ground truth transport maps. The two-moons distribution $p_{\mathrm{moons}}$ exhibits a slightly different behaviour: here, the best approximation is obtained for an intermediate Lipschitz bound $L=0.5$, while further increasing $L$ does not lead to additional improvements. This suggests that, for this target, a bi-Lipschitz transport of moderate regularity suffices to capture the geometry of the distribution whose Lipschitz constant does not exceed $\Lip(\varphi_{\theta,0.5})$. This observation highlights a trade-off between regularity and generalization: while a sufficiently large Lipschitz constant is necessary to capture nontrivial geometry, overly weak regularization may reduce stability without improving expressiveness.

Since the score-based diffusion model does not impose such preselected Lipschitz constraint on the transport, its visual approximation quality in Figure~\ref{fig:heat_k=100} is similar or improved when comparing with the iResNets: 
While there is no clear visible difference for $p_\mathrm{rings}$ and $p_\mathrm{moons}$, the diffusion model produces a more homogeneous result for the areas of support in $p_\mathrm{squares}$ and, in contrast to the iResNet, is able to produce fully closed concentric rings for $p_\mathrm{conc}$. This qualitative observation is corroborated by the quantitative results shown in Figures~\ref{fig:kl-l1-curves} and Table~\ref{tab:kl-l1}, where the iResNet errors for larger $L$ are mostly in line with the SDM baseline. At the same time, it is important to note a conceptual difference in how these results are obtained: while the SDM induces a canonical and deterministic transport once the score is fixed, the iResNet is trained only at the level of densities and may therefore realize one of many admissible transport maps consistent with the likelihood objective.

Figure~\ref{fig:L(t)_score} provides empirical support for the score regularity results from Section~\ref{sec:expressivity}, which predicts uniform-in-time Lipschitz bounds for sufficiently smooth target classes but allows for singular behaviour near $t=0$ in the presence of discontinuities. In particular, the larger estimated $L(t)$ near $t=0$ for non-smooth targets such as $p_\mathrm{squares}$ is consistent with Lemma~\ref{lemma:pH_compact}, which only guarantees control away from $t=0$ for compactly supported targets, whereas the more moderate behaviour for smoother targets is in line with the uniform-in-time regime of Lemma~\ref{lemma:score_prop_conv} and Corollary~\ref{cor:score_gauss_mixture}. Furthermore, we clearly observe that $L(t)$ converges towards $1$ as $t$ increases for all target distributions, consistent with the fact that $\|\nabla_x^2 \log p_Z(x)\| = 1$.

\begin{figure}[t]
        \centering
        \includegraphics[width=\linewidth]{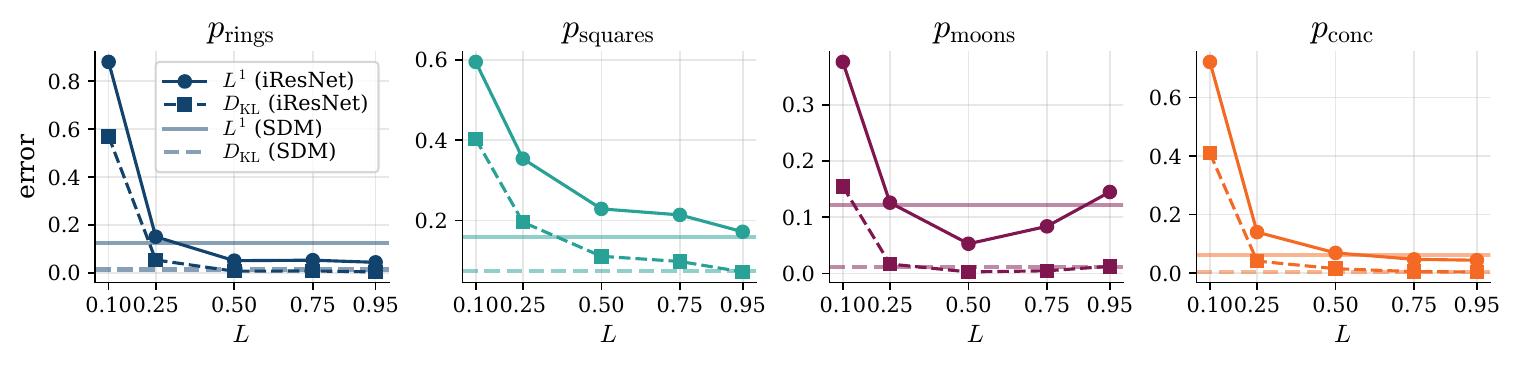}

        \vspace{-3mm}
        \caption{Approximation errors $\| p_\theta - \ptarget \|_{L^1}$~\emph{(dashed lines)} and $\DKL{\ptarget}{p_\theta}$~\emph{(solid lines)} over $L$ compared to the respective constant SDM errors for each target density ${\ptarget \in \{ p_{\mathrm{rings}},\, p_{\mathrm{squares}},\, p_{\mathrm{moons}},\, p_{\mathrm{conc}} \}}$.}
        \label{fig:kl-l1-curves}

\centering
\begin{minipage}[b]{0.37\textwidth}
  \includegraphics[width=5.6cm]{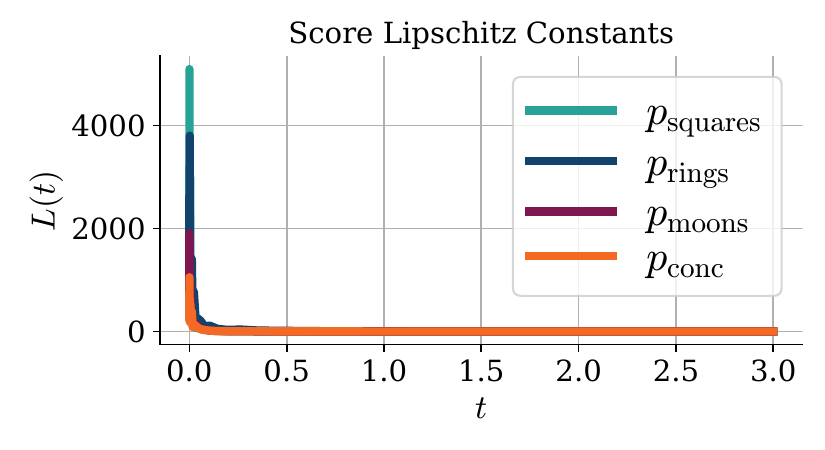}

  \vspace{-4.5mm}
  \includegraphics[width=5.6cm]{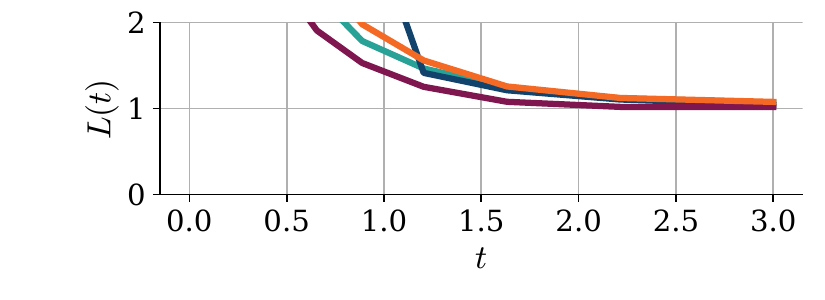}
  
  \vspace{-5mm}
  \captionof{figure}{Lipschitz constants of $s_{\theta,t}(x)$ over time on different  $y$-scales.}
  \label{fig:L(t)_score}
\end{minipage}\hfill
\begin{minipage}[b]{0.61\textwidth}
  \centering
  \footnotesize
  \setlength{\tabcolsep}{1pt}
  \renewcommand{\arraystretch}{1.15}
  \begin{tabular}{@{}llcccccc@{}}
      &  & $L=0.1$ & $L=0.25$ & $L=0.5$ & $L=0.75$ & $L=0.95$ & SDM \\
      \hline\hline
      \multirow{2}{*}{$p_\mathrm{rings}$} & $L^1$
        & 0.881 & 0.151 & 0.051 & 0.053 & \textbf{0.044} & 0.126 \\
      \cline{2-8}
        & $D_{\mathrm{KL}}$
        & 0.570 & 0.054 & 0.007 & 0.008 & \textbf{0.004} & 0.014 \\
      \hline\hline
      \multirow{2}{*}{$p_\mathrm{squares}$} & $L^1$
        & 0.595 & 0.354 & 0.229 & 0.214 & 0.172 & \textbf{0.159} \\
      \cline{2-8}
        & $D_{\mathrm{KL}}$
        & 0.403 & 0.196 & 0.111 & 0.098 & \textbf{0.072} & 0.075 \\
      \hline\hline
      \multirow{2}{*}{$p_\mathrm{moons}$} & $L^1$
        & 0.376 & 0.126 & \textbf{0.053} & 0.084 & 0.145 & 0.122 \\
      \cline{2-8}
        & $D_{\mathrm{KL}}$
        & 0.155 & 0.017 & \textbf{0.003} & 0.005 & 0.013 & 0.012 \\
      \hline\hline
      \multirow{2}{*}{$p_\mathrm{conc}$} & $L^1$
        & 0.722 & 0.140 & 0.069 & 0.047 & \textbf{0.044} & 0.061 \\
      \cline{2-8}
        & $D_{\mathrm{KL}}$
        & 0.410 & 0.041 & 0.015 & 0.005 & 0.004 & \textbf{0.004} \\
      \hline\hline
    \end{tabular}
  \captionof{table}{$L^1$ and $D_{\mathrm{KL}}$ errors for all instances of the iResNet and the SDM for all target densities as in Fig.~\ref{fig:kl-l1-curves}.}
  \label{tab:kl-l1}
\end{minipage}
\end{figure}

\section{Conclusion}

Exploiting the concept of score-based diffusion models enables us to deduce universal distributional approximation properties of bi-Lipschitz normalizing flows. Particularly, we have shown that VP probability flows under uniform-in-space score Lipschitz bounds generate bi-Lipschitz $C^1$-diffeomorphisms whose Gaussian pullbacks are $L^1$-dense among target densities in $L^1$. We identified practically relevant target classes for which the required score regularity holds. The Gaussian-smoothed class $\mathcal G$ plays a key role: it admits additional uniform-in-time score Lipschitz bounds and is rich enough to approximate arbitrary targets in $L^1$. This allows us to transfer bi-Lipschitz expressivity guarantees via VP flows to general probability densities.
Our numerical experiments complement the theory by illustrating how explicit Lipschitz control in iResNets shapes transport geometry and density fit, and how these end-to-end flows compare to VP probability flow transport maps induced by learned scores.

Although the $L^1$-expressivity of deterministic transports is broadly in line with classical volume form results on bounded domains, see~\citeauthor{DacorognaMoser1990}~(\citeyear{DacorognaMoser1990}), our results go further by proving $L^1$-density of Gaussian pullbacks on $\R^n$ within the specific transport family of VP flows arising in diffusion models. Within this class, score Lipschitz bounds translate directly into bi-Lipschitz control of the induced transport maps. Moreover, the derived KL convergence for targets in $\mathcal G$ is strictly stronger than $L^1$. Hence, our results can be interpreted in two complementary directions: On the one hand, VP diffusion regularity provides an existence-based justification for the distributional expressivity of bi-Lipschitz normalizing flows. On the other hand, the bi-Lipschitz deterministic transport viewpoint yields a transparent route to diffusion convergence by viewing sampling as a Gaussian pullback through the VP flow, with score regularity providing bi-Lipschitz control. In this sense, each framework explains and strengthens the other.

A few questions remain open. First, it is not yet clear in which regimes an end-to-end normalizing flow effectively distills the VP probability flow transport. Understanding when such distillation succeeds is an important direction for future work. Second, our analysis is formulated at the level of transport maps rather than specific architectures. A refined, architecture-aware theory could clarify how common design choices (e.g., coupling flows, iResNets) and training objectives translate into the time-dependent bi-Lipschitz regime captured here. Finally, since our results rely on non-uniform Lipschitz bounds, developing approximation and convergence rates that explicitly track these bounds would sharpen the theory and further connect it to the existing quantitative score-based diffusion model literature.

\section{Acknowledgements}
MI acknowledges the support of the Mobility Programme (M-0187) of the Sino-German Center for Research Promotion. CBS acknowledges support from the Royal Society Wolfson Fellowship, the EPSRC advanced career fellowship EP/V029428/1, the EPSRC programme grant EP/V026259/1, the Wellcome Innovator Awards 215733/Z/19/Z and 221633/Z/20/Z, the EPSRC funded ProbAI hub EP/Y028783/1, the European Union Horizon 2020 research and innovation programme under the Marie Skodowska-Curie grant agreement REMODEL.
This research was also supported by the NIHR Cambridge Biomedical Research Centre (NIHR203312). The views expressed are those of the author(s) and not necessarily those of the NIHR or the Department of Health and Social Care.

\appendix
% equations numbered within (appendix) sections
\numberwithin{equation}{section}
\renewcommand{\thesection}{\Alph{section}}
\renewcommand{\theequation}{\thesection.\arabic{equation}}

\section{Proofs of Section~\ref{sec:expressivity}}\label{sec:appendix_proofs}

This appendix collects the proofs of the results stated in Section~\ref{sec:expressivity}. In particular, Appendix~\ref{subsec:appendix_score} proves Theorem~\ref{thm:score}, followed by the proofs of Corollary~\ref{cor:score} and Lemmas~\ref{lemma:pH_compact}–\ref{lemma:G_density} as well as Corollary~\ref{cor:universal_vp}.

\subsection{Proof of Theorem~\ref{thm:score}}\label{subsec:appendix_score}

\begin{proof}
    Property (i) immediately implies
    \begin{equation}\label{eq:s_lip}
        \| s_t(x)-s_t(y)\| \leq L(t) \|x-y\| \quad \text{for all } x,y \in \R^n, \ t > 0,
    \end{equation}
    i.e. $s_t(x)$ is Lipschitz with constant $L(t)$ for each $t > 0$. 
    
    We will sequentially prove I--III.

    I: \emph{Existence and Uniqueness.} According to~\eqref{eq:s_lip}, $v_t$ as in~\eqref{eq:VP-flow} is Lipschitz for all $t > 0$. Furthermore, $v_t(x)$ is continuous on $[\delta,T] \times \R^n$ (see Lemma~\ref{lem:regularity_pt} for a detailed proof), so there exists a unique solution to~\eqref{eq:VP-flow}, i.e., a well-defined flow $\varphi_{\delta \to t}$ for all $t \in [\delta,T]$. 
    
    \emph{Lipschitz-Property.}
    Let $X_t=\varphi_{\delta\to t}(X_\delta)$ and $Y_t=\varphi_{\delta\to t}(Y_\delta)$, and set $W_t:=X_t-Y_t$.
    Using that $v_t$ is Lipschitz with constant $\tfrac12(1+L(t))$, we obtain for all $t\in[\delta,T]$
    \begin{equation*}
        \frac{\d}{\d t}\|W_t\|^2 =2\langle W_t, v_t(X_t)-v_t(Y_t)\rangle\leq 2\|W_t\|\,\|v_t(X_t)-v_t(Y_t)\|\leq (1+L(t))\|W_t\|^2 .
    \end{equation*}
    By Grönwall’s lemma,
    \begin{equation*}
        \|W_t\|^2\leq \exp\left(\int_\delta^t (1+L(\tau))\,\d\tau\right)\|W_\delta\|^2,
    \end{equation*}
    implying
    \begin{equation*}
        \|W_t\|\leq \exp\left(\frac12\int_\delta^t (1+L(\tau))\,\d\tau\right)\|W_\delta\|,
    \end{equation*}
    hence 
    \begin{equation*}
        \Lip(\varphi_{\delta\to T})\leq \exp\left(\frac12\int_\delta^T (1+L(\tau))\,\d\tau\right).
    \end{equation*}
    The same argumentation applies to $\varphi_{T\to t}$ 
    when solving~\eqref{eq:VP-flow} with $-v$: Since the initial value problems with vector fields $v$ (forward) and $-v$ (backward) are well posed on $[\delta,T]$, for the backward direction we also have 
    \begin{equation*}
        \Lip(\varphi_{T \to \delta}) \leq\exp\left(\frac{1}{2}\int_\delta^T(1+ L(\tau)) \, \d \tau\right).
    \end{equation*}
    The composition of the forward flow and the backward flow yields $\varphi_{t\to \delta}\circ \varphi_{\delta\to t}=\mathrm{id}$ and
    $\varphi_{\delta\to t}\circ \varphi_{t\to \delta}=\mathrm{id}$.
    In particular, $\varphi_{\delta\to t}$ is invertible and $(\varphi_{\delta\to t})^{-1}=\varphi_{t\to \delta}$.
    This shows that $\varphi_{\delta \to T}$ is bi-Lipschitz. 

    \emph{$C^1$ and diffeomorphism.}
    Lemma~\ref{lem:regularity_pt} implies that for each $t>0$ the map $x\mapsto v_t(x)$ is $C^1$ with $ \|\nabla_x v_t(\cdot)\|\leq \tfrac12\bigl(1+L(t)\bigr)$ by assumption (i).
    Together with the continuity of $(t,x)\mapsto v_t(x)$ on $[\delta,T]\times\R^n$, according to~\cite[Theorem~9.5]{Amann1990}, for each fixed $t\in[\delta,T]$, the solution map $x \mapsto \varphi_{\delta\to t}(x)$ is $C^1$. This applies to $\varphi_{t \to \delta}$ by the same arguments. Therefore, $\varphi_{\delta\to t}, \ \varphi_{t \to \delta}$ are $C^1$-diffeomorphisms.
    
    II: Follows from~\cite[Lemma~C.4]{Chen2023}, where 
    \begin{equation*}
        \DKL{p_T}{\gaussian} \leq e^{-T}(n+M_2) 
    \end{equation*}
    is shown for $T$ sufficiently large and $M_2 := \E_{\ptarget}[\|X\|^2] < \infty$ by our assumption. This shows that
    \begin{equation*}
        \DKL{(\varphi_{\delta \to T})_\# p_\delta}{\gaussian} = \DKL{p_T}{\gaussian} \xrightarrow[T\to \infty]{} 0,
    \end{equation*}
    which yields II. 
    
    III: By the triangle inequality,
    \begin{equation*}
        \|\ptarget-(\varphi_{\delta\to T}^{-1})_\#\gaussian\|_{L^1} \leq \|\ptarget-p_\delta\|_{L^1} + \|p_\delta-(\varphi_{\delta\to T}^{-1})_\#\gaussian\|_{L^1}.
    \end{equation*}
    
    Since $\varphi_{T\to\delta}=(\varphi_{\delta\to T})^{-1}$ and thus $p_\delta=(\varphi_{\delta\to T}^{-1})_\#p_T$, and total variation is invariant under bijective pushforwards,
    \begin{equation*}
        \|p_\delta-(\varphi_{\delta\to T}^{-1})_\#\gaussian\|_{L^1} =\|(\varphi_{\delta\to T}^{-1})_\#p_T-(\varphi_{\delta\to T}^{-1})_\#\gaussian\|_{L^1} =\|p_T-\gaussian\|_{L^1}.
    \end{equation*}
    
    By Pinsker’s inequality and Part II,
    \begin{equation*}
        \|p_T-\gaussian\|_{L^1}\leq \sqrt{2\,\DKL{p_T}{\gaussian}} \xrightarrow[T\to\infty]{} 0.
    \end{equation*}
    
    Moreover, $t \mapsto p_t(x)$ is strongly continuous at $t=0$ on $L^1$, which can be seen using the explicit formula $p_t=(a(t)^{-n}\ptarget(\cdot/a(t)))\ast \phi_{\sigma^2(t),0}$ and the facts that scaling and Gaussian convolution are $L^1$-continuous at the identity, i.e. $$\| D_{a(t)}\ptarget \ast \phi_{\sigma^2(t),0}-\ptarget\|_{L^1} \xrightarrow{t\to 0} 0,$$  
    where $D_a p := a^{-n} p(\cdot/a)$~(see Lemma~\ref{lemma:L1_conv_pt} for a detailed proof). Therefore, $\|p_\delta-\ptarget\|_{L^1}\xrightarrow[\delta\rightarrow 0]{}0$.
    Combining the two limits yields
    \begin{equation*}
        \lim_{\delta \to 0} \, \lim_{T \to \infty} \, \|\ptarget-(\varphi_{\delta\to T}^{-1})_\#\gaussian\|_{L^1} = 0.
    \end{equation*}
\end{proof}

\subsection{Proof of Corollary~\ref{cor:score}}

\begin{proof}
    If both additional assumptions hold, applying the same arguments as in the proof of Theorem~\ref{thm:score} shows that $\varphi_{0 \to T}$ is also a bi-Lipschitz $C^1$-diffeomorphism with $(\varphi_{0\to T})_\# \ptarget = p_T$. Therefore,
    \begin{equation*}
        \DKL{\ptarget}{(\varphi_{0\to T}^{-1})_\# \gaussian} = \DKL{(\varphi_{0\to T})_\#\ptarget}{\gaussian} = \DKL{p_T}{\gaussian} \xrightarrow{T \to \infty} 0,
    \end{equation*}
    by Theorem~\ref{thm:score}, II. 
    
    The $L^1$-convergence again follows by the same arguments as in the proof of Theorem~\ref{thm:score}, III, using Lemma~\ref{lem:regularity_pt} extended on $[0,T]\times\R^n$.
\end{proof}

\subsection{Proof of Lemma~\ref{lemma:pH_compact}}

\begin{proof}
Following~\eqref{eq:stephanovitch_score_norm}, we first obtain a bound on $\|\Cov_{p_{t,x}}(Y)\|$ that is uniform in $x$. Let $X_0\sim\ptarget$ and $X_t$ be the VP-SDE forward process.
Then by~\eqref{eq:p_t,x}
\begin{equation*}
     p_{t,x}(y) 
  \propto \ptarget(y)\,\phi_{\sigma^2(t),a(t)y}(x).
\end{equation*}
Thus $\ptarget(y)=0$ implies $p_{t,x}(y)=0$ and therefore
\begin{equation*}
    \supp(p_{t,x}) \subset \supp(\ptarget) \subset K \quad\text{for all }x\in\R^n,\ t > 0.
\end{equation*}
Let $R := \sup\{\|y\| : y\in K\} <\infty$. For $Y\sim p_{t,x}$ we then have $\|Y\|\leq R$. For any
$\theta\in\R^n$ with $\|\theta\|=1$,
\begin{equation*}
    \Var_{p_{t,x}}(\langle\theta,Y\rangle)
  \leq \E_{p_{t,x}}[\langle\theta,Y\rangle^2]
  \leq \E_{p_{t,x}}[\|Y\|^2]
  \leq R^2.
\end{equation*}
Taking the supremum over all unit $\theta$ shows
\begin{equation*}
    \|\Cov_{p_{t,x}}(Y)\| = \sup_{\|\theta\|=1} \Var_{p_{t,x}}(\langle\theta,Y\rangle) \leq R^2,
\end{equation*}
and this bound is independent of $x$ and $t$. Plugging this into \eqref{eq:stephanovitch_score_norm} yields
\begin{equation*}
    \|\nabla s_t(x)\| \leq
  1 + \frac{e^{-t}}{(1-e^{-t})^2}\,R^2 + \frac{e^{-t}}{1-e^{-t}} =: L(t),
\end{equation*}
so (i) holds. Finally, for any fixed $0<\delta<T<\infty$, the function
\begin{equation*}
    t\mapsto L(t)
  = 1 + \frac{e^{-t}}{(1-e^{-t})^2}\,R^2  + \frac{e^{-t}}{1-e^{-t}}
\end{equation*}
is continuous and finite on $[\delta,T]$, hence bounded. Therefore
\begin{equation*}
    \int_\delta^T (1+L(t))\, \d t <\infty.
\end{equation*}
\end{proof}

\subsection{Proof of Lemma~\ref{lemma:score_log_concave}}

\begin{proof}
Again, we aim to obtain a bound on $\|\Cov_{p_{t,x}}(Y)\|$ for each $t > 0$ that is uniform in $x$. The posterior density of $X_0 \sim \ptarget$ given $X_t=x$ is by~\eqref{eq:p_t,x}
\begin{equation*}
     p_{t,x}(y) \propto \ptarget(y)\,\exp\big(-c_t\|y\|^2 + \langle u_t(x),y\rangle\big),
\end{equation*}
where
\begin{equation*}
    c_t := \frac{a^2(t)}{2\sigma^2(t)}
       = \frac{e^{-t}}{2(1-e^{-t})} > 0,
  \qquad
  u_t(x) := \frac{a(t)}{\sigma^2(t)}x
          = \frac{e^{-t/2}}{1-e^{-t}}\,x.
\end{equation*}
Thus, $p_{t,x}$ writes as
\begin{equation*}
    p_{t,x}(y)
  = \frac{1}{Z_{t,x}} \exp\big(-U_{t,x}(y)\big),
\end{equation*}
with
\begin{equation*}
    U_{t,x}(y) := V(y) + c_t\|y\|^2 - \langle u_t(x),y\rangle,
  \qquad Z_{t,x}>0.
\end{equation*}
Differentiating twice in $y$ yields
\begin{equation*}
     \nabla_y^2 U_{t,x}(y)
  = \nabla^2 V(y) + 2c_t I_n
  \succeq 2c_t\,I_n 
  \quad\text{for all }y\in\R^n,
\end{equation*}
by~\eqref{eq:log_concave}. Hence, for each fixed $t>0$ and $x\in\R^n$, the posterior $p_{t,x}$ is strongly log-concave.

By the Brascamp-Lieb inequality, if a pdf $q(y)\propto e^{-U(y)}$ satisfies
$\nabla^2 U(y)\succeq m I_n$ for all $y$, then
\begin{equation*}
    \Var_q(\langle\theta,Y\rangle) \leq\frac{1}{m}\|\theta\|^2
  \quad\text{for all }\theta\in\R^n.
\end{equation*}
Applying this to $q=p_{t,x}$ with $m=2c_t$ yields
\begin{equation*}
    \Var_{p_{t,x}}(\langle\theta,Y\rangle)
  \leq \frac{1}{2c_t}\,\|\theta\|^2
  \quad\text{for all }\theta\in\R^n.
\end{equation*}
Equivalently,
\begin{equation*}
    \|\Cov_{p_{t,x}}(Y)\| = \sup_{\|\theta\| = 1}\Var_{p_{t,x}}( \langle \theta, Y \rangle) \leq \frac{1}{2c_t},
\end{equation*}
and this bound is independent of $x$.

Substituting into~\eqref{eq:stephanovitch_score_norm}, we obtain
\begin{equation*}
     \|\nabla s_t(x)\| \leq 1 + \frac{e^{-t}}{(1-e^{-t})^2}\,\frac{1}{2c_t} + \frac{e^{-t}}{1-e^{-t}} = \frac{2}{1-e^{-t}}=: L(t), %1 + \frac{e^{-2t}}{(1-e^{-2t})^2}\,\frac{1}{\kappa+2c_t} + \frac{e^{-2t}}{1-e^{-2t}} =: L(t),
\end{equation*}
for all $x\in\R^n$. This proves (i).

For (ii), fix $0<\delta<T<\infty$. The functions $a(t)$, $\sigma^2(t)$
and hence $c_t$ are smooth in $t$, so $L(t)$ is continuous on
$[\delta,T]$. In particular, $L(t)$ is bounded on $[\delta,T]$, and
therefore
\begin{equation*}
    \int_\delta^T (1+L(t))\, \d t < \infty.
\end{equation*}
\end{proof}

\subsection{Proof of Lemma~\ref{lemma:score_prop_conv}}

\begin{proof}
Fix $t \geq 0$. Let $Y\sim\mu$ and, let $X_t\mid Y=y\sim\mathcal N(a(t)y,S_t)$
with $S_t=a(t)^2\Sigma+\sigma(t)^2I_n$. Then
\begin{equation*}
    p_t(x)=\int_{\mathbb R^n}\phi_{S_t,a(t)y}(x)\,\mu(\d y).
\end{equation*}
Note that $p_t(x)>0$ for all $x$, since $\phi_{S_t,a(t)y}(x)>0$ and $\mu$ is a probability measure.
For $y\in\supp(\mu) \subset B_R(0)$ define $h_y(x):=\log\phi_{S_t,a(t)y}(x)$, so that
\begin{equation*}
    \nabla_x h_y(x)=-S_t^{-1}(x-a(t)y),\qquad \nabla_x^2 h_y(x)=-S_t^{-1}.
\end{equation*}
Define
\begin{equation*}
    \mu_{t,x}(\d y) := \frac{\phi_{S_t,a(t)y}(x)}{p_t(x)}\,\mu(\d y) = \frac{e^{h_y(x)}}{p_t(x)}\mu(\d y).
\end{equation*}
Then $\mu_{t,x}$ is a probability measure since $\int \phi_{S_t,a(t)y}(x)\,\mu(\d y)=p_t(x)$. By the Leibniz rule (justified since $\mu$ is compactly supported and the derivatives of $\phi_{S_t,a(t)y}$ are bounded uniformly in $y$ on $\supp(\mu)$), we obtain
\begin{align*}
    \nabla_x^2\log p_t(x) &= \frac{\nabla^2p_t(x)}{p_t(x)}-\frac{\nabla p_t(x)(\nabla p_t(x))^\top}{p^2_t(x)} \\
    &= \E_{\mu_{t,x}}[\nabla^2 h_Y(x)]+ \E_{\mu_{t,x}}[\nabla h_Y(x) \nabla h_Y(x)^\top]-\E_{\mu_{t,x}}[\nabla h_Y(x)](\E_{\mu_{t,x}}[\nabla h_Y(x)])^\top \\
    &= \E_{\mu_{t,x}}[\nabla^2 h_Y(x)] + \Cov_{\mu_{t,x}}[\nabla_x h_Y(x)] \\
    &= -S_t^{-1} + \Cov_{\mu_{t,x}}[-S_t^{-1}x + S_t^{-1} a(t)Y] \\
    &= -S_t^{-1} + \Cov_{\mu_{t,x}}[S_t^{-1}a(t)Y]
\end{align*}
Hence
\begin{equation*}
    \|\nabla_x^2\log p_t(x)\|\leq \|S_t^{-1}\|+\big\|\operatorname{Cov}_{\mu_{t,x}}(S_t^{-1}a(t)Y)\big\| \leq \|S_t^{-1}\|+\mathbb E_{\mu_{t,x}}\big[\|S_t^{-1}a(t)Y\|^2\big].
\end{equation*}
Since $\Sigma\succ0$ and $t \in [0,T]$, the eigenvalues of $S_t = a(t)^2\Sigma + \sigma(t)^2 I_n$ are uniformly bounded away from $0$ and
$\infty$: there exist $0<\underline\lambda_T \leq \overline\lambda_T < \infty$ such that
\begin{equation*}
  \underline\lambda_T I_n \preceq S_t \preceq \overline\lambda_T I_n,
  \qquad t\in[0,T].
\end{equation*}
In particular,
\begin{equation*}
  \|S_t^{-1}\| \leq \underline\lambda_T^{-1}
  \qquad\text{for all } t\in[0,T].
\end{equation*}
Since $\supp(\mu)\subset B(0,R)$, $a(t)\le1$, and for $t\in[0,T]$ it follows that $\|S_t^{-1}a(t)Y\|\leq \underline\lambda_T^{-1}R$ and therefore
\begin{equation*}
    \sup_{x\in\mathbb R^n}\|\nabla_x^2\log p_t(x)\|
\leq \underline\lambda_T^{-1}+ \underline\lambda_T^{-2}R^2=:M_T<\infty,\qquad t\in[0,T].
\end{equation*}
Since $\nabla s_t=\nabla^2\log p_t$, this proves (i) with $L(t):=M_T$ on $[0,T]$, and (ii) follows immediately.
\end{proof}

\subsection{Proof of Corollary~\ref{cor:score_gauss_mixture}}
\begin{proof}
Define a probability measure $\mu$ on $\R^n$ by
\begin{equation*}
    \mu := \sum_{k=1}^K \alpha_k \,\delta_{m_k},
\end{equation*}
where $\delta_{m_k}$ is the Dirac measure at $m_k$. Then $\supp(\mu) = \{m_1,\dots,m_K\}$ is finite, hence compact.

For $x\in\R^n$ we have
\begin{equation*}
    (\phi_{\Sigma,0} \ast \mu)(x)
    = \int_{\R^n} \phi_{\Sigma,0}(x-y)\,\d\mu(y)
    = \sum_{k=1}^K \alpha_k \phi_{\Sigma,0}(x-m_k)
    = \sum_{k=1}^K \alpha_k \phi_{\Sigma,m_k}(x)
    = \ptarget(x),
\end{equation*}
so $\ptarget$ is of the form \eqref{eq:A3} with this $\Sigma$ and $\mu$,
and thus $\ptarget \in \mathcal{G}$. Therefore Lemma~\ref{lemma:score_prop_conv} applies and yields the existence of a function $L(t)$ such that (i) and (ii) hold. 
\end{proof}

\subsection{Proof of Corollary~\ref{cor:approx_prop}}

\begin{proof}
(i) \emph{(Forward/latent side)}.
Fix $\delta>0$ and write $p_T=(\varphi_{\delta\to T})_\#p_\delta$. 
By the proof of Theorem~\ref{thm:score}~(II),
\begin{equation*}
    \DKL{p_T}{\gaussian}%\leq e^{-(T-\delta)}\DKL{p_\delta}{\gaussian}\xrightarrow[T\to\infty]{}0.
    \leq e^{-T}(n + M_2) \xrightarrow[T\to\infty]{}0
\end{equation*}
with $M_2 := \E_{\ptarget}[\|X\|^2]$. Hence we can choose $T$ large, so that $\DKL{p_T}{\gaussian}<\min\{\varepsilon,\varepsilon^2/2\}$.
Then Pinsker gives the $L^1$ bound
\begin{equation*}
    \|p_T-\gaussian\|_{L^1}\leq\sqrt{2\,\DKL{p_T}{\gaussian}}<\varepsilon.
\end{equation*}

For $L^2$, use the uniform bound $\|p_T\|_\infty\leq C_\delta:=(2\pi\sigma^2(\delta))^{-n/2}$ to get, via Hölder,
\begin{equation*}
    \|p_T-\gaussian\|_{L^2}^2
\leq \|p_T-\gaussian\|_{L^1}\,\|p_T-\gaussian\|_\infty
\leq (C_\delta+\|\gaussian\|_\infty)\, \|p_T-\gaussian\|_{L^1}.
\end{equation*}
Since $C_\delta+\|\gaussian\|_\infty<\infty$ for fixed $\delta$, enlarging $T$ also yields $\|p_T-\gaussian\|_{L^2}<\varepsilon$. 

For the Wasserstein distance, Talagrand’s inequality for the standard Gaussian yields
\begin{equation*}
    W_2^2(p_T,\gaussian)\leq 2\,\DKL{p_T}{\gaussian}.
\end{equation*}
Hence, by choosing $T$ sufficiently large such that
\begin{equation*}
    \DKL{p_T}{\gaussian}<\varepsilon^2/2,
\end{equation*}
we obtain
\begin{equation*}
   W_2(p_T,\gaussian)<\varepsilon. 
\end{equation*}
Since $W_1\leq W_2$, the same bound also holds for $W_1(p_T,\gaussian)$. 
This concludes (i) because $(\varphi_\varepsilon)_\#p_\delta=p_T$ with our choice $\varphi_\varepsilon=\varphi_{\delta\to T}$.

(ii) \emph{(Backward/target side)}. Follows directly from Lemma~\ref{lemma:pH_compact},~\ref{lemma:score_log_concave},~\ref{lemma:score_prop_conv} and Corollary~\ref{cor:score_gauss_mixture} together with Theorem~\ref{thm:score}, III.

The additional property for the cases of~\ref{eq:A3} and~\ref{eq:A4} directly follows from Corollary~\ref{cor:score} and the characteristics of the class $\mathcal{G}$ which imply $\ptarget \in C^2_b(\R^n) \, \ptarget > 0$ for any $\ptarget \in \mathcal{G}$.
\end{proof}

\subsection{Proof of Lemma~\ref{lemma:G_density}}

\begin{proof}
Since $\ptarget \in L^1(\R^n)$ with $\int_{\R^n} \ptarget = 1$, its mass outside large balls can be made arbitrarily small. Fix $\varepsilon>0$ and choose $R>0$ such that
\begin{equation*}
    \int_{B_R^c(0)}\ptarget(x)\,\d x < \varepsilon/4.
\end{equation*}
Let $m_R:=\int_{B_R(0)}\ptarget$ and define the truncated/renormalized density
\begin{equation*}
    \ptarget^R(x):=\frac{1}{m_R}\ptarget(x)\mathbf 1_{B_R(0)}(x).
\end{equation*}
Then $\ptarget^R$ is a compactly supported pdf and
\begin{equation*}
    \|\ptarget-\ptarget^R\|_{L^1}
    = \int_{B_R(0)}\ptarget\Bigl|1-\frac1{m_R}\Bigr| + \int_{B_R^c(0)}\ptarget
    = (1-m_R)+(1-m_R)
    =2\int_{B_R^c(0)}\ptarget
    < \varepsilon/2.
\end{equation*}
Let $\phi_\sigma:=\phi_{\sigma^2 I,0}$ and set $p_\sigma:=\ptarget^R\ast\phi_\sigma$.
Since $\ptarget^R \in L^1$, we have 
\begin{equation*}
    \|p_\sigma-\ptarget^R\|_{L^1}\to0 \quad \text{as}  \ \sigma\to0
\end{equation*}
(see the proof of Lemma~\ref{lemma:L1_conv_pt}), so choose $\sigma>0$ such that $\|p_\sigma-\ptarget^R\|_{L^1}<\varepsilon/2$.

Finally, if $\mu_R(\d y):=\ptarget^R(y)\,\d y$, then $\mu_R$ is a probability measure supported in $B_R(0)$ and
\begin{equation*}
    p_\sigma(x)=\int_{\mathbb R^n}\phi_{\sigma^2 I,0}(x-y)\,\mu_R(\d y)\in\mathcal G.
\end{equation*}
By the triangle inequality, $\|\ptarget-p_\sigma\|_{L^1}<\varepsilon$.
\end{proof}

\subsection{Proof of Corollary~\ref{cor:universal_vp}}

\begin{proof}
Fix $\varepsilon>0$. 

\emph{Step 1.}
By Lemma~\ref{lemma:G_density}, there exists $p\in\mathcal G$ such that
\begin{equation*}
  \|\ptarget - p\|_{L^1} < \frac{\varepsilon}{2}.
\end{equation*}

\emph{Step 2.} By Lemma~\ref{lemma:score_prop_conv}, for the VP-SDE flow with initial pdf $p$, for every finite $T>0$ there exists $M_T<\infty$ such that
\begin{equation*}
  \sup_{t\in[0,T]} \sup_{x\in\R^n} \|\nabla_x s_t(x)\| \leq M_T.
\end{equation*}
In particular, the function
\begin{equation*}
  L(t) := \sup_{x\in\R^n}\|\nabla_x s_t(x)\|
\end{equation*}
is finite for all $t\in[0,T]$ and satisfies
\begin{equation*}
  \int_0^T (1+L(t))\,\d t \leq T(1+M_T) < \infty.
\end{equation*}
Furthermore, every element in $\mathcal{G}$ has finite second moment. Thus, the assumptions (i)–(ii) of Theorem~\ref{thm:score} hold for the initial density $p$ on any interval $[0,T]$. Since Lemma~\ref{lemma:score_prop_conv} provides boundedness of $L(t)$ also at $t=0$, by Corollary~\ref{cor:score} 
\begin{itemize}
  \item[(i)] the VP flow map $\varphi_{0\to T}$ is a bi-Lipschitz $C^1$-diffeomorphism 
  \item[(ii)] we have
        \begin{equation*}
          \|p - (\varphi_{0\to T}^{-1})_\# \gaussian\|_{L^1}
          \xrightarrow[T\to\infty]{} 0.
        \end{equation*}
\end{itemize}
Therefore, (ii) shows that there exists $T_\varepsilon>0$ such that for all $T\geq T_\varepsilon$,
\begin{equation*}
  \|p - (\varphi_{0\to T}^{-1})_\# \gaussian\|_{L^1} < \frac{\varepsilon}{2}.
\end{equation*}
We now fix such a $T\geq T_\varepsilon$.

\emph{Step 3. }
For this choice of $p\in\mathcal G$ and $T>0$, we combine the two $L^1$–bounds by the triangle inequality
\begin{equation*}
  \big\|\ptarget - (\varphi_{0\to T}^{-1})_\# \gaussian\big\|_{L^1}
  \leq \|\ptarget - p\|_{L^1}
     + \|p - (\varphi_{0\to T}^{-1})_\# \gaussian\|_{L^1}
  < \frac{\varepsilon}{2} + \frac{\varepsilon}{2}
  = \varepsilon.
\end{equation*}
Together with the bi-Lipschitz and score-regularity properties from Step~2, this proves the corollary.
\end{proof}

\section{Proofs of Section~\ref{sec:connection_learned}}\label{sec:appendix_learned}

This appendix contains the proof of the learned score result from Section~\ref{sec:connection_learned}.

\subsection{Proof of Theorem~\ref{thm:learned_score}}
\begin{proof}
    Let $\varepsilon>0$. We have
    \begin{align*}
        \| \ptarget - ((\varphi^{(m)}_{\delta \to T})^{-1})_\# \gaussian\|_{L^1}
        &\leq \underbrace{\| \ptarget - ((\varphi_{\delta\to T})^{-1})_\# \gaussian\|_{L^1}}_{=:\, \text{I}} \\
        &\qquad + \underbrace{\| ((\varphi_{\delta \to T})^{-1})_\# \gaussian
          - ((\varphi^{(m)}_{\delta\to T})^{-1})_\# \gaussian\|_{L^1}}_{=: \, \text{II}}.
    \end{align*}
    By Theorem~\ref{thm:score}~(II and III), we may choose $\delta>0$ and $T>\delta$ such that, simultaneously,
    \begin{equation*}
        \text{I}<\varepsilon/2\qquad\text{and}\qquad
        2\,\DKL{p_T}{\gaussian} < \frac{\varepsilon^2}{32}.
    \end{equation*}
    In particular, Pinsker's inequality yields $\|\gaussian-p_T\|_{L^1}< \varepsilon/(4\sqrt{2})<\varepsilon/4$.

    For II it holds
    \begin{align*}
        \text{II}
        &\leq \| ((\varphi_{\delta\to T})^{-1})_\# \gaussian
                - ((\varphi_{\delta \to T})^{-1})_\# p_T\|_{L^1}
            + \| ((\varphi_{\delta \to T})^{-1})_\# p_T
                - ((\varphi^{(m)}_{\delta \to T})^{-1})_\# \gaussian\|_{L^1}.
    \end{align*}
    By construction of the probability flow ODE and assumption~\emph{(iv)},
    \begin{equation*}
      ((\varphi_{\delta\to T})^{-1})_\# p_T = p_\delta,
      \qquad
      ((\varphi^{(m)}_{\delta\to T})^{-1})_\# \gaussian = q_\delta^{(m)},
    \end{equation*}
    where $p_\delta$ is the marginal of the true reverse-time dynamics at $t=\delta$ with terminal law $p_T$ and $q_\delta^{(m)}$ the marginal of the learned reverse-time dynamics with terminal law $\gaussian$. By invariance of the $L^1$-norm under bijective pushforwards, it holds
    \begin{equation*}
        \text{II} \leq \|\gaussian - p_T\|_{L^1} + \underbrace{\| p_\delta - q_\delta^{(m)}\|_{L^1}}_{=: \, \text{III}}.
    \end{equation*}
    We now show that
    \begin{equation*}
      \text{III} \leq \sqrt{\mathcal{E}_{\delta,T}(s^{(m)}) +2\,\DKL{p_T}{\gaussian}}.
    \end{equation*}
    Define the reverse-time SDE drifts
    \begin{equation*}
        b_t(x) = -\frac{1}{2}x - s_t(x),
        \qquad
        b_t^{(m)}(x) = -\frac{1}{2}x - s_t^{(m)}(x).
    \end{equation*}
    Then, $P$ is the path law on $C([\delta,T];\R^n)$ of
    \begin{equation*}
      \mathrm{d}X_t = b_t(X_t)\,\mathrm{d}t + \mathrm{d}W_t,
    \end{equation*}
    with terminal law $X_T\sim p_T$. Furthermore, $Q^{(m)}$ is the path law on $C([\delta,T];\R^n)$ of
    \begin{equation}\label{eq:b_t_sde}
      \mathrm{d}X_t^{(m)} = b_t^{(m)}(X_t^{(m)})\,\mathrm{d}t + \mathrm{d}W_t,
    \end{equation}
    with terminal law $X_T^{(m)}\sim \gaussian$ and $\overline Q^{(m)}$ the path law of~\eqref{eq:b_t_sde} but with terminal law $\overline X_T^{(m)}\sim p_T$. Since
    \begin{equation*}
        b_t(X_t) - b^{(m)}_t(X_t)
        = \Bigl(-\frac{1}{2}X_t - s_t(X_t)\Bigr)
          - \Bigl(-\frac{1}{2}X_t - s^{(m)}_t(X_t)\Bigr)
        = s^{(m)}_t(X_t) - s_t(X_t),
    \end{equation*} 
    Assumption~\emph{(ii)} guarantees the Novikov condition and absolute continuity needed for Girsanov, so by Lemma~\ref{lemma:girsanov_KL},
    \begin{align*}
        \DKL{P}{\overline Q^{(m)}} &= \frac{1}{2} \E_P\left( \int_\delta^T
           \|s_t(X_t) - s^{(m)}_t(X_t) \|^2 \, \mathrm{d}t\right)
    \end{align*}
    Therefore, we obtain via Tonelli
    \begin{equation*}
        \DKL{P}{\overline Q^{(m)}}= \frac{1}{2} \int_\delta^T \int_{\R^n} p_t(x)
             \|s_t(x)-s^{(m)}_t(x)\|^2 \, \mathrm{d}x \,  \mathrm{d}t = \frac{1}{2}\mathcal{E}_{\delta, T}(s^{(m)}).
    \end{equation*}
    Moreover, since $\overline Q^{(m)}$ and $Q^{(m)}$ have the same conditional path law given $X_T$, the chain rule for the KL-divergence implies
    \begin{equation*}
        \DKL{P}{Q^{(m)}} = \DKL{P}{\overline Q^{(m)}} + \DKL{p_T}{\gaussian}.
    \end{equation*}
    Let $\Phi: C([\delta,T];\R^n) \rightarrow \R^n$ be given by $\Phi(\omega) =\omega(\delta)$. Then $\Phi_\#P = p_\delta$ and $\Phi_\# Q^{(m)} = q_\delta^{(m)}$. By the data processing inequality for the KL-divergence, the KL-divergence decreases under measurable maps, i.e.,
    \begin{equation*}
        \DKL{p_\delta}{q_\delta^{(m)}} = \DKL{\Phi_\# P}{\Phi_\# Q^{(m)}} \leq \DKL{P}{Q^{(m)}}.
    \end{equation*}
    Combining the previous estimates yields
    \begin{equation*}
        \DKL{p_\delta}{q_\delta^{(m)}}
        \leq \frac{1}{2}\mathcal{E}_{\delta,T}(s^{(m)}) + \DKL{p_T}{\gaussian}.
    \end{equation*}
    Together with Pinsker's inequality it holds
    \begin{equation*}
        \text{III} = \| p_\delta - q_\delta^{(m)}\|_{L^1} \leq \sqrt{2\DKL{p_\delta}{q_\delta^{(m)}} } \leq \sqrt{\mathcal{E}_{\delta,T}(s^{(m)}) +2\,\DKL{p_T}{\gaussian}}.
    \end{equation*}
    By Assumption~\emph{(iii)}, $\mathcal{E}_{\delta,T}(s^{(m)}) \to 0$ as $m\to\infty$ for this fixed interval $[\delta,T]$. Therefore, we can select $m_0 \in \N$ such that $\mathcal{E}_{\delta,T}(s^{(m_0)}) \leq \varepsilon^2/32$. With our choices for $\delta$ and $T$,
    \begin{equation*}
      \text{III}
      \leq \sqrt{\mathcal{E}_{\delta,T}(s^{(m_0)}) + 2\,\DKL{p_T}{\gaussian}}
      < \sqrt{\frac{\varepsilon^2}{32}+\frac{\varepsilon^2}{32}}
      = \frac{\varepsilon}{4}.
    \end{equation*}
     Our choices above imply
    \begin{equation*}
      \text{II} \leq \|\gaussian - p_T\|_{L^1} + \text{III} < \frac{\varepsilon}{4} + \frac{\varepsilon}{4}  = \frac{\varepsilon}{2}.
    \end{equation*}
    Altogether,
    \begin{equation*}
        \| \ptarget - ((\varphi^{(m_0)}_{\delta \to T})^{-1})_\# \gaussian \|_{L^1}
        \leq \text{I} + \text{II} < \frac{\varepsilon}{2} + \frac{\varepsilon}{2} = \varepsilon.
    \end{equation*}
    Finally, the bi-Lipschitz property of $\varphi^{(m)}_{\delta\to T}$ for each $m$ follows from Assumption~\emph{(i)} by the same argument as in the proof of Theorem~\ref{thm:score}.
\end{proof}

\section{Auxiliary Results from Section~\ref{sec:expressivity}}
\label{sec:appendix}

This section collects auxiliary results for Appendix~\ref{sec:appendix}, specifically required for the proofs of Theorem~\ref{thm:score} and Corollary~\ref{cor:score}.

The representation of $p_t$ as a Mehler kernel convolution and the smoothing properties of the Ornstein–Uhlenbeck semigroup are classical, see e.g.~\cite{BakryGentilLedoux2014}. 
For completeness, we provide a direct proof of joint continuity and differentiation under the integral at the level of densities.

\begin{lemma}\label{lem:regularity_pt}
Let $\ptarget:\R^n\to[0,\infty)$ be a pdf with $\E_{\ptarget}[\|X\|^2] < \infty$. 
Consider the VP diffusion process with $\beta\equiv1$ and initial law $X_0\sim\ptarget$.
Then, for $t > 0$ we can write
\begin{equation*}
    p_t(x) = \int_{\R^n} \ptarget(y) \phi_{\sigma^2(t),0}(x-a(t)y)\,\d y,
  \qquad 
  a(t)=e^{-t/2},\quad \sigma^2(t)=1-e^{-t}.
\end{equation*}
Fix $0<\delta<T<\infty$. Then:
\begin{itemize}
  \item[(i)] The map $(t,x)\mapsto p_t(x)$ is continuous on $[\delta,T]\times\R^n$.
  \item[(ii)] For every $t>0$ and $x\in\R^n$, $\nabla_x p_t(x)$ exists and is
        given by
        \begin{equation*}
            \nabla_x p_t(x)
          = \int_{\R^n} \ptarget(y)\,\nabla_x\phi_{\sigma^2(t),0}(x-a(t)y)\,\d y,
        \end{equation*}
        with
        \begin{equation*}
            \nabla_x\phi_{\sigma^2(t),0}(x-a(t)y)
          = -\frac{x-a(t)y}{\sigma^2(t)}\,\phi_{\sigma^2(t),0}(x-a(t)y).
        \end{equation*}
  \item[(iii)] The map $(t,x)\mapsto \nabla_x p_t(x)$ is continuous on $[\delta,T]\times\R^n$.
  \item[(iv)] For every $t>0$ and $x\in\R^n$, $\nabla_x^2 p_t(x)$ exists and is
        given by
        \begin{equation*}
            \nabla_x^2 p_t(x)
          = \int_{\R^n} \ptarget(y)\,\nabla_x^2\phi_{\sigma^2(t),0}(x-a(t)y)\,\d y,
        \end{equation*}
        where
        \begin{equation*}
            \nabla_x^2\phi_{\sigma^2(t),0}(x-a(t)y)
          = \left(\frac{(x-a(t)y)(x-a(t)y)^\top}{\sigma^4(t)}-\frac{1}{\sigma^2(t)}I_n\right)
            \phi_{\sigma^2(t),0}(x-a(t)y).
        \end{equation*}
        Moreover, $(t,x)\mapsto \nabla_x^2 p_t(x)$ is continuous on $[\delta,T]\times\R^n$.
  \item[(v)] The score $s_t(x)=\nabla_x\log p_t(x)$ is well-defined and continuous on $[\delta,T]\times\R^n$, and
        $(t,x)\mapsto \nabla_x s_t(x)$ is continuous on $[\delta,T]\times\R^n$.
        In particular,
        \begin{equation*}
            v_t(x)= -\tfrac12 x - \tfrac12 s_t(x)
        \end{equation*}
        is $C^1$ in $x$ for each $t\in[\delta,T]$, and $(t,x)\mapsto \nabla_x v_t(x)$ is continuous on $[\delta,T]\times\R^n$.
\end{itemize}

If additionally, $\ptarget \in C^2_b(\R^n)$ and $\ptarget > 0$, all statements on $[\delta,T] \times \R^n$ extend to $[0,T] \times \R^n$.
\end{lemma}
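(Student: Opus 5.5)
The proof rests on dominated convergence and on the standard criterion for differentiating under the integral sign. Fix $0<\delta<T<\infty$. On $[\delta,T]$ the coefficients are uniformly controlled: $a(t)\in[e^{-T/2},1]$ and $\sigma^2(t)\in[\sigma^2(\delta),1)$. Hence the kernel $k_t(x,y):=\phi_{\sigma^2(t),0}(x-a(t)y)$ and its $x$-derivatives of order up to two are jointly continuous in $(t,x)$ for each fixed $y$. Since $\ptarget$ is only assumed integrable, I take the dominating function to be $\ptarget(y)$ times a constant. For that I need bounds on these derivatives that are uniform in $y$.

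\textbf{Uniform bounds.} I use $|\phi_{\sigma^2,0}(z)|\le(2\pi\sigma^2(\delta))^{-n/2}$. I also use that $\|z\|^k\phi_{\sigma^2,0}(z)$ is bounded by $C_k\sigma^{k-n}$ uniformly in $z$, because $\sup_r r^k e^{-r^2/(2\sigma^2)}=(k\sigma^2/e)^{k/2}$. Together these give
\begin{equation*}
|\partial_x^\alpha k_t(x,y)|\le C(\delta,n) \quad\text{for all } |\alpha|\le 2,\ t\in[\delta,T],\ x,y\in\R^n .
\end{equation*}
These bounds rely on the explicit expressions for $\nabla\phi$ and $\nabla^2\phi$ stated in (ii) and (iv).

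\textbf{Items (i)--(iv).} (i) follows from dominated convergence, with dominating function $C\ptarget(y)\in L^1$. (ii) and (iv) follow from the mean value theorem plus dominated convergence, i.e. the usual differentiation lemma, since the derivatives are bounded uniformly in $x$ by $C\ptarget(y)$. Continuity in (iii) and (iv) is again dominated convergence. Note that the second-moment assumption is not even needed here. It only ensures that the setting agrees with Theorem~\ref{thm:score}.

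\textbf{Item (v).} I will show that $p_t(x)>0$ everywhere, since $k_t>0$ and $\ptarget$ has mass one. Then $s_t=\nabla p_t/p_t$ and $\nabla s_t=\nabla^2p_t/p_t-\nabla p_t\nabla p_t^\top/p_t^2$ are quotients of functions that are continuous on $[\delta,T]\times\R^n$ by a positive continuous function. So they are continuous, and the statements about $v_t$ follow immediately.

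\textbf{Extension to $[0,T]$ when $\ptarget\in C_b^2$ and $\ptarget>0$.} This is the main obstacle, because $\sigma(t)\to0$ and the bounds above blow up. I would substitute $z=(x-a(t)y)/\sigma(t)$, which gives
\begin{equation*}
p_t(x)=a(t)^{-n}\int\ptarget\Bigl(\frac{x-\sigma(t)z}{a(t)}\Bigr)\gaussian(z)\,\d z .
\end{equation*}
In this form the derivatives fall on $\ptarget$. Indeed, $\nabla^k p_t(x)=a(t)^{-n-k}\int(\nabla^k\ptarget)((x-\sigma z)/a)\gaussian(z)\,\d z$ for $k\le2$. These integrands are continuous in $(t,x)$ and dominated by $\|\nabla^k\ptarget\|_\infty\gaussian(z)$, so dominated convergence yields joint continuity on $[0,T]\times\R^n$, including $t=0$, where the formula reduces to $\nabla^k\ptarget$. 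For $t>0$ this representation agrees with the one in (ii) and (iv). Positivity at $t=0$ is the assumption $\ptarget>0$, and for $t>0$ it was shown above. The argument of (v) then transfers verbatim.
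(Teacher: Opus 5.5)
Your proof is correct. On $[\delta,T]$ it follows the paper's scheme: dominated convergence, differentiation under the integral, and the quotient formulas for $s_t$ and $\nabla s_t$. The only difference there is a simpler majorant. You bound the kernel derivatives by a constant uniform in $(x,y)$ and dominate by $C\ptarget(y)$. The paper instead restricts $x$ to a compact $K$ and uses $C_{\delta,T,K}(1+\|y\|^2)\phi_{\tilde\sigma^2,0}(y)$, invoking the second moment. Your version makes clear that the moment hypothesis is not needed for this part.

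The routes genuinely differ for the extension to $t=0$.
\begin{itemize}
\item The paper writes $p_t=(D_{a(t)}\ptarget)\ast\phi_{\sigma^2(t),0}$ and proves $D_{a(t)}\ptarget\to\ptarget$ locally uniformly. It then splits $p_t-\ptarget$ into a dilation error (near/far split with a Gaussian tail estimate) and an approximate-identity error, repeats this for $\nabla\ptarget$ and $\nabla^2\ptarget$, and finally uses $\inf_K p_t\ge c_K/2$ for small $t$.
\item Your substitution $z=(x-a(t)y)/\sigma(t)$ gives the same representation, rescaled so that the weight $\gaussian(z)$ does not depend on $t$. A single dominated-convergence argument with majorant $e^{T(n+k)/2}\|\nabla^k\ptarget\|_\infty\gaussian(z)$ then yields both differentiation under the integral and joint continuity on all of $[0,T]\times\R^n$. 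Joint continuity plus positivity of $p_t$ replaces the compact lower bound.
\end{itemize}

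Your route is shorter and uses exactly the hypothesis $\ptarget\in C_b^2(\R^n)$. In particular, it avoids the paper's tail bound, which rests on $\|D_{a(t)}\ptarget\|_{L^1}=1$. That bound has to be adapted, using the sup-norm instead, when it is reapplied to $\nabla\ptarget$ and $\nabla^2\ptarget$, since these need not be integrable. The paper's decomposition, for its part, mirrors the dilation-plus-mollification structure of its $L^1$ continuity argument in Lemma~\ref{lemma:L1_conv_pt}.
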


\begin{proof}
For $t\in[\delta,T]$ we have $1 \geq \sigma^2(t) \geq\sigma^2(\delta)>0$, hence the Gaussian kernel
${(t,x,y)\mapsto \phi_{\sigma^2(t),0}(x-a(t)y)}$ and its $x$-derivatives up to order $2$ are
continuous in $(t,x)$ and satisfy, for each fixed compact $K\subset\R^n$,
uniform bounds of the form
\begin{equation*}
    \big|\partial_x^\alpha \phi_{\sigma^2(t),0}(x-a(t)y)\big| \leq C_{\delta,T,K}\,(1+\|y\|^2)\,\phi_{\tilde\sigma^2,0}(y),  \qquad |\alpha|\leq 2
\end{equation*}
for some $\tilde\sigma^2>0$ and $(t,x) \in [\delta,T] \times K$. Since $\int_{\R^n} \ptarget(y)(1+\|y\|^2)\,\d y<\infty$, dominated convergence yields \emph{(i)} and justifies differentiation under the integral to obtain the formulas in \emph{(ii)} and \emph{(iv)}, as well as the joint continuity in \emph{(iii)} and \emph{(iv)}. (These are standard smoothing properties of the OU/Mehler kernel, see e.g. \cite{BakryGentilLedoux2014} for a semigroup-based presentation.)

For (v), note that $p_t>0$ for $t>0$, hence 
\begin{equation*}
    s_t(x) = \nabla_x\log p_t(x) = \frac{\nabla_x p_t(x)}{p_t(x)}
\end{equation*} 
is well-defined. Moreover,
\begin{equation*}
    \nabla_x s_t(x)=\nabla_x^2\log p_t(x)
    =\frac{\nabla_x^2 p_t(x)}{p_t(x)}-\frac{\nabla_x p_t(x)\,\nabla_x p_t(x)^\top}{p_t(x)^2},
\end{equation*}
so the continuity of $p_t,\nabla_x p_t,\nabla_x^2 p_t$ on $[\delta,T]\times\R^n$ implies the stated
continuity of $s_t$ and $\nabla_x s_t$, and thus the claims for $v_t$.

\emph{Extension to $[0,T]\times\R^n$ for $\ptarget\in C^2_b(\R^n)$ and $\ptarget>0$:} We can write
\begin{equation*}
    p_t = (D_{a(t)}\ptarget)\ast\phi_{\sigma^2(t),0},\qquad (D_a p)(x)=a^{-n}p(x/a),
\end{equation*}
with $a(t)\to 1$ and $\sigma(t)\to 0$ as $t\downarrow 0$.

\emph{(i): Continuity of $(t,x)\mapsto p_t(x)$ on $[0,T]\times\R^n$.}
Fix a compact $K\subset\R^n$. Choose $t_0>0$ such that $a(t)\in[1/2,1]$ for all $t\in(0,t_0]$, and set
\begin{equation*}
    K':=\{x/a(t):\,x\in K,\ t\in(0,t_0]\},
\end{equation*}
which is compact. Hence, for $x\in K$,
\begin{equation*}
    |D_{a(t)}\ptarget(x)-\ptarget(x)|
%=|a(t)^{-n}\ptarget(x/a(t))-\ptarget(x)|
\leq |a(t)^{-n}-1|\,|\ptarget(x/a(t))|+|\ptarget(x/a(t))-\ptarget(x)|.
\end{equation*}
Since $\ptarget$ is continuous, it is bounded and uniformly continuous on $K'$. Taking the supremum over $x\in K$ yields
\begin{equation*}
    \sup_{x\in K}|D_{a(t)}\ptarget(x)-\ptarget(x)|
\leq |a(t)^{-n}-1|\sup_{u\in K'}|\ptarget(u)|
+\sup_{x\in K}|\ptarget(x/a(t))-\ptarget(x)| \xrightarrow{t \downarrow 0} 0.
\end{equation*}
The first term tends to $0$ as $t\downarrow 0$ since $a(t)\to 1$ and $\ptarget$ is bounded on $K'$.
The second term tends to 0 as $x/a(t)\to x$ uniformly for $x\in K$. Consequently, $D_{a(t)}\ptarget\to\ptarget$ locally uniformly on $\R^n$.

Now write
\begin{equation*}
    p_t-\ptarget
=\bigl(D_{a(t)}\ptarget-\ptarget\bigr)\ast\phi_{\sigma^2(t),0}
+\bigl(\ptarget*\phi_{\sigma^2(t),0}-\ptarget\bigr).
\end{equation*}
For the first term, fix $r>0$ and define $K_r:=\{x-z:\,x\in K,\ \|z\|\leq r\}$. Then for $x\in K$,
\begin{align*}
\left| \bigl((D_{a(t)}\ptarget-\ptarget)*\phi_{\sigma^2(t),0}\bigr)(x) \right| &\leq \int_{\| z\| \leq r}|D_{a(t)}\ptarget(x-z)-\ptarget(x-z)| \phi_{\sigma^2(t),0}(z) \d z \\
&\quad+ \int_{\| z\| > r}|D_{a(t)}\ptarget(x-z)-\ptarget(x-z)| \phi_{\sigma^2(t),0}(z) \d z \\
&\leq \sup_{u\in K_r}|D_{a(t)}\ptarget(u)-\ptarget(u)|
   + 2\sup_{\|z\|>r}\phi_{\sigma^2(t),0}(z),
\end{align*}
where we used $\|D_{a(t)}\ptarget\|_{L^1}=\|\ptarget\|_{L^1}=1$ for the tail bound.
Since $D_{a(t)}\ptarget\to\ptarget$ locally uniformly, the first term tends to $0$ as $t\downarrow 0$. Moreover, for any fixed $r>0$,
$\sup_{\|z\|>r}\phi_{\sigma^2(t),0}(z)=(2\pi \sigma^2(t))^{-n/2}\exp(-r^2/(2\sigma^2(t)))\to 0$ as $\sigma(t)\to 0$.
Hence
\begin{equation*}
    \sup_{x\in K}\left| \bigl((D_{a(t)}\ptarget-\ptarget)\ast\phi_{\sigma^2(t),0}\bigr)(x) \right|
\xrightarrow{t \downarrow0} 0.
\end{equation*}

For the second term, since $\ptarget\in C \cap L^\infty$, standard Gaussian approximate identity properties give
\begin{equation*}
    \sup_{x\in K}|\ptarget\ast\phi_{\sigma^2(t),0}(x)-\ptarget(x)|\xrightarrow{t \downarrow0} 0.
\end{equation*}
Combining both terms yields $\sup_{x\in K}|p_t(x)-\ptarget(x)|\to 0$, hence continuity at $t=0$.
Continuity on $(0,T]\times\R^n$ is already proved above.

\emph{(ii)--(iv): Continuity of $\nabla_x p_t$ and $\nabla_x^2 p_t$ on $[0,T]\times\R^n$.}
Since $\ptarget\in C^2$, we have $p_0=\ptarget$, $\nabla p_0=\nabla\ptarget$, and $\nabla^2 p_0=\nabla^2\ptarget$.
Moreover, for $k\in\{1,2\}$,
\begin{equation*}
    \nabla^k p_t = (\nabla^k D_{a(t)}\ptarget)\ast\phi_{\sigma^2(t),0},
\qquad
\nabla^k(D_a\ptarget)(x)=a^{-n-k}(\nabla^k\ptarget)(x/a).
\end{equation*}
Applying the argument from \emph{(i)} to the continuous functions $\nabla\ptarget, \, \nabla^2\ptarget \in C \cap L^\infty$, gives local uniform convergence $\nabla p_t\to\nabla\ptarget$ and $\nabla^2 p_t\to\nabla^2\ptarget$ as $t\downarrow 0$, hence continuity on $[0,T]\times\R^n$.

\emph{(v): Continuity of $s_t,\nabla s_t$ and $v_t,\nabla v_t$ on $[0,T]\times\R^n$.}
Fix a compact $K\subset\R^n$. Since $\ptarget>0$ and continuous, $c_K:=\inf_{x\in K}\ptarget(x)>0$.
By \emph{(i)}, $p_t\to\ptarget$ locally uniformly, hence $\inf_{x\in K}p_t(x)\ge c_K/2$ for all $t$
sufficiently small. Therefore, on $K$,
\begin{equation*}
    s_t=\nabla\log p_t=\frac{\nabla p_t}{p_t} \quad\text{and}\quad \nabla s_t=\nabla^2\log p_t
=\frac{\nabla^2 p_t}{p_t}-\frac{\nabla p_t\,\nabla p_t^\top}{p_t^2}
\end{equation*}
are continuous at $t=0$ by \emph{(i)--(iv)} and the uniform lower bound on $p_t$.
Finally, $v_t(x)=-\tfrac12 x-\tfrac12 s_t(x)$ inherits continuity of $s_t$ and $\nabla s_t$.
\end{proof}

\begin{lemma}\label{lemma:L1_conv_pt}
Let $p\in L^1$ and define the dilation operator
\begin{equation*}
(D_a p)(x):=a^{-n}p(x/a),\qquad a>0.
\end{equation*}
If $a(t)\to 1$ and $\sigma(t)\to 0$ as $t\downarrow 0$, then
\begin{equation*}
\bigl\|D_{a(t)}p \ast \phi_{\sigma(t)^2,0}-p\bigr\|_{L^1}
\xrightarrow[t\downarrow0]{}0.
\end{equation*}
\end{lemma}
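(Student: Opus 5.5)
The plan is to split with the triangle inequality into a dilation part and a mollification part:
\begin{equation*}
\bigl\|D_{a(t)}p\ast\phi_{\sigma(t)^2,0}-p\bigr\|_{L^1}
\le \bigl\|(D_{a(t)}p-p)\ast\phi_{\sigma(t)^2,0}\bigr\|_{L^1}+\bigl\|p\ast\phi_{\sigma(t)^2,0}-p\bigr\|_{L^1}.
\end{equation*}
Young's inequality together with $\|\phi_{\sigma^2,0}\|_{L^1}=1$ bounds the first term by $\|D_{a(t)}p-p\|_{L^1}$. It therefore suffices to prove two classical facts. The first is continuity of dilations in $L^1$: $\|D_ap-p\|_{L^1}\to0$ as $a\to1$. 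The second is the approximate-identity property: $\|p\ast\phi_{\sigma^2,0}-p\|_{L^1}\to0$ as $\sigma\to0$.

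Both facts follow from a density argument with $C_c(\R^n)$, which is dense in $L^1$. Fix $\eta>0$ and choose $g\in C_c$ with $\|p-g\|_{L^1}<\eta$. Since $D_a$ is an $L^1$-isometry (substitute $x=au$) and convolution with a probability density is an $L^1$-contraction, we get
\begin{equation*}
\|D_ap-p\|_{L^1}\le 2\eta+\|D_ag-g\|_{L^1},\qquad \|p\ast\phi_{\sigma^2,0}-p\|_{L^1}\le 2\eta+\|g\ast\phi_{\sigma^2,0}-g\|_{L^1}.
\end{equation*}
For $g$ with $\supp g\subset B(0,r)$ and $a\in[1/2,2]$, the functions $D_ag$ are supported in $B(0,2r)$. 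Moreover, $D_ag\to g$ uniformly, because $|a^{-n}-1|\,\|g\|_\infty\to0$ and $g$ is uniformly continuous. Uniform convergence on a fixed compact set gives $L^1$ convergence.

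For the mollification term, I will use the translation representation
\begin{equation*}
\|g\ast\phi_{\sigma^2,0}-g\|_{L^1}\le\int\|g(\cdot-z)-g\|_{L^1}\,\phi_{\sigma^2,0}(z)\,\d z.
\end{equation*}
Continuity of translations for $g\in C_c$ shows that the integrand is small for $\|z\|\le\rho$, and it is bounded by $2\|g\|_{L^1}$ everywhere. The Gaussian mass outside $B(0,\rho)$ vanishes as $\sigma\to0$. Letting $t\downarrow0$ and then $\eta\downarrow0$ concludes the proof.

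No step is genuinely hard. The only point needing care is the dilation continuity, since $D_a$ is not a translation. The density reduction to $C_c$, combined with the isometry property of $D_a$, handles it cleanly. I would present the argument in that order, noting that the same estimate, with $a\equiv1$, is what the proof of Lemma~\ref{lemma:G_density} uses.
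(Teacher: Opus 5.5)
Your proposal is correct and follows the paper's proof essentially step by step: the same triangle-inequality split, Young's inequality for the dilation term, the $L^1$-isometry of $D_a$ combined with density of compactly supported continuous functions for dilation continuity, and the translation representation with a Gaussian tail bound for the mollification term. The differences are cosmetic. You also reduce the mollification term to $g\in C_c$, whereas the paper invokes $L^1$-continuity of translations for $p$ directly. You use uniform convergence, where the paper uses dominated convergence, to get $\|D_ag-g\|_{L^1}\to 0$.
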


\begin{proof}
We use the decomposition
\begin{equation*}
\|D_{a(t)}p \ast \phi_{\sigma(t)^2,0}-p\|_{L^1}\leq \|(D_{a(t)}p-p)\ast\phi_{\sigma(t)^2,0}\|_{L^1}+\|p\ast\phi_{\sigma(t)^2,0}-p\|_{L^1}.
\end{equation*}
By Young's convolution inequality and $\|\phi_{\sigma^2,0}\|_{L^1}=1$,
\begin{equation*}
\|(D_{a(t)}p-p)\ast\phi_{\sigma(t)^2,0}\|_{L^1} \leq \|D_{a(t)}p-p\|_{L^1}.
\end{equation*}
Hence it suffices to show
\begin{equation*}
\text{I:} \ \|D_{a}p-p\|_{L^1}\xrightarrow[a\to1]{}0
\qquad \text{and} \qquad
\text{II:} \ \|p\ast\phi_{\sigma^2,0}-p\|_{L^1}\xrightarrow[\sigma\to0]{}0.
\end{equation*}

\emph{I:}
The operator $D_a:L^1(\R^n)\to L^1(\R^n)$ is an isometry, i.e. $\|D_a p\|_{L^1}=\|p\|_{L^1}.$
Fix $\varepsilon>0$ and since $C_c^\infty(\R^n)$ is dense in $L^1(\R^n)$, choose $g\in C_c^\infty(\R^n)$ with
\begin{equation*}
\|p-g\|_{L^1}<\varepsilon.
\end{equation*}
Then
\begin{equation*}
\|D_ap-p\|_{L^1}\leq \|D_a(p-g)\|_{L^1}+\|D_ag-g\|_{L^1}+\|g-p\|_{L^1}\leq 2\varepsilon+\|D_ag-g\|_{L^1}.
\end{equation*}
Since $g$ is continuous with compact support, we have
\begin{equation*}
D_ag(x)=a^{-n}g(x/a)\rightarrow g(x)
\end{equation*}
pointwise as $a\to1$. For $a$ close to $1$, the functions $D_ag$ are supported in a fixed compact set and uniformly bounded, so dominated convergence yields
\begin{equation*}
\|D_ag-g\|_{L^1}\xrightarrow[a\to1]{}0.
\end{equation*}
Since $\varepsilon>0$ was arbitrary, this proves the first limit.

\emph{II:}
Let $\tau_y p(x):=p(x-y)$. Then,
\begin{equation*}
p\ast\phi_{\sigma^2,0}(x)-p(x) = \int_{\R^n}\phi_{\sigma^2,0}(y)\bigl(\tau_y p(x)-p(x)\bigr)\,\d y,
\end{equation*}
and hence
\begin{equation*}
\|p\ast\phi_{\sigma^2,0}-p\|_{L^1} \leq \int_{\R^n}\phi_{\sigma^2,0}(y)\,\|\tau_y p-p\|_{L^1}\,\d y.
\end{equation*}
Fix $\eta>0$. Since translations are continuous on $L^1$, there exists $\bar\eta>0$ such that
\begin{equation*}
\|\tau_y p-p\|_{L^1}<\eta \quad\text{whenever }\|y\|<\bar\eta.
\end{equation*}
Moreover,
\begin{equation*}
\|\tau_y p-p\|_{L^1}\leq 2\|p\|_{L^1} \quad\text{for all } y.
\end{equation*}
Therefore,
\begin{equation*}
\|p\ast\phi_{\sigma^2,0}-p\|_{L^1}\leq
\eta \int_{\|y\|<\bar\eta}\phi_{\sigma^2,0}(y)\,\d y + 2\|p\|_{L^1}\int_{\|y\|\geq\bar\eta}\phi_{\sigma^2,0}(y)\,\d y.
\end{equation*}
As $\sigma\to0$, the Gaussian measure concentrates at $0$, so
\begin{equation*}
\int_{\|y\|\geq\bar\eta}\phi_{\sigma^2,0}(y)\,\d y \xrightarrow[\sigma\to0]{}0.
\end{equation*}
Hence
\begin{equation*}
\limsup_{\sigma\to0}\|p\ast\phi_{\sigma^2,0}-p\|_{L^1}\leq \eta.
\end{equation*}
Since $\eta>0$ was arbitrary, the second limit follows.

Combining the above estimates and using $a(t)\to1$ and $\sigma(t)\to0$ yields
\begin{equation*}
\bigl\|D_{a(t)}p\ast\phi_{\sigma(t)^2,0}-p\bigr\|_{L^1} \xrightarrow[t\downarrow0]{}0.
\end{equation*}
\end{proof}

\section{Auxiliary Results for Section~\ref{sec:connection_learned}}

This section contains an auxiliary Girsanov type result used in Appendix~\ref{sec:appendix_learned}, in the proof of Theorem~\ref{thm:learned_score}. The main proof ingredients for this result can be taken from existing literature, as e.g.~\cite{Oksendal2003}.

\begin{lemma}\label{lemma:girsanov_KL}
Fix $0 < \delta < T < \infty$ and consider the SDE
\begin{equation*}
    \d X_t = b^{(m)}_t(X_t)\,\d t + \d W_t, \qquad t\in[\delta,T],
\end{equation*}
where $b^{(m)}_t:\R^n\to\R^n$ is a measurable drift and $W$ is an $n$-dimensional Brownian motion. Let $b_t:\R^n\to\R^n$ be another measurable drift and define
\begin{equation*}
    u^{(m)}_t := b^{(m)}_t(X_t) - b_t(X_t), \qquad t\in[\delta,T].
\end{equation*}
Assume Novikov's condition
\begin{equation*}
    \E_{\overline Q^{(m)}}\left[\exp\Big(\frac12\int_\delta^T \|u^{(m)}_s\|^2\,\d s\Big)\right] < \infty,
\end{equation*}
is satisfied, where $\overline Q^{(m)}$ denotes the law of $(X_t)_{\delta\leq t\leq T}$.
Define the path law $P$ via
\begin{equation*}
    \frac{\d P}{\d \overline Q^{(m)}} := \exp\left(
        - \int_\delta^T \langle u^{(m)}_s,\d W_s\rangle
        - \frac12 \int_\delta^T \|u^{(m)}_s\|^2\,\d s \right).
\end{equation*}
Then the Kullback-Leibler divergence between the path laws satisfies
\begin{equation*}
    \DKL{P}{\overline Q^{(m)}}
    = \frac12 \E_{P}\left[
        \int_\delta^T \|b^{(m)}_t(X_t) - b_t(X_t)\|^2\,\d t
    \right].
\end{equation*}
\end{lemma}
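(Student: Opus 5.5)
The proof is the standard Girsanov computation: write the log-likelihood ratio along paths of $P$ and take its expectation under $P$. By Novikov's condition the density process
\begin{equation*}
    M_t := \exp\left(-\int_\delta^t \langle u^{(m)}_s,\d W_s\rangle - \frac12\int_\delta^t \|u^{(m)}_s\|^2\,\d s\right), \qquad t\in[\delta,T],
\end{equation*}
is a true $\overline Q^{(m)}$-martingale with $\E_{\overline Q^{(m)}}[M_T]=1$. Hence $P$ is a probability measure with $P\ll\overline Q^{(m)}$, and $\log\frac{\d P}{\d\overline Q^{(m)}} = \log M_T$. I would first apply Girsanov's theorem (e.g.\ as in \cite{Oksendal2003}): under $P$ the process
\begin{equation*}
    \widetilde W_t := W_t + \int_\delta^t u^{(m)}_s\,\d s
\end{equation*}
is a Brownian motion. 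Substituting into $\d X_t = b^{(m)}_t(X_t)\,\d t + \d W_t$ gives $\d X_t = b_t(X_t)\,\d t + \d\widetilde W_t$, so $P$ is indeed the path law of the SDE with drift $b_t$. This identification is what the application in Theorem~\ref{thm:learned_score} requires.

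Next, I would rewrite the log-density in terms of the $P$-Brownian motion:
\begin{equation*}
    \log M_T = -\int_\delta^T \langle u^{(m)}_s,\d\widetilde W_s\rangle + \int_\delta^T\|u^{(m)}_s\|^2\,\d s - \frac12\int_\delta^T\|u^{(m)}_s\|^2\,\d s = -\int_\delta^T \langle u^{(m)}_s,\d\widetilde W_s\rangle + \frac12\int_\delta^T\|u^{(m)}_s\|^2\,\d s.
\end{equation*}
Taking $\E_P$, and provided the stochastic integral has zero mean, this yields $\DKL{P}{\overline Q^{(m)}} = \E_P[\log M_T] = \frac12\E_P\int_\delta^T\|b^{(m)}_t(X_t)-b_t(X_t)\|^2\,\d t$, which is the claim.

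The main obstacle is justifying that the $P$-stochastic integral has zero expectation. This is immediate when $\E_P\int_\delta^T\|u^{(m)}_s\|^2\,\d s<\infty$, since the integral is then an $L^2$-martingale. For the general case I would localize with the stopping times
\begin{equation*}
    \tau_k := \inf\Big\{t : \int_\delta^t \|u^{(m)}_s\|^2\,\d s \geq k\Big\}\wedge T.
\end{equation*}
The stopped identity $\E_P[\log M_{\tau_k}] = \frac12\E_P\int_\delta^{\tau_k}\|u^{(m)}_s\|^2\,\d s$ then holds exactly. As $k\to\infty$, the right-hand side converges by monotone convergence. On the left, $\E_P[\log M_{\tau_k}]$ is the KL divergence of the restrictions to $\mathcal F_{\tau_k}$, which increases to the full KL divergence by lower semicontinuity and monotonicity of KL under filtration refinement. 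This shows the identity holds in $[0,\infty]$, including the case where both sides are infinite.
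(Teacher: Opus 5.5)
Your proof is correct and follows the paper's route: Novikov's condition gives Girsanov, you rewrite $\log\frac{\d P}{\d \overline Q^{(m)}}$ in terms of the $P$-Brownian motion $\widetilde W$, and you take $\E_P$. The only difference is your localization with $\tau_k$. It rigorously justifies the zero-mean step that the paper asserts by citing \cite[Theorem~3.2.1]{Oksendal2003}, which needs $\E_P\int_\delta^T\|u^{(m)}_s\|^2\,\d s<\infty$, an integrability condition not among the lemma's hypotheses. It also covers the case where both sides equal $+\infty$.
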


\begin{proof}
Due to Novikov's condition, Girsanov's theorem~\cite[Theorem~8.6.4]{Oksendal2003} implies that the process
\begin{equation*}
    W^{(m)}_t := W_t + \int_\delta^t u^{(m)}_s\,\d s
\end{equation*}
is an $n$-dimensional Brownian motion under $P$ and $X_t$ satisfies
\begin{equation*}
    \mathrm d X_t = b_t(X_t)\,\d t + \d W^{(m)}_t, \qquad t\in[\delta,T],
\end{equation*}
under $P$.
Moreover, using $W_t = W^{(m)}_t - \int_\delta^t u^{(m)}_s\,\d s$ and linearity of the It{\^o} integral (see~\citeauthor{Oksendal2003} \citeyear[Theorem~3.2.1]{Oksendal2003}), applied componentwise to each coordinate of the multidimensional Brownian motion, we obtain
\begin{align*}
    \log\frac{\d P}{\d Q^{(m)}}
    &= -\int_\delta^T \langle u^{(m)}_s, \d W_s\rangle
      - \frac12 \int_\delta^T \|u^{(m)}_s\|^2\,\d s \\
    &= -\int_\delta^T \langle u^{(m)}_s,\d W^{(m)}_s\rangle
      + \int_\delta^T \|u^{(m)}_s\|^2\,\mathrm d s - \frac12 \int_\delta^T \|u^{(m)}_s\|^2\,\d s \\
    &= -\int_\delta^T \langle u^{(m)}_s,\d W^{(m)}_s\rangle
      + \frac12 \int_\delta^T \|u^{(m)}_s\|^2\,\d s.
\end{align*}
Taking expectations under $P$ yields
\begin{align*}
    \DKL{P}{\overline Q^{(m)}}
    &= \E_{P}\left[
        \log\frac{\d P}{\d \overline Q^{(m)}}
    \right] \\
    &= - \E_{P}\left[
        \int_\delta^T \langle u^{(m)}_s,\d W^{(m)}_s\rangle
    \right]
       + \frac12 \E_{P}\left[
           \int_\delta^T \|u^{(m)}_s\|^2\,\d s
       \right].
\end{align*}
The stochastic integral $\int_\delta^t \langle u^{(m)}_s,\d W^{(m)}_s\rangle$ has zero expectation under $P$ for each $t$ (see~\citeauthor{Oksendal2003} \citeyear[Theorem~3.2.1]{Oksendal2003}), hence
\begin{equation*}
    \DKL{P}{\overline Q^{(m)}}
    = \frac12 \E_{P}\left[
        \int_\delta^T \|u^{(m)}_s\|^2\,\d s
    \right].
\end{equation*}
\end{proof}

\printbibliography

\end{document}